\newtheorem{theorem}{Theorem}
\newtheorem{lem}{Lemma}
\newtheorem{rem}{Remark}
\newtheorem{prop}{Proposition}
\newtheorem{cor}{Corollary}
\newtheorem{ass}{Assumption}
\newcounter{hypA}
\newcounter{hypB}
\newcounter{hypC}
\definecolor{reddarker}{rgb}{0.36, 0.0, 0.0}
\newcommand{\calX}{\mathcal{X}}
\newcommand{\bbE}{\mathbb{E}}
\DeclareMathOperator*{\argmax}{argmax}
\begin{document}

\title{Accelerating Look-ahead in \\ Bayesian Optimization: \\
Multilevel Monte Carlo is All you Need}




\author{Shangda Yang$^1$, Vitaly Zankin$^1$, Maximilian Balandat$^3$, 
Stefan Scherer$^3$, \\
Kevin Carlberg$^3$, Neil Walton$^{1,2}$, and Kody J.~H. Law$^{1,3}$}
\date{
$^1$Department of Mathematics, University of Manchester, Manchester, M13 9PL, UK\\
$^2$Durham University Business School, Millhill Lane, \\
Durham, DH1 3LB, UK\\
$^3$Meta Platforms, Inc. 1 Hacker Way, Menlo Park, CA, 94025, USA\\[2ex]
\today}

\vskip 0.3in
\maketitle

\begin{abstract}
We leverage multilevel Monte Carlo (MLMC) to improve the performance of multi-step look-ahead Bayesian optimization (BO) methods that involve nested expectations and maximizations. 
Often these expectations must be computed by Monte Carlo (MC).
    The complexity rate of naive MC degrades for nested operations, whereas
    MLMC is capable of achieving the canonical MC convergence rate
    for this type of problem,
    independently of dimension and without any smoothness assumptions.
    {Our theoretical study focuses on the approximation improvements for two- and three-step look-ahead acquisition functions, 
    but, as we discuss,} the approach is generalizable in various ways, including beyond the context of BO.     
    Our findings are 
    verified numerically and the benefits of MLMC for BO are illustrated on several
    benchmark examples.
    Code is available at 
    \href{https://github.com/Shangda-Yang/MLMCBO}
    {\small \texttt{https://github.com/Shangda-Yang/MLMCBO}}.
   \end{abstract}

\section{Introduction}
Bayesian optimization (BO)
is a global optimization method for expensive-to-evaluate black-box functions that generally have unknown structures. 
In this setting, only the function value is observed for a given input value. 
BO works by constructing a probabilistic surrogate model, often a Gaussian process (GP), 
for the black-box function and then iteratively updating the surrogate model, 
guided by the optimization of an 
acquisition function, until some stopping criterion is achieved.
{See \cite{frazier2018tutorial,shahriari2015taking} for a detailed review of BO.}

The choice of acquisition function is crucial in the design of BO algorithms. 
Typical myopic acquisition functions such as upper confidence bound \cite{auer2002finite,auer2002using,srinivas2009gaussian}, expected improvement (EI) \cite{movckus1975bayesian,jones1998efficient}, 
entropy search \cite{hennig2012entropy}, and predictive entropy search \cite{hernandez2014predictive} 
only consider the immediate reward of the decision. 
{They do not trade off the depth of decision-making with the computational budget. }
{By contrast, a multi-step acquisition function looks into the future and will optimize over the planning horizon by formulating the problem as a Markov decision process (MDP). 
This can substantially reduce the number of required function evaluations 
\cite{gonzalez2016glasses,wu2019practical,yue2020non,lee2020efficient,jiang2020binoculars,jiang2020efficient}.
However, the complexity of approximating such acquisition functions 
increases exponentially with the number of look-ahead steps
for current methods, which limits their use.}

{\bf The goal of this paper is to improve the computational complexity of approximating look-ahead acquisition functions.}

Multi-step look-ahead acquisition functions have traditionally been optimized by nesting optimization and 
Monte Carlo (MC) estimations, 
whereas we leverage a 
nested Sample Average Approximation (SAA) \cite{balandat2020botorch}, 
which facilitates the application of deterministic higher-order optimization methods. 
Standard MC (without nested operations) requires $O(\varepsilon^{-2})$ samples to achieve mean-squared-error (MSE) of $\varepsilon$.
For just one nested operation, the sample complexity of MC with SAA is at least $O(\varepsilon^{-3})$ and $O(\varepsilon^{-4})$ in the worst case, depending on the smoothness of the integrand. 
For $k$ nested operations, the cost for MC grows exponentially in $k$ to  $O(\varepsilon^{-2(k+1)})$, leading to a {\em curse of dimensionality}.

This paper is the first to apply the \textbf{MLMC} framework to Bayesian Optimization, thereby
{\em improving the performance of look-ahead methods}.
MLMC for expectations of approximations, such as nested MC, 
works by constructing a telescoping sum of estimators from low accuracy to high accuracy \cite{giles2015multilevel}. 
Computational complexity is reduced by performing most simulations with low-accuracy and low-cost,  
and only very few high-accuracy and high-cost simulations (see Figure \ref{fig:Cost}). 
Investigations of MLMC with nested MC appeared in 
\cite{giles2019multilevel} for efficient risk estimation inside a Heaviside function,  \cite{hironaka2020multilevel, giles2019decision} for the expected value of sample information with the feasible set being finite, \cite{goda2020multilevel} for nested MC inside a log function, and \cite{giles2018mlmc} for more general problems.
\cite{hu2021bias} use MLMC gradient methods for computing the nested MC problems, and 
\cite{beck2020multilevel} use MLMC 
for approximating the expected information gain in a Bayesian optimal experimental design.
MLMC is still a nascent technology within Machine Learning, with results only now beginning to appear 
in the context of Markov chain Monte Carlo \cite{lyne2015russian,strathmann2015unbiased,chada2022multilevel,cai2022multi} and stochastic gradient descent \cite{fujisawa2021multilevel, goda2022unbiased}.

\begin{figure}[ht!]
    \centering
         \centering
        \includegraphics[width=0.8\textwidth]{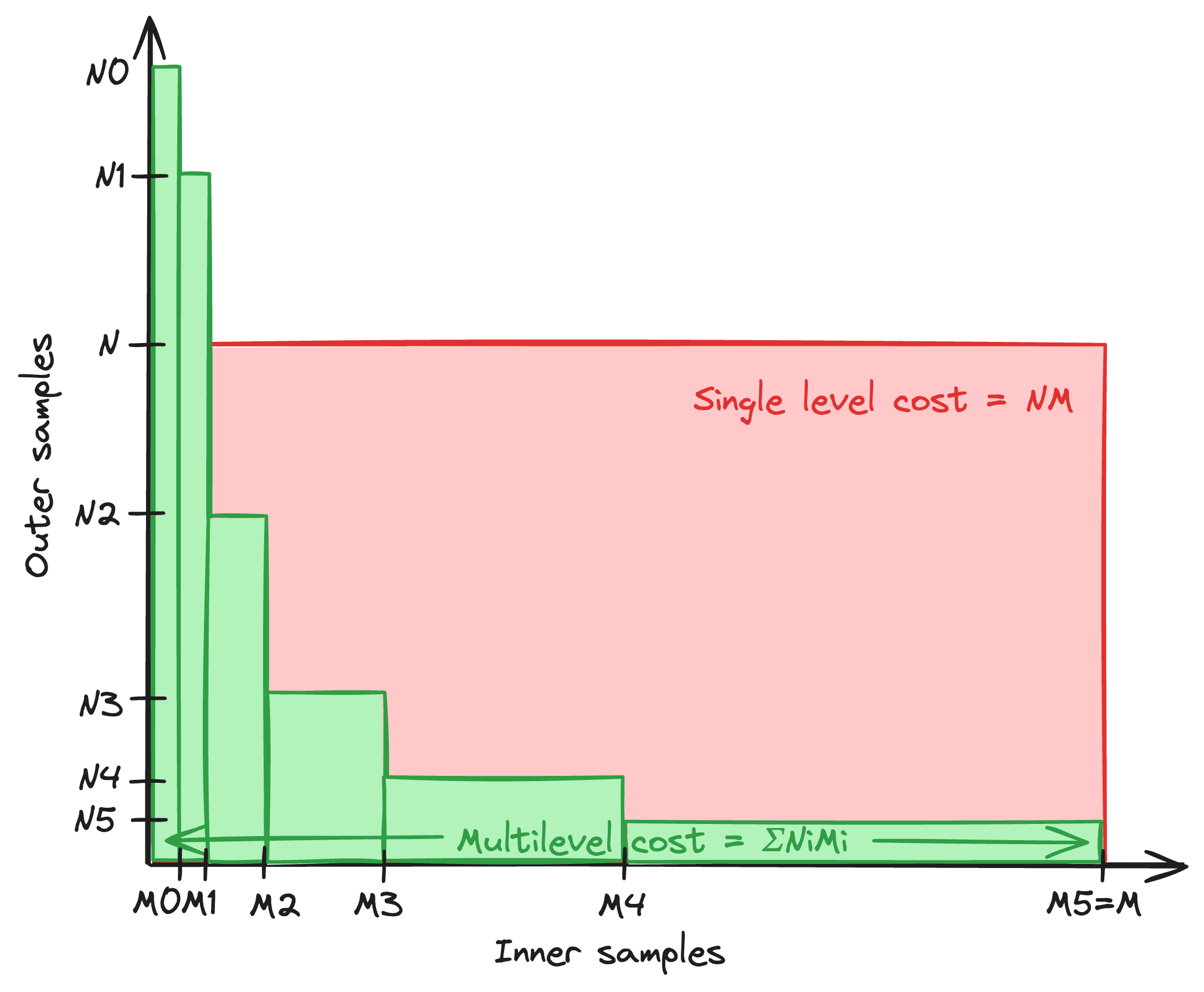}
        \vspace{-2ex}
        \captionsetup{width=1.18\linewidth}
         \caption{A graphical description of MLMC complexity (green area) improvement over nested MC (red area).}
         \label{fig:Cost}
     \end{figure}
    
{Variance reduction techniques are crucial for look-ahead acquisition functions in BO. 
Recent work using variance reduction techniques for myopic acquisition functions include}
\cite{bogunovic2016truncated, balandat2020botorch, lee2020efficient, nguyen2017predictive}. 
In general, for smooth and relatively low-dimensional integrals, quadrature methods rule
\cite{heath2018scientific,james1980monte}. 
However, these methods suffer from the 
{curse of dimensionality},
{i.e., the {\em rate} of convergence degrades exponentially with dimension.
Sparse grid approaches can mitigate this effect in moderate dimensions \cite{bungartz2004sparse}.
However, the only methods that deliver dimension-independent convergence 
rates for individual integrals are MC methods \cite{james1980monte}.
For smooth functions, quasi-MC (QMC) methods can improve the canonical MC rate; 
however, for non-smooth functions, the rate returns to the canonical MC rate \cite{caflisch1998monte}.
Importance sampling can reduce variance, 
but the variance {\em constant} often grows exponentially in the dimension \cite{chatterjee2018sample}.}
Furthermore, the critical dimension-dependence of the convergence rate
{\em reappears along the nesting axis}, i.e. here the look-ahead direction,
where one has expectation of function of expectation of...
To our knowledge, only MLMC methods, 
or the closely related {\em multifidelity} methods \cite{peherstorfer2018survey},
are capable of delivering the 
MC rate of convergence (independently of dimension) for nested MC approximation of non-smooth functions. 
Look-ahead acquisition functions in BO are indeed non-smooth, 
hence their approximation is a prime candidate for 
leveraging MLMC in Machine Learning.

{We apply 
MLMC to compute a multi-step look-ahead acquisition function. 
The feasible set is infinite, and the integrand may contain nested optimization functions due to the MDP formulation.}  
We prove that the error of the SAA with nested MC approximations can be decomposed into ``variance" and ``bias" terms, which allows us to apply MLMC for improved efficiency. 
{Through its ability to improve complexity,
our results suggest that MLMC is the state-of-the-art method for accelerating 
current Bayesian optimization frameworks.}

The narrative is summarized concisely as follows:
\begin{itemize}
\item[(i)] The primary bottleneck in BO is quantified by the number of black-box {\em function evaluations} (BB), 
which can be substantially reduced by look-ahead acquisition functions (AF).
\item[(ii)] These look-ahead acquisition functions are {\em themselves expensive to approximate}.
The cost of (BB), $B$, implies a cost constraint for (AF), say between $(B/10,B)$.
\item[(iii)] 
We deliver a method which can be tuned 
to get designs which are either the best 
for a given budget or cheapest for a target accuracy, in terms of {\em order of complexity},
i.e. {\em the bigger the budget or the higher the target accuracy, the more gain there is to be had.}
\end{itemize}

This article is structured as follows. 
Section~\ref{subsec:BO:Overview} illustrates the intuition and benefits of applying the MLMC method in BO.
Section~\ref{subsec:BO:Construction} introduces the general BO settings for this paper. Section~\ref{sec:saa} discusses SAA, standard MC, and nested MC. 
Basic MLMC and MLMC for BO is introduced 
in Section~\ref{sec:mlmc}. 
Numerical tests are conducted in Section \ref{sec:num}, 
where {we illustrate the benefits of the method on several benchmark examples from \cite{balandat2020botorch}.
}

\section{Bayesian Optimization}\label{sec:BO}
{Suppose we want to maximize an expensive-to-evaluate black-box function $g:\mathcal X \rightarrow \mathbb R$,
for $\mathcal{X}\subseteq\mathbb R^d$ where $d$ is the dimension of the input space.} Here, ``black box'' means that we do not know the structure or the derivatives of the function. We can only observe the output of the function given an input value. Mathematically, for an input value $x^{(i)}$, we observe
\begin{equation*}
    y^{(i)} = g(x^{(i)}) + \epsilon_i\qquad 
\end{equation*}
where the $\epsilon_i$ are i.i.d. zero mean Gaussian random variables.

Suppose we have collected $n$ observations $\mathcal{D}_n := \{(x^{(i)},y^{(i)})\}_{i=1}^{n}$. We can solve the optimization problem sequentially using BO, which involves emulating $g(x)$ using a surrogate model, specifically a Gaussian Process, selecting $x^{(n+1)}$ given $\mathcal{D}_n$ by maximizing an acquisition function, incorporating $ \{(x^{(n+1)},y^{(n+1)})\}$ into the surrogate, and iterating.  The standard setting for BO is low/moderate dimension $d$ and small sample size $n$. 

\subsection{Brief overview of intuition and result}\label{subsec:BO:Overview}

We illustrate the intuition and result concisely with a simple 1D toy example first,
where 
$\mathcal{X}=[-10,10]$ and
\begin{equation}\label{eq:toy}
     g(x) = e^{-(x - 2)^2} + e^{-\frac{(x - 6)^2}{10}} + \frac{1}{x^2 + 1},
\end{equation}
which has one global maximum $g(x^*= 2.0087) = 1.4019$, shown in 
Figure \ref{fig:ToyExample}(a). 

\begin{figure}[ht!]
    \centering
    \vspace{-0.3cm}
     \begin{subfigure}[b]{0.6\columnwidth}
         \centering
        \includegraphics[width=\textwidth]{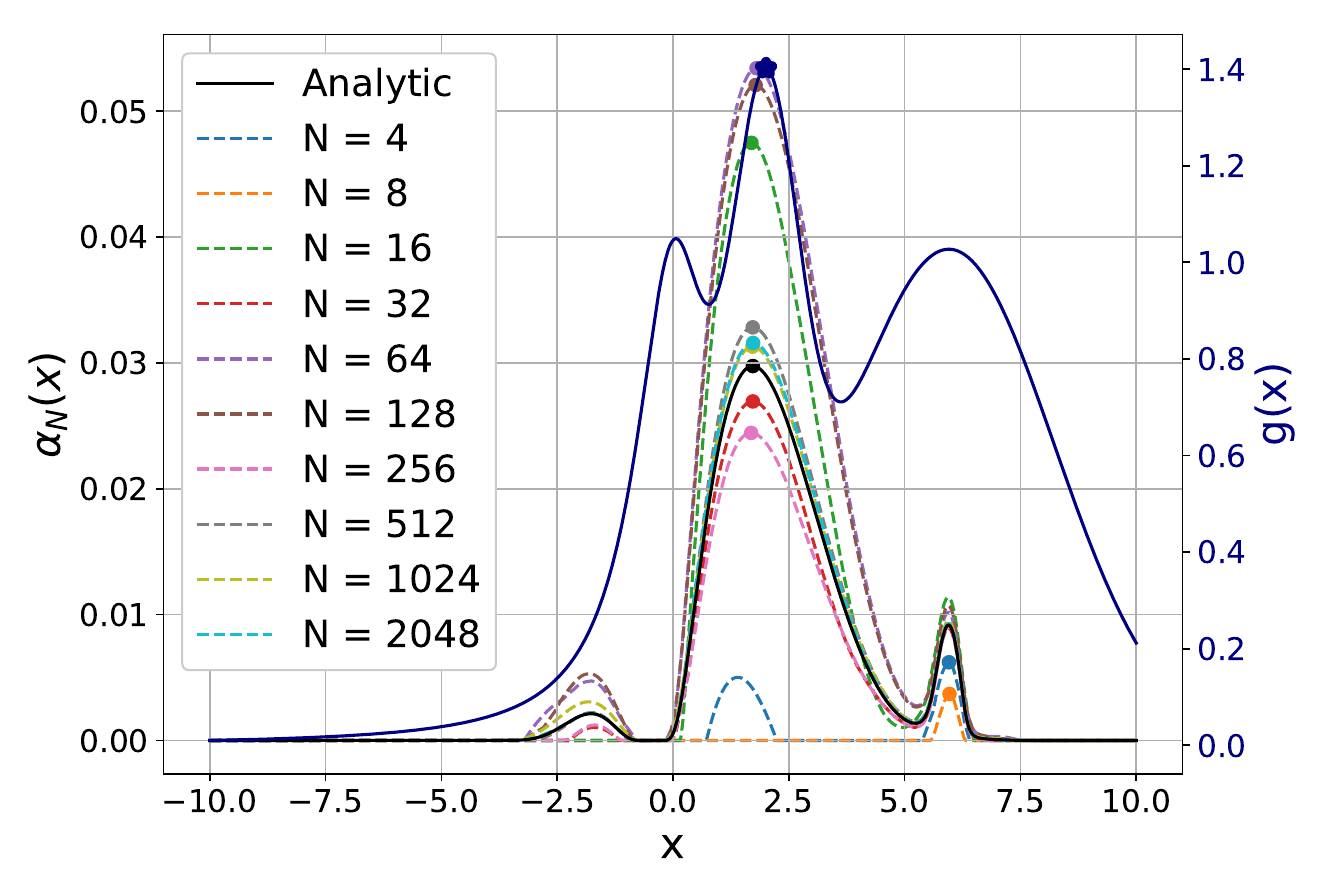}
        \captionsetup{width=\linewidth}
        \caption{}
         \label{fig:1DToy}
     \end{subfigure}
     \begin{subfigure}[b]{0.6\columnwidth}
         \centering
        \includegraphics[width=\textwidth]{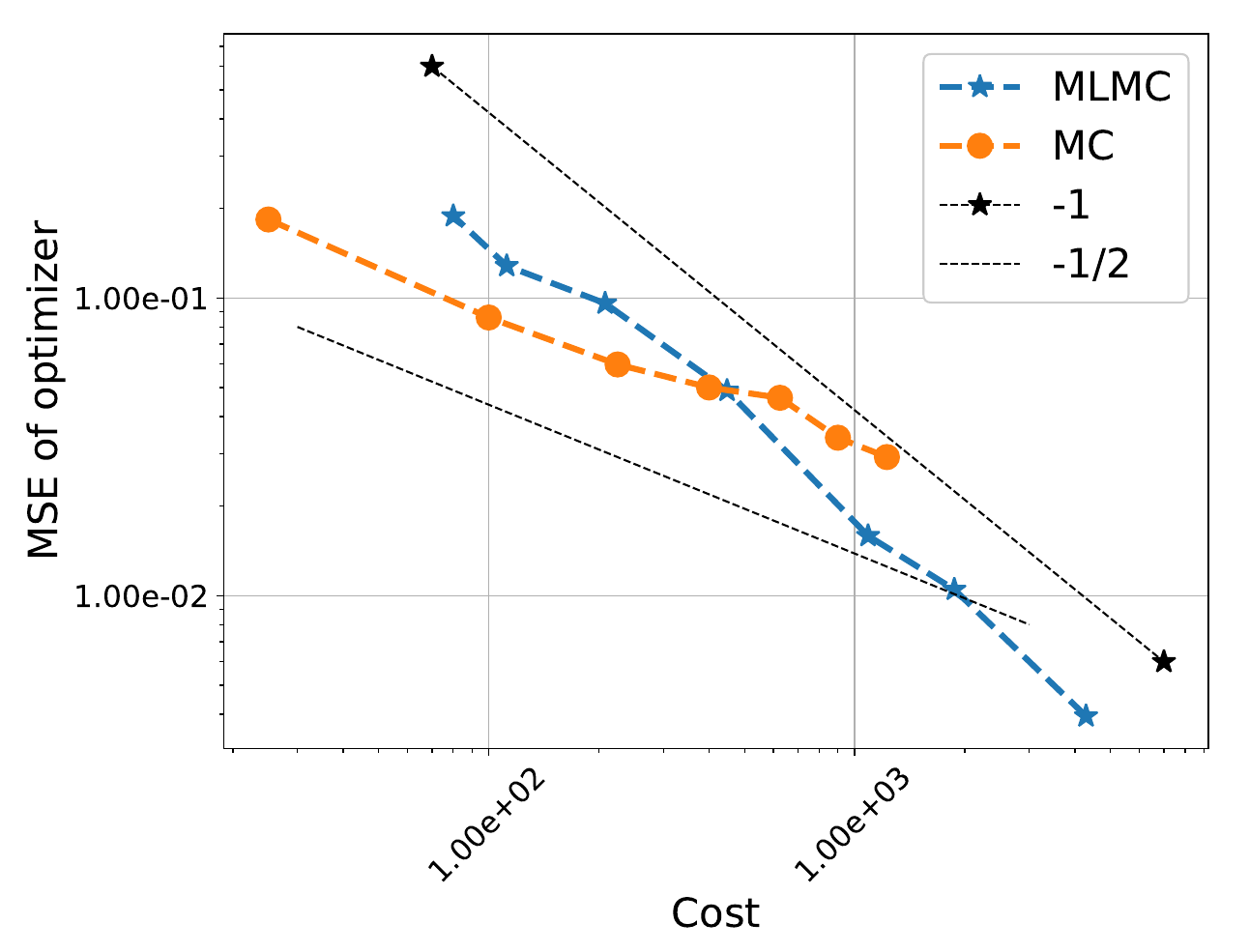}
        \captionsetup{width=\linewidth}
        \caption{}
         \label{fig:1DMSE}
     \end{subfigure}
    \caption{
    \textit{Panel \subref{fig:1DToy}:} 
     (i) The blue solid line is the objective function $g$ (with axis on the right). The function has a unique global maximizer and maximum. (ii) The black solid line is the analytical EI acquisition function, and the black dot is the reference solution. (iii) The dashed colored lines are the Monte Carlo approximation of the acquisition function with varying $N$, and the corresponding dots are the respective maximums. Low-accuracy (small $N$) approximations can result in maximizer of the approximation to be far from the true maximizer.  
    \textit{Panel \subref{fig:1DMSE}:} 
    Complexity diagram of 
    MLMC and nested MC approximation  
    of two-step look-ahead EI with the cost measured by the number of operations. 
    The reference solution for MSE is computed with high accuracy. Each curve is computed with 200 realizations. 
    }
    \label{fig:ToyExample}
\end{figure}

Figure \ref{fig:1DToy} illustrates the need to approximate the acquisition functions with high accuracy, because a low accuracy function approximation is susceptible to finding the wrong mode. Multi-modality can lead to a higher inaccuracy than one would expect in a uni-modal setting, and ultimately drive the BO trajectory off course, 
thus wasting costly function evaluations and delaying convergence.

In the following, we introduce a technique that improves accuracy or reduces computational complexity compared to the standard MC method while balancing exploration and exploitation found with BO.
In particular, we 
leverage the MLMC method to efficiently approximate analytically intractable 
look-ahead acquisition functions. 

{We use mean-squared error (MSE) as the error metric for acquisition function approximation. Figure \ref{fig:1DMSE} shows the MSE from approximating one acquisition function with MC and MLMC in BO. It shows that by applying MLMC, we can approximate the acquisition function's maximizer with the canonical convergence rate, 
{ i.e., Cost $\propto$ MSE$^{-1}$ rather than the sub-optimal Cost $\propto$ MSE$^{-2}$ rate of nested MC}.
Intuitively, this means that for a
given computational cost,
the maximizer can be approximated with higher accuracy, thus improving the performance of the whole BO algorithm. 
Alternatively, we can achieve the same accuracy as the MC method with less computational cost, reducing the computational cost of the BO algorithm. 
{The benefits of MLMC are more significant if high accuracy is required or if the approximation is very costly.}
{
Later, we will see that MLMC achieves a better normalized MSE for the same computational cost 
for the full outer BO problem across a range of examples (Figure~\ref{fig:fullBOMLqEI}).}

\subsection{Detailed Construction}
\label{subsec:BO:Construction}

{We briefly recall standard results on Gaussian Process regression and
introduce look-ahead acquisition functions.}

\subsubsection{Gaussian process regression}
We use $f$ to denote the GP surrogate of the objective $g$, which indicates $f(x^{(i)}) = y^{(i)}$ at observed point $x^{(i)}$. Initially, in BO we construct a GP prior and then update the posterior sequentially with new observations using GP regression, as we now describe. 

Assuming the data are collected randomly from a multivariate normal prior distribution with a specific mean and covariance matrix, we have
\begin{equation*}
    f(x^{(1:n)}) \sim N\left(\mu_0(x^{(1:n)}), \Sigma_0(x^{(1:n)},x^{(1:n)})\right).
\end{equation*}
Assuming further that $y^{(i)}$ is observed without noise (which can be easily relaxed), we can derive the posterior distribution as
\begin{align*}
    f(x)|\mathcal{D}_n \sim N(\mu_n(x), \sigma_n^2(x)) \, ,
\end{align*}
where
\begin{align*}
    \mu_n(x) &= \mu_0(x) 
    + \Sigma_0^{(1:n)}(x)(\Sigma_0^{(1:n)})^{-1}
    ( f(x^{(1:n)}) - \mu_0(x^{(1:n)}) ) \, ,\\
    \sigma_n^2(x) &=  \Sigma_0(x,x)
    - \Sigma_0^{(1:n)}(x)(\Sigma_0^{(1:n)})^{-1} \Sigma_0^{(1:n)}(x)^T \, ,
\end{align*}
with 
$\Sigma_0^{(1:n)}(x)=\Sigma_0(x,x^{(1:n)})$, 
$\Sigma_0^{(1:n)}=\Sigma_0^{(1:n)}(x^{(1:n)})$.

\subsubsection{Acquisition Functions}
In this work, we focus on multi-step acquisition functions which are formulated 
in terms of the underlying Markov decision process (MDP), 
defined as follows \cite{ginsbourger2010towards,lam2016bayesian,jiang2020efficient,astudillo2021multi,garnett2023bayesian}. 
The belief state of the MDP is the posterior $f_n(\cdot) = f(\cdot; \mathcal{D}_n) \in \mathcal{S}$, 
parameterized by mean $\mu_n(\cdot)$ and covariance kernel $\Sigma_n(\cdot, \cdot)$, 
where $\mathcal{S}$ is the space of state. 
The action is where we take observations $x \in \mathcal{X}$, 
and the stage-wise reward $r(f,x)$
characterizes the acquisition function:
\begin{equation*}	
    \alpha(x; \mathcal D_n) := 
    \mathbb{E}[ r(f,x) | \mathcal D_n ] \, . 
\end{equation*}
A simple reward function may look like
$r(f_n(x))$, in which case 
$\mathbb{E}[ r(f,x) | \mathcal D_n ]=\mathbb E[ r(f_n(x)) ]$.
{The input action of the MDP and its output updates the GP with a new observation} $(x^{(n+1)},y^{(n+1)})$
as described above, i.e. the dynamics of the MDP are given by
\begin{eqnarray*}
\mathcal F : \mathcal S \times \mathcal X \times \mathcal Y &\rightarrow& \mathcal S \, \\
(f_n, x^{(n+1)},y^{(n+1)}) &\mapsto& f_{n+1}(\cdot) = f(\cdot ; \mathcal D_{n+1}) \, .
\end{eqnarray*}
The BO algorithm provides a greedy stage-wise solution to the MDP by
repeating the following steps until the computational resources are exhausted: 
1) starting from an initial state (a GP $f_{n-1}$ with initial observations $\mathcal{D}_{n-1}$); 
2) 
{maximize the expected multi-step reward to determine the action $x_n$ at which we should take an observation;} 
3) observe the environment, $y_n$; 4) update the state with the new observation $f_{n-1} \mapsto f_n$ (see Algorithm \ref{alg:bo}).
Letting $\mathcal{D}$ 
denote all observations at the current state,
the multi-step look-ahead acquisition functions are
\begin{align}
    \alpha_0(x; \mathcal{D}) &:= 
 \mathbb E_{f(\cdot ; \mathcal D)} [ r(f,x) ] \notag \\
   \alpha_1(x; \mathcal{D}) &:=
   \mathbb E_{f(\cdot ; \mathcal D)} \left[ r(f,x) 
+ 
 \max_{x_1} \mathbb E_{f(\cdot ; 
\mathcal D_1(x))} 
 \left[ r( f,x_1 )  \right] \right]\notag\\
    \alpha_2(x; \mathcal{D}) &:=
  \mathbb E_{f(\cdot ; \mathcal D)} \Bigg[r(f,x) +  
\max_{x_1}  \mathbb E_{f(\cdot ; 
\mathcal D_1(x))} 
 \Big[ r( f,x_1)+
 \max_{x_2} \mathbb E_{f(\cdot;\mathcal D_2(x,x_1))} 
 [ r(f,x_2) ] 
   \Big ]
  \Bigg ] \,  \notag\\
   &\vdots \label{eq:alpha_1}
\end{align}
where
$\mathcal D_1(x) = \mathcal D \cup \{ (x , f(x;\mathcal D) )\}\, , 
\mathcal D_2(x,x_1) = \mathcal D_1(x) \cup \{ (x_1 , f(x_1;\mathcal D_1(x))) \}, \dots$,  
and we use $\mathbb E_{f(\cdot;\mathcal D)}$ to denote that the expectations above are taken over the Gaussian process $f$ 
given data $\mathcal D$.  
Here, the $\alpha_0$ is the standard one-step look-ahead acquisition function
and $\alpha_k$ denotes the $(k+1)$-step look-ahead acquisition function. 

\begin{algorithm}[tb]
   \caption{Bayesian optimization}
   \label{alg:bo}
\begin{algorithmic}
   \STATE {\bf Inputs}: $\mathcal{D}_0$, $\epsilon$, Black Box objective ``$g$", stopping criterion.
   \STATE {\bf Outputs}: $\hat{x}^* \approx {\sf argmin}_x g(x)$.
\STATE $n=0$.
   \WHILE{stopping criterion is not met}
    \STATE Compute Single Design (Alg. \ref{alg:single}) $\mathcal{D}_n \mapsto x_{n+1}$;
\STATE Evaluate Black Box $y_{n+1} = g(x_{n+1})$;
\STATE Augment $\mathcal{D}_{n+1} = \mathcal{D}_n \cup$ $\{x_{n+1},y_{n+1}\}$;
\STATE $n \mapsto n+1$.
    \ENDWHILE
    \STATE $\hat{x}^* = {\sf argmax}_{\mathcal{D}_n} g(x)$.
\end{algorithmic}
\end{algorithm}

Many reward functions lead to analytically intractable acquisition functions, 
so the 2-step look-ahead
acquisition function $\alpha_1$ requires 
nested MC approximation as follows 
\begin{align}\label{eq:OneStepNestedMC}
     \alpha_{1,N,M}(x;\mathcal{D}) &= \frac{1}{N}\sum_{i=1}^{N}
    \Bigg[r(f^i(x;\mathcal{D}))
+ \bigg(\max_{x_1^i}\frac{1}{M}\sum_{j=1}^{M}
    r(f^{ij}(x_1^i;\mathcal{D}_1^i(x)))
    \bigg)\Bigg].
\end{align}
Additional details are provided in Appendix \ref{app:acquisition}.
In general, $n$-step look-ahead acquisition functions
can be approximated with $n$-nested MC approximations.
{In the present work, we will focus on deriving theories and showing numerical results for
{\bf single-nested MC}, which is amenable to MLMC.} 
This includes 2-step look-ahead q-EI \cite{wang2020parallelBO} and 3-step look-ahead EI.
{As we will see, the 2-step look ahead formulation is sufficient to gain substantial benefits in terms of MSE when compared with SOTA methods, which are either single MC, e.g. 1-step qEI or 2-step EI \cite{wu2019practical,ginsbourger2010towards,chevalier2013fast,renganathan2020recursive,balandat2020botorch},
or rely on very rough approximations for more than 2 steps \cite{lam2016bayesian,jiang2020efficient,astudillo2021multi,garnett2023bayesian}.}
There is an extension of the MLMC technology that is applicable 
to the general $n$-nested case, which is called multi-index MC 
\cite{haji2016multi,ourmimc,jasra2023multi}, and that will be the topic of future 
investigation.

Numerous reward functions can be used in BO (see e.g., \cite{frazier2018tutorial,balandat2020botorch}),
but for the present exposition, we focus on two of the
most ubiquitous ones. 

\textbf{Expected improvement (EI)} \cite{movckus1975bayesian} is obtained by setting 
$$
{r(f,x) := (f(x; \mathcal D) - f^*(\mathcal{D}))_+ ,}$$ with $(\cdot)_+ = \max\{0, \cdot\}$ and $f^*(\mathcal{D}) = \max_{(x,f(x))\in \mathcal{D}} f(x)$. 
Expected improvement can be computed analytically as
\begin{align*}
    {\sf EI}(x | \mathcal{D}) &:= \mathbb{E}\left[(f(x; \mathcal{D}) - f^*(\mathcal{D}))_+|\mathcal{D}\right]\\
    &= \sigma(x) z(x)\Phi\left(z(x)\right) + 
    \sigma(x)\phi\left(z(x)\right)
\end{align*}
where $z(x) := {(\mu(x) - f^*(\mathcal{D}))}/{\sigma(x)}$
and 
where $\phi(\cdot)$ and $\Phi(\cdot)$ are the PDF and CDF of the standard normal distribution, respectively.

\noindent
\textbf{q-Expected improvement (qEI)} \cite{wang2020parallelBO} maximizes over a batch $x=(x_1,\dots,x_q)$ of $q>1$ points 
jointly, i.e., $r(f, x) = \max_{j=1,...,q}(f(x_j; \mathcal{D}) - f^*(\mathcal{D}))_+$ so that 
\begin{equation}\label{eq:qEI}
    {\sf qEI}(x| \mathcal{D}) = \mathbb{E}\left[\max_{j=1,...,q}(f(x_j; \mathcal{D}) - f^*(\mathcal{D}))_+|\mathcal{D}\right].
\end{equation}
This is analytically intractable but can be approximated by MC estimation
\begin{equation*}
    {\sf qEI}(x | \mathcal{D}) \approx 
    \frac{1}{N}\sum_{i=1}^{N}\max_{j=1,...,q}(f^i(x_j;\mathcal{D}) - f^*(\mathcal{D}))_+.
\end{equation*}

Examples of two-step look-ahead EI, two-step look-ahead qEI, and three-step look-ahead EI are explicitly given in Appendix \ref{sec:Exp12EI}. Noting that the latter two both
result in 
a single-nested MC objective, we henceforth 
denote it by $\alpha(x;\mathcal{D})$ without subscripts,
and the corresponding optimizers by $x^*$.

\section{Sample Average Approximation}
\label{sec:saa}

{ 
Sample average approximation (SAA)} \cite{kleywegt2002sampleaverage,balandat2020botorch} 
is constructed from i.i.d. samples as
\begin{align}\label{eq:SAA}
    x_N^{} &:= \argmax_{x\in\mathcal{X}}  
    \frac{1}{N}\sum_{i=1}^{N}r(f^i(x; \mathcal{D}_{})) \\ \label{eq:original}
    &\approx \argmax_{x\in\mathcal{X}}\ \alpha(x; \mathcal{D}_{}) =: x^{*} \, .
\end{align}

Below and hereafter, $\|\cdot\|$ denotes the Euclidean distance. 
Proposition~\ref{thm:SAAoptimiser} below guarantees the rate of convergence of the maximizer under
standard assumptions: essentially, 
$x\mapsto\alpha(x;\mathcal D)$ has a compact domain with Lipschitz derivatives and is locally quadratic at its optimum (see Appendix~\ref{app:saa} for a more formal statement and further discussion). 

\begin{prop}[Theorem 12 of \cite{kim2015guide}]\label{thm:SAAoptimiser}
Given a unique optimizer and Assumption \ref{assumptions} 
$$\mathbb{E}[\|x_N - x^*\|] = O (N^{-1/2}) \, .$$
\end{prop}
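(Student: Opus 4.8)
The plan is to treat $x_N$ and $x^*$ as M-estimators and to invoke the standard argmax/rate theory for M-estimation (in the spirit of van der Vaart \& Wellner's rate theorem), whose hypotheses are exactly what Assumption~\ref{assumptions} encodes. Write $\alpha_N(x):=\frac1N\sum_{i=1}^N r(f^i(x;\mathcal D))$ for the SAA objective and $G_N:=\alpha_N-\alpha$ for its centred fluctuation, so that $\mathbb E[G_N(x)]=0$ for every fixed $x$. The whole argument then reduces to comparing the curvature of the population objective $\alpha$ near its unique maximizer against the size of the empirical fluctuations $G_N$ in a shrinking neighbourhood of $x^*$.

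First I would record the two deterministic consequences of optimality. Since $x_N$ maximizes $\alpha_N$ while $x^*$ maximizes $\alpha$, the basic inequality $\alpha_N(x_N)\ge\alpha_N(x^*)$ rearranges to
$$\alpha(x^*)-\alpha(x_N)\ \le\ G_N(x_N)-G_N(x^*).$$
The local quadratic behaviour in Assumption~\ref{assumptions} (a negative-definite Hessian, i.e.\ second-order growth at the unique interior maximizer) supplies the matching curvature bound $\alpha(x^*)-\alpha(x)\ge c\,\|x-x^*\|^2$ on a neighbourhood of $x^*$, for some $c>0$. Combining the two yields the governing inequality
$$c\,\|x_N-x^*\|^2\ \le\ G_N(x_N)-G_N(x^*),$$
so the convergence rate is dictated entirely by how the centred empirical-process increments behave locally around $x^*$.

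The crux is therefore a modulus-of-continuity bound for $G_N$. Using that $x\mapsto r(f,x)$ is almost surely Lipschitz in $x$ with square-integrable Lipschitz constant (a consequence of the Lipschitz-derivative hypothesis together with the a.s.\ Lipschitz regularity of GP sample paths on the compact domain), the increment class $\{\,r(\cdot,x)-r(\cdot,x^*):\|x-x^*\|\le\delta\,\}$ has an envelope of order $\delta$ and polynomial covering numbers, so a maximal inequality / chaining argument gives
$$\mathbb E\Big[\sup_{\|x-x^*\|\le\delta}\big|G_N(x)-G_N(x^*)\big|\Big]\ \lesssim\ \frac{\delta}{\sqrt N}.$$
Feeding this linear-in-$\delta$ modulus and the quadratic growth into the peeling argument balances $r_N^{-2}$ against $r_N^{-1}N^{-1/2}$ and returns the rate $r_N=N^{1/2}$, i.e.\ $\|x_N-x^*\|=O_{\mathbb P}(N^{-1/2})$. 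Since the non-asymptotic version of the peeling bound produces an integrable tail for $\sqrt N\,\|x_N-x^*\|$, and $\mathcal X$ compact gives a uniform envelope, one upgrades in-probability to in-expectation and obtains $\mathbb E[\|x_N-x^*\|]=O(N^{-1/2})$.

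I expect the empirical-process step to be the main obstacle: establishing the $\delta/\sqrt N$ modulus rigorously cannot proceed by differentiating sample paths, because for fixed $f$ the reward $r(f,\cdot)$ is non-smooth in $x$ (the $(\cdot)_+$ in EI/qEI is merely Lipschitz), so the fluctuations must be controlled through Lipschitz-in-$x$ envelopes and covering-number/entropy bounds, and the localization must be checked to hold uniformly on the compact domain. The curvature and basic-inequality steps are routine once Assumption~\ref{assumptions} is in force. As a cross-check, under the extra hypothesis that sample paths are differentiable one could instead expand $0=\nabla\alpha_N(x_N)=\nabla\alpha_N(x^*)+\nabla^2\alpha_N(\tilde x)(x_N-x^*)$, use $\|\nabla\alpha_N(x^*)\|=O_{\mathbb P}(N^{-1/2})$ from the central limit theorem (as $\nabla\alpha(x^*)=0$) together with invertibility of the population Hessian near $x^*$, and read off the same $N^{-1/2}$ rate.
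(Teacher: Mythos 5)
First, a point of comparison that matters here: the paper does not actually prove this proposition. It is imported wholesale as Theorem 12 of the cited guide to SAA (Kim--Pasupathy--Henderson), and the paper's only supporting content is Appendix~\ref{app:saa}, which records the hypotheses (compactness, Lipschitz integrands and gradients, quadratic growth, uniqueness) under which that citation applies, plus a pointer to Shapiro's Theorem~5.3 for consistency. So your proposal is a reconstruction of the external reference's argument rather than an alternative to anything written in the paper. Your skeleton --- basic inequality, quadratic growth, localized modulus of continuity, peeling --- is the standard M-estimation rate argument and is the right skeleton for this result.

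There is, however, a genuine gap in your final step, the upgrade from $O_{\mathbb{P}}(N^{-1/2})$ to $\mathbb{E}[\|x_N-x^*\|]=O(N^{-1/2})$. With only the first-moment modulus bound $\mathbb{E}\bigl[\sup_{\|x-x^*\|\le\delta}|G_N(x)-G_N(x^*)|\bigr]\lesssim \delta/\sqrt{N}$ and Markov's inequality inside the peeling, each shell at scale $2^j/\sqrt{N}$ contributes probability of order $2^{-j}$, so you only get $\mathbb{P}\bigl(\sqrt{N}\|x_N-x^*\|>t\bigr)\lesssim 1/t$; integrating this tail over $[1,\sqrt{N}\,\mathrm{diam}(\mathcal{X})]$ (compactness is what truncates it) yields $\mathbb{E}[\|x_N-x^*\|]=O(N^{-1/2}\log N)$, not the claimed rate --- the tail is \emph{not} integrable uniformly in $N$, contrary to your assertion. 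To remove the logarithm you need second-moment or exponential control of the shell suprema (e.g.\ Hoffmann--J\o{}rgensen/Talagrand moment equivalence, which is where the square-integrable envelope you invoke genuinely enters), so that Chebyshev gives $\mathbb{P}(\cdot>t)\lesssim t^{-2}$ and the tail integrates. Alternatively --- and closer to the proofs in the SAA literature the paper leans on --- use the deterministic perturbation lemma: quadratic growth at $x^*$ together with the basic inequality gives $c\|x_N-x^*\|^2\le G_N(x_N)-G_N(x^*)\le \mathrm{Lip}(G_N)\,\|x_N-x^*\|$, hence $\|x_N-x^*\|\le \mathrm{Lip}(G_N)/c$ pointwise, and then a single maximal inequality applied to the gradient process bounds $\mathbb{E}[\mathrm{Lip}(G_N)]=O(N^{-1/2})$; this is exactly what the Lipschitz condition on $\nabla_x$ in Assumption~\ref{assumptions}(4) is for, and it delivers the expectation bound in one step with no peeling and no tail-integration issue. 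Two smaller caveats: the square-integrability of $L(\xi)$ your chaining requires is stronger than the paper's stated ``finite in expectation''; and quadratic growth holds only on a neighbourhood $V$ of $x^*$, so shells outside $V$ must be dispatched by a separate identifiability/consistency argument (uniqueness plus compactness plus uniform convergence), which you acknowledge but do not carry out.
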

In practice, we assume that our optimization algorithm can find the global optimizer,
and we attempt to achieve this with a committee of 
multiple initializations of a local optimization algorithm. 
For given realizations, 
the function $\alpha_N(\cdot)$ is deterministic, 
and we can apply deterministic optimization algorithms such as L-BFGS with multi-start initialization. 
We now show the decomposition of approximation error of AF \eqref{eq:OneStepNestedMC}, 
omitting the subscript 1 henceforth.

\begin{prop}\label{prop:MCrate}
If Assumption \ref{assumptions} holds and 
$\forall x, x_1 \in \mathcal{X},  \mathsf{Var}\left(
    r(f,x) + 
 \mathbb E_{f(\cdot ; 
\mathcal D_1(x))} 
 \left[ r( f,x_1 )  \right]\right) \leq \sigma_{\max}$ and $\mathsf{Var}(r( f,x_1 ))\leq \sigma_{\max},$ for some $\sigma_{\max} > 0$,  
then the following holds, for some $C>0$
\begin{align}\label{eq:nonas}
\sup_{x\in \mathcal{X}} \bbE \bigl[| \alpha_{N,M}(x;\mathcal{D}) - \alpha(x;\mathcal{D})|^2 \bigr] \leq 
C\Big(\frac{1}{N} 
+ \frac{1}{M}\Big)  \, .
\end{align}
\end{prop}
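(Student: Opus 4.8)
The plan is to interpolate between $\alpha_{N,M}$ and $\alpha$ through an intermediate estimator that samples only the outer expectation while keeping the inner maximization and expectation exact. For each outer draw $f^i$ write $G^i(x_1) := \bbE_{f(\cdot;\mathcal D_1^i(x))}[r(f,x_1)]$ for the exact inner conditional expectation, $V^i := \max_{x_1} G^i(x_1)$ for its maximum, and $\hat G^i_M(x_1)$, $\hat V^i_M := \max_{x_1}\hat G^i_M(x_1)$ for their $M$-sample analogues appearing in \eqref{eq:OneStepNestedMC}. Define $\bar\alpha_N(x) := \frac1N\sum_{i=1}^N\big[r(f^i(x;\mathcal D)) + V^i\big]$, whose summands are i.i.d.\ with mean $\alpha(x;\mathcal D)$. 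First I would split, for every fixed $x$, via $\bbE|\alpha_{N,M}-\alpha|^2 \le 2\,\bbE|\bar\alpha_N - \alpha|^2 + 2\,\bbE|\alpha_{N,M}-\bar\alpha_N|^2$, isolating an outer (pure Monte Carlo) error and an inner (nested approximation) error.

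For the outer term, $\bar\alpha_N - \alpha$ is a centred average of $N$ i.i.d.\ terms, so $\bbE|\bar\alpha_N-\alpha|^2 = N^{-1}\mathsf{Var}\big(r(f,x)+V\big) \le \sigma_{\max}/N$, which is precisely where the first variance hypothesis enters and is uniform in $x$. For the inner term, $\alpha_{N,M}-\bar\alpha_N = \frac1N\sum_i Z_i$ with $Z_i := \hat V^i_M - V^i$, and the $Z_i$ are independent across $i$ because distinct outer draws carry independent inner sample batches. Writing $v_M := \mathsf{Var}(Z_i)$ and $b_M := \bbE[Z_i]$, independence gives $\bbE|\frac1N\sum_i Z_i|^2 = v_M/N + b_M^2$, so it suffices to establish $\bbE[(\hat V_M - V)^2] \le C/M$ (dropping the index $i$), since this simultaneously controls $v_M/N \le C/(NM)$ and $b_M^2 \le C/M$.

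The central estimate $\bbE[(\hat V_M - V)^2] \le C/M$ is the step I expect to be the main obstacle, since it concerns mean-square convergence of the \emph{maximized} inner value rather than of the value at a fixed point. The starting point is the elementary two-sided bound $|\max_{x_1}\hat G_M(x_1) - \max_{x_1} G(x_1)| \le \sup_{x_1}|\hat G_M(x_1)-G(x_1)| = \|\hat G_M - G\|_\infty$, whence $\bbE[(\hat V_M - V)^2] \le \bbE[\|\hat G_M - G\|_\infty^2]$. At any single $x_1$ the second variance hypothesis gives $\bbE[(\hat G_M(x_1)-G(x_1))^2] = M^{-1}\mathsf{Var}(r(f,x_1)) \le \sigma_{\max}/M$; the difficulty is upgrading this pointwise $O(1/M)$ rate to the supremum over the continuous action set $\mathcal{X}$, where the maximization additionally injects the familiar optimizer (Jensen) bias $b_M\ge 0$. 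I would obtain the uniform bound either by a chaining / uniform law-of-large-numbers argument over the compact domain using the Lipschitz regularity in Assumption~\ref{assumptions}---the metric-entropy factors being dimension dependent but contributing only to the constant $C$, leaving the rate $1/M$ intact---or equivalently by invoking the sample-average-approximation asymptotics for the optimal value underlying Proposition~\ref{thm:SAAoptimiser}. Assembling the three pieces yields $\bbE|\alpha_{N,M}-\alpha|^2 \le 2\sigma_{\max}/N + 2C/M$ for every $x$; since all hypotheses hold uniformly in $x$ and $x_1$, the constants are independent of $x$ and the supremum over $x\in\mathcal{X}$ may be taken, giving the claimed $C(1/N+1/M)$.
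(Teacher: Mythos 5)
Your proposal follows the paper's proof (Appendix \ref{sec:proof1}) in its overall architecture: your intermediate estimator $\bar\alpha_N$ is exactly the paper's $\alpha_N$ from \eqref{eq:OneStepMC} (outer sampling, exact inner expectation and maximization), the splitting into outer and inner errors is precisely \eqref{eq:vb1}--\eqref{eq:vb2}, and the outer term is bounded identically by $\sigma_{\max}/N$ as in \eqref{eq:bias}. Where you genuinely diverge is the inner term. The paper avoids any uniform-in-$x_1$ control: it inserts the inner-sample maximizer $x_{1,M}^{i,*}$, exploits the one-sided inequality $\max_{x_1}\bbE_{f(\cdot;\mathcal D_1^i(x))}[r(f,x_1)] \geq \bbE_{f(\cdot;\mathcal D_1^i(x))}[r(f,x_{1,M}^{i,*})]$, applies Cauchy--Schwarz across $i$, and then invokes the pointwise variance hypothesis $\mathsf{Var}(r(f,x_1))\leq\sigma_{\max}$ at that (sample-dependent) point to get $\sigma_{\max}/M$ in \eqref{eq:var}. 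You instead use the two-sided bound $|\max_{x_1}\hat G_M - \max_{x_1}G| \leq \sup_{x_1}|\hat G_M - G|$ followed by a chaining/uniform-LLN estimate $\bbE[\sup_{x_1}|\hat G_M - G|^2]\leq C/M$; your i.i.d.\ bias--variance split across $i$ also yields the slightly sharper $C/(NM) + C/M$, which the paper's Cauchy--Schwarz forgoes (harmlessly for the final rate). The trade-off is real: your route needs a uniform empirical-process bound that does not follow from the proposition's literal hypotheses --- pointwise variance bounds plus Assumption \ref{assumptions}, whose Lipschitz constant $L(\xi)$ is only assumed finite in first moment, whereas an $L^2$ bound on the supremum via chaining typically requires square integrability. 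That uniform bound is essentially what the paper later postulates separately as Assumption \ref{ass:IncVar} and defends in Remark \ref{rem:inc}, so your proof effectively front-loads an assumption the paper defers to its MLMC theory. To be fair, the paper's own shortcut is not airtight either: the maximizer-insertion step is one-sided (squares are not monotone for signed quantities), and its final variance bound is taken at the random point $x_{1,M}^{i,*}$, which correlates with the samples being averaged --- both glossed as ``standard SAA arguments.'' So both proofs ultimately rest on the same SAA bedrock; yours makes the dependence explicit, while the paper's stays within its stated hypotheses at the cost of a looser key step.
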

The proof is provided in Appendix~\ref{sec:proof1}. 
The decomposition for three-step look-ahead EI \eqref{eq:TwoStepMC} follows 
similarly.
The rate with respect to the outer and inner MC is numerically verified in Appendix \ref{sec:NumMCRates}.

Given the decomposition of MSE as $O(1/N+1/M)$, if
we want to achieve an MSE of $\varepsilon^2$ for $\varepsilon > 0$, 
then we require $N = O(\varepsilon^{-2})$ and $M = O(\varepsilon^{-2})$, 
hence a computational complexity of $MN = O(\varepsilon^{-4})$. 
If there is no nested MC, such as $\alpha_0$ from \eqref{eq:alpha_1}, 
{the computational complexity} to achieve an MSE of $\varepsilon^2$ is $O(\varepsilon^{-2})$, 
which is called the canonical rate of MC.
We can leverage MLMC to reduce 
{the complexity of $\alpha_1$ 
from sub-canonical} $O(\varepsilon^{-4})$ if \eqref{eq:OneStepNestedMC} is used, 
to canonical $O(\varepsilon^{-2})$.

\section{Multilevel Monte Carlo}
\label{sec:mlmc}

The idea behind MLMC is relatively simple \cite{giles2015multilevel}. We first introduce the general concept and then discuss the specific construction for BO. 
Suppose we want to estimate the expectation of some quantity of interest $\varphi$, which must itself be approximated.
{Here the quantity of interest is the given look-ahead stage-wise reward and the approximation refers to the inner MC.}
We leverage a sequence of approximations to $\varphi$ with increasing accuracy and cost,
where an initial estimate using
a large number of samples 
with low accuracy and cost is 
corrected 
with progressively fewer-sample estimates
of {\em increments} with higher accuracy and cost.
We make this precise in the following.

Denote the sequence of approximations with increasing accuracy and cost over levels $l$ 
by $\varphi_{M_0}, \varphi_{M_1},\dotsc,\varphi_{M_L}$. 
Let $M_l = 2^l$. 
The MLMC approximation to $\bbE[\varphi]$ follows from the telescoping sum identity
\begin{align*}
    \bbE[\varphi] \approx \bbE[\varphi_{M_L}] 
    &= \sum_{l=0}^{L}\bbE[\Delta\varphi_{M_l}],
\end{align*}
where $\Delta\varphi_{M_l} = \varphi_{M_l} - \varphi_{M_{l-1}}$ is the increment and $\varphi_{M_{-1}}~=~0$. 
Now we let $$Z_{N_l, M_l} = \frac{1}{N_l}\sum_{i=1}^{N_l}(\varphi_{M_l}^i - \varphi_{M_{l-1}}^i)$$ be an unbiased MC estimator of $\bbE[\Delta\varphi_{M_l}]$ for $l = 0,1,...,L$,
where $\varphi_{M_l}^i$ and $\varphi_{M_{l-1}}^i$ are computed using the same 
input samples, 
i.i.d for each $i$ and $l$.
If $|\bbE[\varphi_{M_l} - \varphi]|$ and the variance $\mathsf{Var}(\Delta\varphi_{M_l})$ decay exponentially as $l$ increases with specific rates, we can achieve computational benefits over the standard MC method. The following theorem is the standard MLMC theorem similar to \cite{giles2015multilevel}.
\begin{prop}\label{prop:mlmc}
Assume there exists positive constants $s, w, \gamma$ with $s\geq\frac12\min\{\beta, \gamma\}$ and $C$ (where $C$ may vary line to line) such that 
\begin{itemize}
    \item $B_{l} := |\bbE[\varphi_{M_l} - \varphi]| \leq C M_l^{-s} = C2^{-sl}$
    \item $V_{l} := \mathsf{Var}(\Delta\varphi_{M_l}) \leq CM_l^{-\beta} = C2^{-\beta l}$
    \item ${\sf COST}_{l} := {\sf COST}(\Delta\varphi_{M_l}) \leq  CM_l^{\gamma} = C2^{\gamma l}$,
\end{itemize}
then there are values $L$ and $N_l$ such that the multilevel estimator 
\begin{equation*}
    Z = \sum_{l=0}^{L}Z_{N_l,M_l}
\end{equation*}
can be approximated with an accuracy $\varepsilon^2$ measured by the mean square error (MSE) with a computational complexity ({\sf COST}) for which \begin{equation*}
    \bbE[{\sf COST}] \leq 
    \begin{cases}
    C\varepsilon^{-2},&\beta>\gamma,\\
    C\varepsilon^{-2}(\log\varepsilon)^2,&\beta=\gamma,\\
    C\varepsilon^{-2-(\gamma-\beta)/s},&\beta<\gamma.\\
    \end{cases}
\end{equation*}
\end{prop}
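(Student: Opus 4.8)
The plan is to follow the classical MLMC complexity argument of \cite{giles2015multilevel}, adapted to the three decay/growth rates $s,\beta,\gamma$ assumed here. First I would decompose the mean-squared error of $Z$ into a squared-bias term and a variance term. Since each $Z_{N_l,M_l}$ is an unbiased estimator of $\bbE[\Delta\varphi_{M_l}]$ and the levels use independent samples, telescoping gives $\bbE[Z] = \bbE[\varphi_{M_L}]$, so the bias of $Z$ is exactly $\bbE[\varphi_{M_L} - \varphi]$, controlled by $B_L \leq C2^{-sL}$, while $\mathsf{Var}(Z) = \sum_{l=0}^L V_l/N_l$ by independence across levels and within each level. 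I would then split the target budget, demanding squared bias $\leq \varepsilon^2/2$ and variance $\leq \varepsilon^2/2$.

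The bias constraint fixes the number of levels: choosing $L = \lceil \tfrac{1}{s}\log_2(\sqrt 2\,C\,\varepsilon^{-1})\rceil = O(s^{-1}\log(\varepsilon^{-1}))$ ensures $B_L^2 \leq \varepsilon^2/2$, and in particular $2^{L} = O(\varepsilon^{-1/s})$. The sample sizes $N_l$ are then chosen to minimise the total cost $\sum_{l=0}^L N_l\,\mathsf{COST}_l$ subject to $\sum_{l=0}^L V_l/N_l \leq \varepsilon^2/2$. Treating the $N_l$ as continuous and using a Lagrange multiplier (equivalently Cauchy--Schwarz) yields the optimal allocation $N_l \propto \sqrt{V_l/\mathsf{COST}_l}$, namely $N_l = \lceil 2\varepsilon^{-2}\sqrt{V_l/\mathsf{COST}_l}\,(\sum_{k=0}^L\sqrt{V_k\,\mathsf{COST}_k})\rceil$, which gives a total cost of order $\varepsilon^{-2}\bigl(\sum_{l=0}^L\sqrt{V_l\,\mathsf{COST}_l}\bigr)^2$.

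It then remains to evaluate the resulting geometric sum. Inserting the rate bounds gives $\sqrt{V_l\,\mathsf{COST}_l}\leq C\,2^{(\gamma-\beta)l/2}$, so the three cases follow from the behaviour of $\sum_{l=0}^L 2^{(\gamma-\beta)l/2}$: for $\beta>\gamma$ the series converges to a constant, yielding $O(\varepsilon^{-2})$; for $\beta=\gamma$ it equals $L+1 = O(\log\varepsilon^{-1})$, yielding $O(\varepsilon^{-2}(\log\varepsilon)^2)$; and for $\beta<\gamma$ it is dominated by the top level $2^{(\gamma-\beta)L/2}=O(\varepsilon^{-(\gamma-\beta)/(2s)})$, yielding $O(\varepsilon^{-2-(\gamma-\beta)/s})$.

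I expect the main technical obstacle to be the integer rounding of the $N_l$. The ceilings introduce an additive $\sum_{l=0}^L \mathsf{COST}_l = O(2^{\gamma L}) = O(\varepsilon^{-\gamma/s})$ to the cost, and one must verify that this remains dominated by the leading term in each regime; this is precisely where the hypothesis $s\geq\tfrac12\min\{\beta,\gamma\}$ is used, since it guarantees $\gamma/s \leq 2$ when $\beta>\gamma$ and $\gamma/s \leq 2+(\gamma-\beta)/s$ when $\beta<\gamma$. One also has to confirm that the rounded-up $N_l$ still satisfy the variance constraint (they do, since rounding up only decreases $V_l/N_l$) and that $N_l\geq 1$ for every level, which closes the argument.
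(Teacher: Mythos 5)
Your proposal is correct and follows essentially the same route the paper takes: the paper defers the details to the classical argument of Giles (2008), supplying as its ``main component'' exactly the Lagrange-multiplier cost-minimization step (Lemma~\ref{lem:costmin}) that sits at the heart of your allocation $N_l \propto \sqrt{V_l/\mathsf{COST}_l}$ and the resulting bound $\varepsilon^{-2}\bigl(\sum_l \sqrt{V_l\,\mathsf{COST}_l}\bigr)^2$. Your treatment of the three geometric-sum regimes and of the integer rounding (where the hypothesis $s\geq\tfrac12\min\{\beta,\gamma\}$ is needed to absorb the $O(\varepsilon^{-\gamma/s})$ overhead) is the standard completion of that argument and is consistent with how the paper handles rounding in its proofs of Theorems~\ref{thm:qFunc} and~\ref{thm:main}.
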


MLMC requires a regularity assumption on bias and incremental variance convergence. For the canonical situation, we require the variance to decay faster than the cost increase such that the cost at the lower level dominates; for the borderline situation, the cost is equally spread among levels; for the worst situation, the cost at the finer level dominates that corresponds to the MC rate. 
{A detailed proof can be found in \cite{giles2008multilevel}. 
The main components 
are provided in Lemma~\ref{lem:costmin} in Appendix~\ref{sec:proof2}.}

\subsection{Multilevel formulation of the maximizer of the acquisition function} \label{policy_version}

We construct a multilevel estimator for the two-step look-ahead acquisition function
\begin{equation}\label{eq:Onestep}
    {\displaystyle \alpha(x; \mathcal{D}) :=
   \mathbb E_{f(\cdot ; \mathcal D)} \left[ r(f,x) 
+ 
 \max_{x_1} \mathbb E_{f(\cdot ; 
\mathcal D_1(x))} 
 \left[ r( f,x_1 )  \right] \right].}
\end{equation}
Since our target is to find the next observation point guided by the acquisition function, we formulate the multilevel estimator for the maximizer as
\begin{equation}\label{eq:mlmcest}
x^*_{\sf ML} := \sum_{l=0}^L z_{N_l,M_l} -z_{N_l,M_{l-1}}=\sum_{l=0}^L\Delta z_{N_l,M_l}\, ,
\end{equation}
where 
\begin{align}
    z_{N_l,M_l}& = \argmax_{x\in \calX}
    \frac{1}{N_l}\sum_{i=1}^{N_l}
    \Bigg[r(f^i(x;\mathcal{D}))
+ \bigg(\max_{x_1^{i}}\frac{1}{M_l}\sum_{j=1}^{M_l}
    r(f^{ij}(x_1^{i};\mathcal{D}_1^i(x)))
    \bigg)\Bigg],\label{eq:MLfine}\\
    z_{N_l,M_{l-1}}& = \argmax_{x\in \calX} \frac{1}{N_l}\sum_{i=1}^{N_l}
    \Bigg[r(f^i(x;\mathcal{D}))
+    \bigg(\max_{x_1^{i}}\frac{1}{M_{l-1}}\sum_{j=1}^{M_{l-1}}
    r(f^{ij}(x_1^{i};\mathcal{D}_1^i(x))) 
    \bigg)\Bigg] \, ,\label{eq:MLcoarse}
\end{align}
where $z_{N_0,M_{-1}} \equiv 0$. To leverage MLMC, we first simulate $M_l$ samples of $f(x_1;\mathcal{D}_1(x))$ and construct the $z_{N_l,M_l}$ and then subsample $M_{l-1}$ samples from the $M_l$ samples to construct $z_{N_l,M_{l-1}}$.
We remark that the terms $\Delta z_{N_l,M_l}(x)$ for $l=0,1,...,L$ are mutually independent. 
{Algorithm~\ref{alg:MLMC} shows a general implementation of the MLMC approximation, 
where $\Phi_l^{l}$ and $\Phi_l^{l-1}$ denote the fine and coarse approximation of AF at level $l$
and $\Phi_0$ is the single AF at level~$0$.}

\begin{algorithm}[htb]
   \caption{Single Design with MLMC acquisition}\label{alg:MLMC}
   \label{alg:single}
\begin{algorithmic}
   \STATE {\bf Inputs}: $\mathcal{D}_n$, $\epsilon$
   \STATE {\bf Outputs}: $x_{n+1}$
\STATE {\bf Define} $L$, $N_0,\dots,N_L$, and $M_0, \dots, M_L$ (as in Thm \ref{thm:qFunc})
\STATE {\bf Compute}
$z_0 = {\sf argmin}_z \Phi_0(z)$
   \FOR{$l=1,\dots, L$}
    \STATE $z_l^l = {\sf argmin}_z \Phi_l^l(z)$
and $z_l^{l-1}= {\sf argmin}_z \Phi_l^{l-1}(z)$;
\STATE $\Delta_l = z_l^l - z_l^{l-1}$.
    \ENDFOR
    \STATE $x_{n+1} = z_0 + \sum_{l=1}^L \Delta_l$ .
\end{algorithmic}
\end{algorithm}

An {\em antithetic coupling} approach can improve the rate of convergence.
It involves replacing
the coarse estimator \eqref{eq:MLcoarse} with
\begin{align}
    z_{N_l,M_{l-1}}^A &= \argmax_x\frac{1}{N_l}\sum_{i=1}^{N_l}
    \Bigg[r(f^i(x;\mathcal{D}))\nonumber\\
    &\quad + \frac12\bigg(\max_{x_1^{i}}\frac{1}{M_{l-1}}\sum_{j=1}^{M_{l-1}}r(f^{ij}(x_1^{i};\mathcal{D}_1^i(x))) 
+    \max_{x_1^{i}}\frac{1}{M_{l-1}}\sum_{j=M_{l-1}+1}^{M_l}
    r(f^{ij}(x_1^{i} ;\mathcal{D}_1^i(x))) 
    \bigg)\Bigg] \, ,
    \label{eq:Antcoarse}
\end{align}
where $z^{A}_{N_0,M_{-1}} \equiv 0$ and the samples used for $z_{N_l,M_{l-1}}^A$ are taken by splitting the $M_l$ samples from $z_{N_l,M_l}$. In the coarser estimator $z_{N_l,M_{l-1}}^A$, one averages the maxima associated with two batches of $M_{l-1}$ samples rather than maximizing the average. 
It can easily be shown using Taylor expansion for smooth
functions \cite{blanchet} that this can double the rate, 
but it can also yield improvement even for non-smooth functions \cite{giles2018mlmc, giles2019decision}, as in our situation.
The multilevel formulation for the three-step look-ahead AF is given in Appendix \ref{sec:ML2Step}. 
%

{
The complexity is determined by 
the cost to evaluate the terms above, which is
$O(d(\sum_{l=0}^{L}N_l(M_l+1))$ 
for both the antithetic and regular cases.
Without loss of generality, we assume that a fast super-linear solver is available, 
so computing \eqref{eq:MLfine}
can be done with a 
linear cost in the evaluation of the objective function, 
{which corresponds to $\gamma = 1$}.

Note the maximizer increments of \eqref{eq:mlmcest} 
are {\em not unbiased},
as is common in Bayesian MLMC \cite{beskos2017multilevel}.
In the present work, we provide theory for a simplified estimator, 
and introduce strong assumptions for \eqref{eq:mlmcest}.
The more subtle and challenging complete theory is deferred to future work.

\subsection{MLMC approximation of the acquisition function}\label{sec:qfunc_approach}

We first present a theoretical analysis of a procedure for MLMC BO using the easier-to-analyse AF 
results. We combine MLMC ideas with SAA techniques on AF 
instead of 
optimizers, i.e. we directly construct an MLMC estimator for the AF.

{Similarly to \eqref{eq:MLfine} and \eqref{eq:MLcoarse}, we construct $\Delta \alpha_{N_l,M_l}(x) = \alpha_{N_l,M_l}(x) -\alpha_{N_l,M_{l-1}}(x)$, with
$\alpha_{N_0,M_{-1}}(x) \equiv 0$, as incremental approximations of \eqref{eq:Onestep}.
From this, we form the estimator
\begin{equation*}
\alpha_{\sf ML}(x) := \sum_{l=0}^L\Delta \alpha_{N_l,M_l}(x)\, .
\end{equation*}
We now argue that $\alpha_{\sf ML}$ is a computationally efficient proxy for the intractable limit objective $\alpha(x;\mathcal{D})$, which leads to computational benefits for approximating optimizer $x^*$ (see Appendix \ref{app:mlmc_af} for some further details).}

Before proceeding with the theoretical analysis, we note again that the above estimator $\alpha_{\sf ML}(x)$ and the corresponding $\argmax_{x\in\mathcal{X}} \alpha_{\sf ML}(x)$ is just one estimator that leverages the MLMC methodology.
It is particularly tractable from a theoretical perspective due to unbiased increments. 
We require an additional technical Assumption \ref{ass:IncVar}, which holds for GP with standard kernels.
This and supporting Lemmas are presented in Appendix \ref{app:mlmc}.

\begin{theorem}[Q-Function Convergence]\label{thm:qFunc}
    Suppose Assumptions~\ref{assumptions} and~\ref{ass:IncVar} hold. Let $M_l = 2^l$. For $x\in\mathcal{X}$, the MLMC estimator $\alpha_{\sf ML}(x)$ is such that 
    \begin{align*}
        \mathbb E\Big[\sup_x \;& |\alpha_{\sf ML}(x) - \alpha(x;\mathcal{D})|^2\Big]
        \leq \ 
        C(L+1)\left(\frac{V_0}{N_0} + \sum_{l=1}^L\frac{1}{N_lM_l}\right) + \frac{C}{M_L} \, ,
    \end{align*}
        \begin{align*}
        \mathbb E\Big[\sup_x \;& |\nabla\alpha_{\sf ML}(x) - \nabla\alpha(x;\mathcal{D})|^2\Big]
        \leq \ 
C(L+1)\left(\frac{V_0}{N_0} + \sum_{l=1}^L\frac{1}{N_lM_l}\right) + \frac{C}{M_L} \, ,
    \end{align*}
     for some constant $C$. Consequently taking\footnote{The ceiling function provides integer values of $L$, $N_l$, and $M_l$.} $L = O(|\log\varepsilon|),$
\begin{align}\label{eq:params}
    N_0 = \varepsilon^{-2}K_L\left(\frac{V_0}{M_0}\right)^{1/2}\, \ \text{and}\ \ N_l  =\varepsilon^{-2}K_L\frac{1}{M_l}, 
\end{align}
for $l \geq 1$ with $K_L = (V_0M_0)^{1/2} + L\, ,$
the MLMC estimator $\alpha_{\sf ML}(x)$ gives the following error estimate, for $x\in\mathcal{X}$,  $$\mathbb E\bigl[\sup_x |\alpha_{\sf ML}(x) - \alpha(x;\mathcal{D})|^2\bigr]\leq C\varepsilon^2$$ for a sample complexity of $O(\varepsilon^{-2}|\log\varepsilon|^3)$.
\end{theorem}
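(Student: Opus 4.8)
The plan is to follow the classical MLMC error decomposition, but carried out uniformly in $x$ and for the gradient as well. First I would write $\alpha_{\sf ML}(x) - \alpha(x;\mathcal{D}) = \bigl(\alpha_{\sf ML}(x) - \mathbb E[\alpha_{\sf ML}(x)]\bigr) + \bigl(\mathbb E[\alpha_{\sf ML}(x)] - \alpha(x;\mathcal{D})\bigr)$, using that the telescoping construction makes the increments unbiased, so that $\mathbb E[\alpha_{\sf ML}(x)] = \mathbb E[\alpha_{N_L,M_L}(x)]$ and the only bias comes from the top level. The deterministic bias $\sup_x|\mathbb E[\alpha_{N_L,M_L}(x)] - \alpha(x;\mathcal{D})|$ I would bound, via $|\max g_{M_L} - \max\bar g|\le\sup_{x_1}|g_{M_L}-\bar g|$ with $g_M(x_1)=\tfrac1M\sum_j r(f^j(x_1))$ and $\bar g=\mathbb E r(f(\cdot))$, by $\sup_x\mathbb E\bigl[\sup_{x_1}|g_{M_L}(x_1)-\bar g(x_1)|\bigr]$, which decays like $M_L^{-1/2}$ and hence contributes the $C/M_L$ term after squaring. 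This pins $M_L\asymp\varepsilon^{-2}$ and $L=O(|\log\varepsilon|)$.

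For the centred part, I would exploit that the level increments $\Delta\alpha_{N_l,M_l}(\cdot)-\mathbb E[\Delta\alpha_{N_l,M_l}(\cdot)]$ are mutually independent, each an average of $N_l$ i.i.d.\ terms. Cauchy--Schwarz across the $L+1$ levels produces the $(L+1)$ prefactor and reduces the task to bounding each level separately. At level $0$ the pointwise variance is $V_0/N_0$; for $l\ge1$ the subsampling construction together with Assumption~\ref{ass:IncVar} gives incremental variance $O(M_l^{-1})$, hence pointwise fluctuation $O(1/(N_lM_l))$, exactly the summand in the stated bound.

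The step I expect to be the main obstacle is upgrading these pointwise-in-$x$ second-moment bounds to the uniform quantity $\mathbb E[\sup_x(\cdot)]$, and doing the same for the gradient. Here I would use that, under the SAA reparametrisation with a frozen sample, each $\alpha_{N_l,M_l}(\cdot)$ is a deterministic smooth function of $x$ whose derivatives inherit the i.i.d.\ level structure, so Assumption~\ref{ass:IncVar} applies to $\nabla$ as well; a Sobolev embedding on the compact domain $\mathcal{X}$ then converts the $x$-integrated second moments of the increment and its gradient into a bound on $\mathbb E[\sup_x|\cdot|^2]$, yielding both displayed inequalities with the common right-hand side. Controlling the gradient uniformly is the delicate point, since it costs one further derivative of regularity, which I would supply from the smoothness of the GP kernel.

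For the complexity claim I would substitute the prescribed $N_l$ and $L$ into the bound. With $M_l=2^l$ one checks $V_0/N_0=(V_0M_0)^{1/2}\varepsilon^2/K_L$ and $1/(N_lM_l)=\varepsilon^2/K_L$ for $l\ge1$, so the bracket telescopes to $\varepsilon^2$ once $K_L=(V_0M_0)^{1/2}+L$ is used; absorbing the residual $(L+1)$ factor into the effective target accuracy costs one extra logarithm. Since the cost $\sum_l N_l(M_l+1)\asymp\varepsilon^{-2}K_L^2$, this rescaling together with $L=O(|\log\varepsilon|)$ gives the stated sample complexity $O(\varepsilon^{-2}|\log\varepsilon|^3)$, consistent with the borderline $\beta=\gamma=1$ regime of Proposition~\ref{prop:mlmc} augmented by the uniform-in-$x$ penalty.
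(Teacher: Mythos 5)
Your proposal follows essentially the same route as the paper's proof: the same bias--variance split, the same use of unbiased telescoping so that only the top level contributes bias (bounded by $C/M_L$ via the first half of Assumption~\ref{ass:IncVar}), the same Cauchy--Schwarz across the $L+1$ levels to produce the $(L+1)$ prefactor, the same per-level bounds $V_0/N_0$ and $O(1/(N_lM_l))$, and the same final bookkeeping --- including your correct observation that the parameters \eqref{eq:params} leave a residual $(L+1)$ factor that must be absorbed at the cost of one extra logarithm, which is precisely how the paper handles it (its Lagrange-multiplier cost minimization, Lemma~\ref{lem:costmin}, is run with the variance constraint $\varepsilon^2/(L+1)$, yielding $N_l$ inflated by $(L+1)$ relative to \eqref{eq:params}). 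The one point of divergence is the uniform-in-$x$ control, which you flag as the main obstacle and propose to resolve by Sobolev embedding. The paper does not do this and does not need to: Assumption~\ref{ass:IncVar} is stated with the supremum \emph{inside} the expectation, for both $\Delta_M$ and $\nabla\Delta_M$, so uniformity is hypothesis rather than something to be proven; the paper then propagates it through the outer average of $N_l$ i.i.d.\ increments in Lemma~\ref{lem:IoIRate} and Lemma~\ref{lem:cross}. Your Sobolev detour is a legitimate alternative justification for retaining the $1/N_l$ gain inside the supremum (a step that Lemma~\ref{lem:IoIRate} treats rather informally, writing the interchange as an equality), but two caveats apply: (i) $H^1\hookrightarrow L^\infty$ holds only in dimension $d=1$; for $d\ge 2$ you would need moment bounds on $k>d/2$ derivatives (or $W^{1,p}$ with $p>d$), which goes beyond what Assumption~\ref{ass:IncVar} supplies; and (ii) given the theorem's hypotheses, the detour is unnecessary --- citing the assumption together with the independence of the level increments, as the paper does, suffices.
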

The proof of Theorem \ref{thm:qFunc} is in Appendix \ref{sec:ProofqFunc}.
Appendix \ref{app:value_max} provides corollaries for the value function and the maximizer. 
{Here, we consider the inner $\sup$ with a loose bound; up to logarithmic terms, we achieve the same rate as a standard MC stimulation (without nested summations and maximizations).}

\subsection{Main Result}\label{sec:main}

We now establish the main MLMC complexity result, which requires 
yet another additional Assumption \ref{ass:qImpOpt},
given in Appendix \ref{app:mlmc}.
This final assumption essentially states that the strong rate of 
convergence of increments of the acquisition function $\Delta \alpha_{N_l,M_l}$
implies the corresponding rate for the maximizer $\Delta z_{N_l,M_l}$.
As indicated earlier, we expect the result to hold for the maximizer, 
however the proof of this is deferred to future work.

{\begin{theorem}\label{thm:main}
Suppose Assumptions~\ref{assumptions}, \ref{ass:IncVar} and \ref{ass:qImpOpt} hold.  
Let $M_l = 2^l$.
    Then one can choose $L,\{N_l\}_{l=0}^L$, as in Theorem \ref{thm:qFunc},
    such that
\begin{equation*}
    \bbE [\| x^*_{\sf ML} - x^*\|^2] \leq C \varepsilon^2 \, ,
\end{equation*}
for a complexity of $O(\varepsilon^{-2}(\log\varepsilon)^2)$.
\end{theorem}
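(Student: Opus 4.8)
The plan is to treat $x^*_{\sf ML}$ as an ordinary multilevel estimator of the limiting maximiser $x^*$ and to verify the three hypotheses of Proposition~\ref{prop:mlmc} at the level of the \emph{maximiser} increments $\Delta z_{N_l,M_l}$, with the requisite rates imported from the acquisition function via Assumption~\ref{ass:qImpOpt}. First I would split the error into bias and variance,
\[
\bbE[\|x^*_{\sf ML} - x^*\|^2] = \|\bbE[x^*_{\sf ML}] - x^*\|^2 + \sum_{l=0}^L \mathsf{Var}(\Delta z_{N_l,M_l}),
\]
where the variance already decouples across levels because the increments are mutually independent (as noted after \eqref{eq:MLcoarse}). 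This decoupling is the whole point: in contrast to the $\sup_x$ estimate of Theorem~\ref{thm:qFunc}, there is no Cauchy--Schwarz-over-levels step, so the spurious factor $(L+1)$ does not appear and we can hope to recover the clean borderline MLMC rate rather than the $|\log\varepsilon|^3$ of Theorem~\ref{thm:qFunc}.

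Next I would bound each variance term. Assumption~\ref{ass:qImpOpt} is essentially the statement that the strong (mean-square) rate of the acquisition-function increment transfers to the maximiser increment, giving a bound of the form $\mathsf{Var}(\Delta z_{N_l,M_l}) \le C\,\bbE[\sup_x |\nabla \Delta\alpha_{N_l,M_l}(x)|^2]$; feeding in the single-level gradient estimate established inside the proof of Theorem~\ref{thm:qFunc} (a \emph{single} level carries no $(L+1)$) yields $\mathsf{Var}(\Delta z_{N_l,M_l}) \le C/(N_l M_l)$ for $l \ge 1$ and $\le C V_0/N_0$ for $l=0$. In the notation of Proposition~\ref{prop:mlmc} this is an incremental variance decaying like $M_l^{-\beta}$ with $\beta = 1$, while the availability of a super-linear solver fixes the cost exponent $\gamma = 1$.

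For the bias I would combine the inner-MC bias of the nested expectation with the locally quadratic behaviour of $\alpha(\cdot;\mathcal{D})$ at $x^*$ (part of Assumption~\ref{assumptions}) to obtain a weak rate $\|\bbE[x^*_{\sf ML}] - x^*\| \le C M_L^{-s}$ with $s = 1$; the choice $L = O(|\log\varepsilon|)$ from Theorem~\ref{thm:qFunc} then makes this at most $C\varepsilon$. With $\beta = \gamma = 1$ and $s = 1 \ge \tfrac12\min\{\beta,\gamma\}$, substituting the parameters \eqref{eq:params} into the variance sum and balancing it against the squared bias is precisely the cost minimisation behind Proposition~\ref{prop:mlmc} (Lemma~\ref{lem:costmin}) in the borderline regime, which delivers $\bbE[\|x^*_{\sf ML} - x^*\|^2] \le C\varepsilon^2$ at total cost $\sum_{l=0}^L N_l(M_l+1) = O(\varepsilon^{-2}(\log\varepsilon)^2)$.

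The hard part will be the bias, precisely because the maximiser increments are \emph{not} unbiased: the telescoping $\sum_l(\bbE[z_{N_l,M_l}] - \bbE[z_{N_l,M_{l-1}}])$ does not collapse to a single finest-level expectation, since each term carries an $N_l$-dependent SAA bias of order $N_l^{-1/2}$ (Proposition~\ref{thm:SAAoptimiser}) that does not obviously cancel under summation. Controlling these residual SAA corrections --- rather than the comparatively routine variance bookkeeping --- is the genuine obstacle, and it is exactly what Assumption~\ref{ass:qImpOpt} is engineered to absorb; a fully self-contained treatment of the biased SAA increments is the part the paper explicitly defers to future work.
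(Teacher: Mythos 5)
Your variance half is sound and, modulo phrasing, is exactly what the paper does: since $\Delta z_{\infty,M_l} := z_{\infty,M_l}-z_{\infty,M_{l-1}}$ (maximizer increments with \emph{exact} outer expectation) are deterministic, one has $\mathsf{Var}(\Delta z_{N_l,M_l}) \leq \bbE[\|\Delta z_{N_l,M_l}-\Delta z_{\infty,M_l}\|^2]$, and Assumption~\ref{ass:qImpOpt} --- whose premise is supplied by Lemma~\ref{lem:cross} together with the unbiasedness $\bbE[\Delta\alpha_{N_l,M_l}(x)]=\Delta\alpha_{\infty,M_l}(x)$ --- gives the per-level bound $C/(N_lM_l)$; the parameter choice and cost count are then Lemma~\ref{lem:costmin}. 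The genuine gap is your bias step. You assert $\|\bbE[x^*_{\sf ML}]-x^*\|\leq CM_L^{-s}$ with $s=1$ ``from the inner-MC bias plus quadratic growth,'' but no such bound is derivable by telescoping, precisely for the reason you state two sentences later: $\bbE[z_{N_l,M_{l-1}}]\neq\bbE[z_{N_{l-1},M_{l-1}}]$ (an argmax, unlike a sample average, carries an $N$-dependent bias), so $\bbE[x^*_{\sf ML}]$ does not collapse to any finest-level quantity. Your proposal thus asserts the conclusion of the hard step and then retracts it, leaving the bias of the estimator uncontrolled; Assumption~\ref{ass:qImpOpt} does not bound the bias of $x^*_{\sf ML}$ per se, and summing the per-level Jensen bounds $\|\bbE[\Delta z_{N_l,M_l}-\Delta z_{\infty,M_l}]\|\leq (C/(N_lM_l))^{1/2}$ under the parameter choices \eqref{eq:params} gives $\approx\sqrt{L}\,\varepsilon$, i.e.\ a squared bias of order $L\varepsilon^2$, which already spoils the clean $C\varepsilon^2$ you are after.

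The paper's proof \emph{bypasses} the bias rather than absorbing it: instead of a mean/variance decomposition, it inserts the deterministic telescope $\sum_{l=0}^{L}(z_{\infty,M_l}-z_{\infty,M_{l-1}})=z_{\infty,M_L}$ --- which \emph{does} collapse, because these maximizers involve no outer sample --- and writes
\begin{equation*}
x^*_{\sf ML}-x^* \;=\; \sum_{l=0}^{L}\bigl(\Delta z_{N_l,M_l}-\Delta z_{\infty,M_l}\bigr) \;+\; \bigl(z_{\infty,M_L}-x^*\bigr) \, .
\end{equation*}
Each summand is controlled directly in mean square by Assumption~\ref{ass:qImpOpt} at rate $1/(N_lM_l)$, which handles that increment's bias and fluctuation simultaneously, so no separate analysis of $\bbE[x^*_{\sf ML}]$ is ever needed; the single remaining term is a pure inner-SAA discrepancy satisfying $\bbE[\|z_{\infty,M_L}-x^*\|^2]\leq C/M_L$ (via Assumption~\ref{ass:IncVar} and the quadratic-growth argument as in Corollary~\ref{cor:argmax} --- note only the rate $M_L^{-1/2}$ in norm is needed, not your $s=1$), which is $O(\varepsilon^2)$ once $L=O(|\log\varepsilon|)$. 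This insertion of the $z_{\infty,M_l}$ comparators is the missing idea in your write-up; with it, the theorem follows from ingredients you already have, whereas without it your bias term cannot be closed (indeed, even the paper's own handling of the cross-level sum treats the Cauchy--Schwarz factor over levels generously, which is part of why the full theory for \eqref{eq:mlmcest} is flagged as future work).
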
}

The proof is given in Appendix \ref{sec:ProofOptimizer}. We verify $\beta=1$ numerically with the difference of optimizer in Appendix \ref{sec:NumMLMCRates}.
This corresponds to the borderline scenario of Proposition~\ref{prop:mlmc} with $\beta=\gamma=1$, which still yields an improvement compared with the standard MC. We have the following remark, which can further improve the computational complexity.

\begin{rem}\label{rem:Ant}
    The antithetic construction \eqref{eq:Antcoarse} gives $\beta \approx 1.5$ leading to an overall canonical computational complexity $O(\varepsilon^{-2})$. This is similar to the work by \cite{giles2019decision}, but here, we are interested in constructing a multilevel estimator of the optimizer. The rates are numerically verified with the 1D toy example \ref{eq:toy}. See Appendix \ref{sec:NumMLMCRates} for the verification of $\beta\approx1.5$ and Figure \ref{fig:1DMSE} for the complexity.
\end{rem}

\section{Numerical Results}
\label{sec:num}
\begin{figure}[htb]
    \centering
    \vspace{-0.2cm}
    \begin{subfigure}{0.495\linewidth}
    \includegraphics[width=1.18\linewidth]{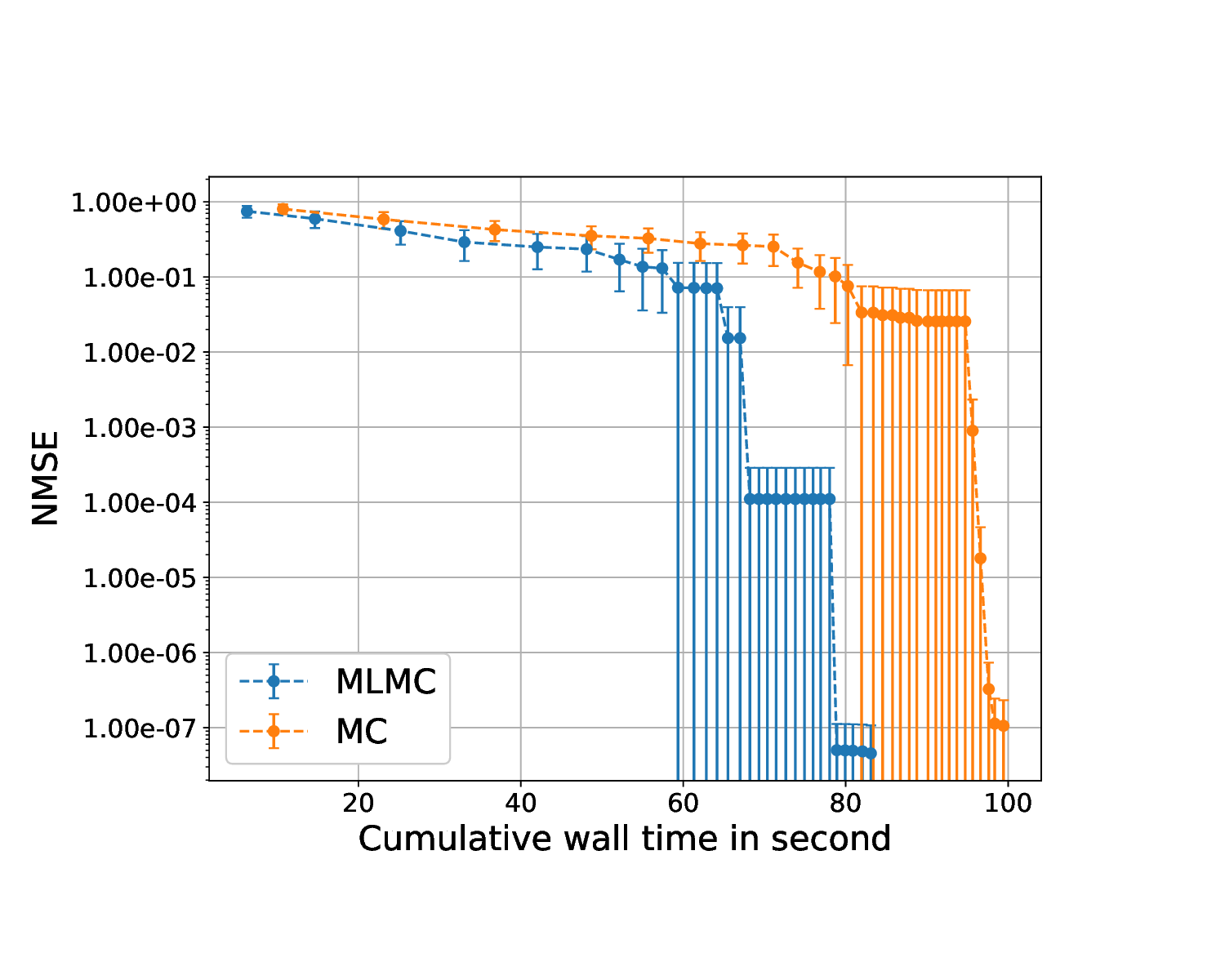}
    \vspace{-1.2cm}
    \captionsetup{width=1.18\linewidth}
    \caption{1D Toy Example (d=1)}
    \label{fig:1DToyBO}
    \end{subfigure}
    \begin{subfigure}{0.495\linewidth}
    \includegraphics[width=1.18\linewidth]{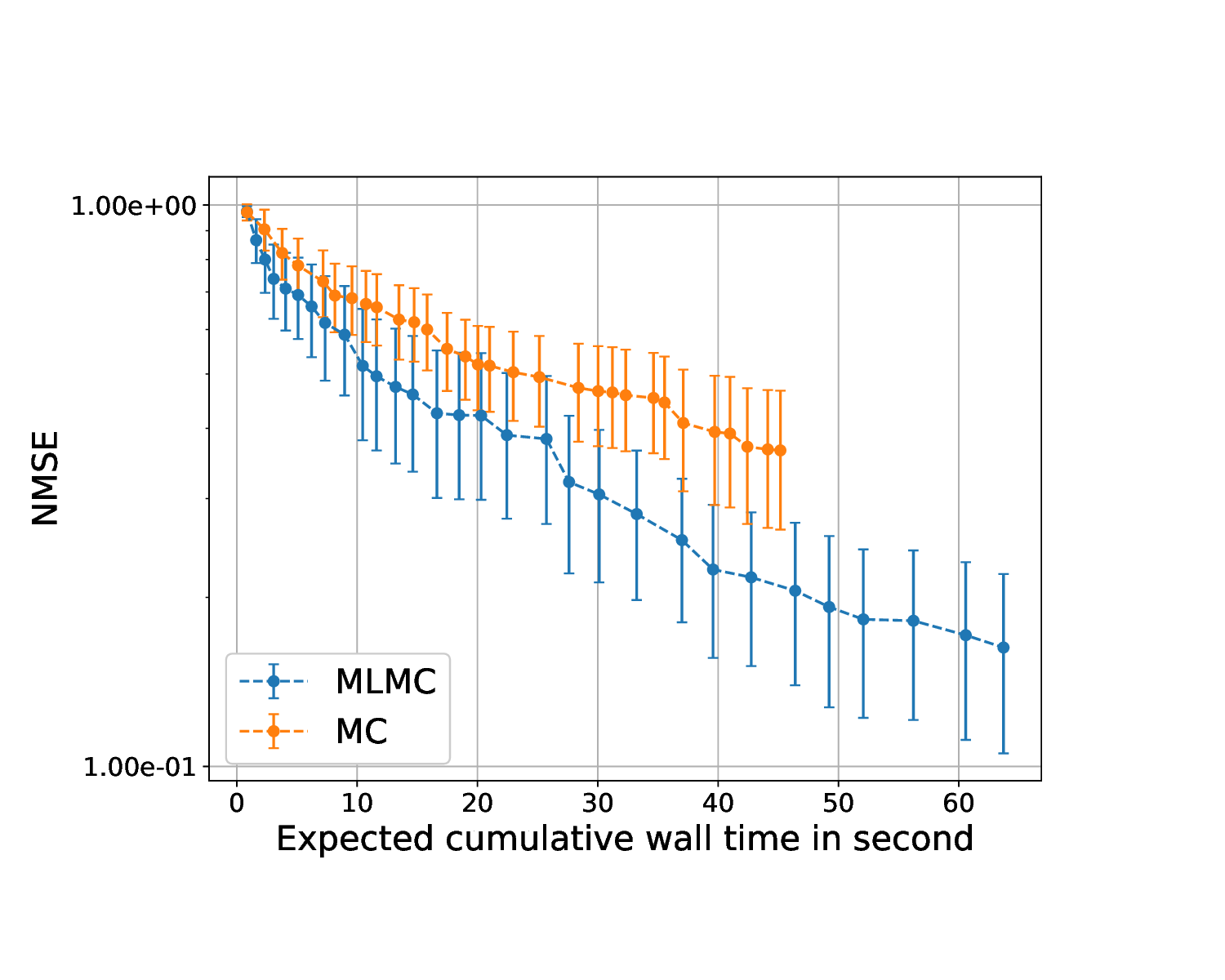}
    \vspace{-1.2cm}
    \captionsetup{width=1.18\linewidth}
    \caption{Ackley (d=2)}
    \label{fig:Ackley2}
    \end{subfigure}

    \begin{subfigure}{0.495\linewidth}
    \includegraphics[width=1.18\linewidth]{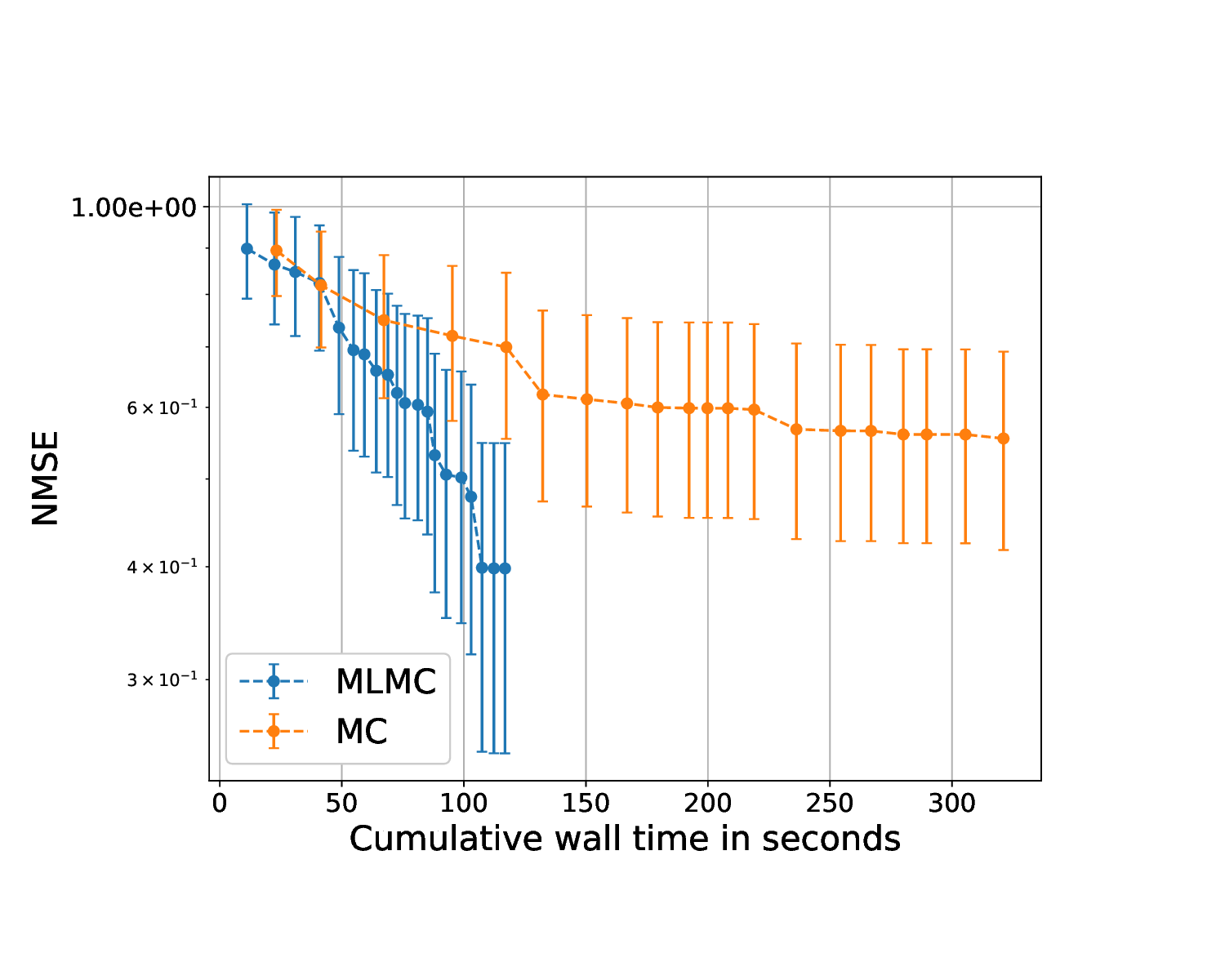}
    \vspace{-1.2cm}
    \captionsetup{width=1.18\linewidth}
    \caption{DropWave (d=2)}
    \label{fig:DropWave}
    \end{subfigure}
    \begin{subfigure}{0.495\linewidth}
    \includegraphics[width=1.18\linewidth]{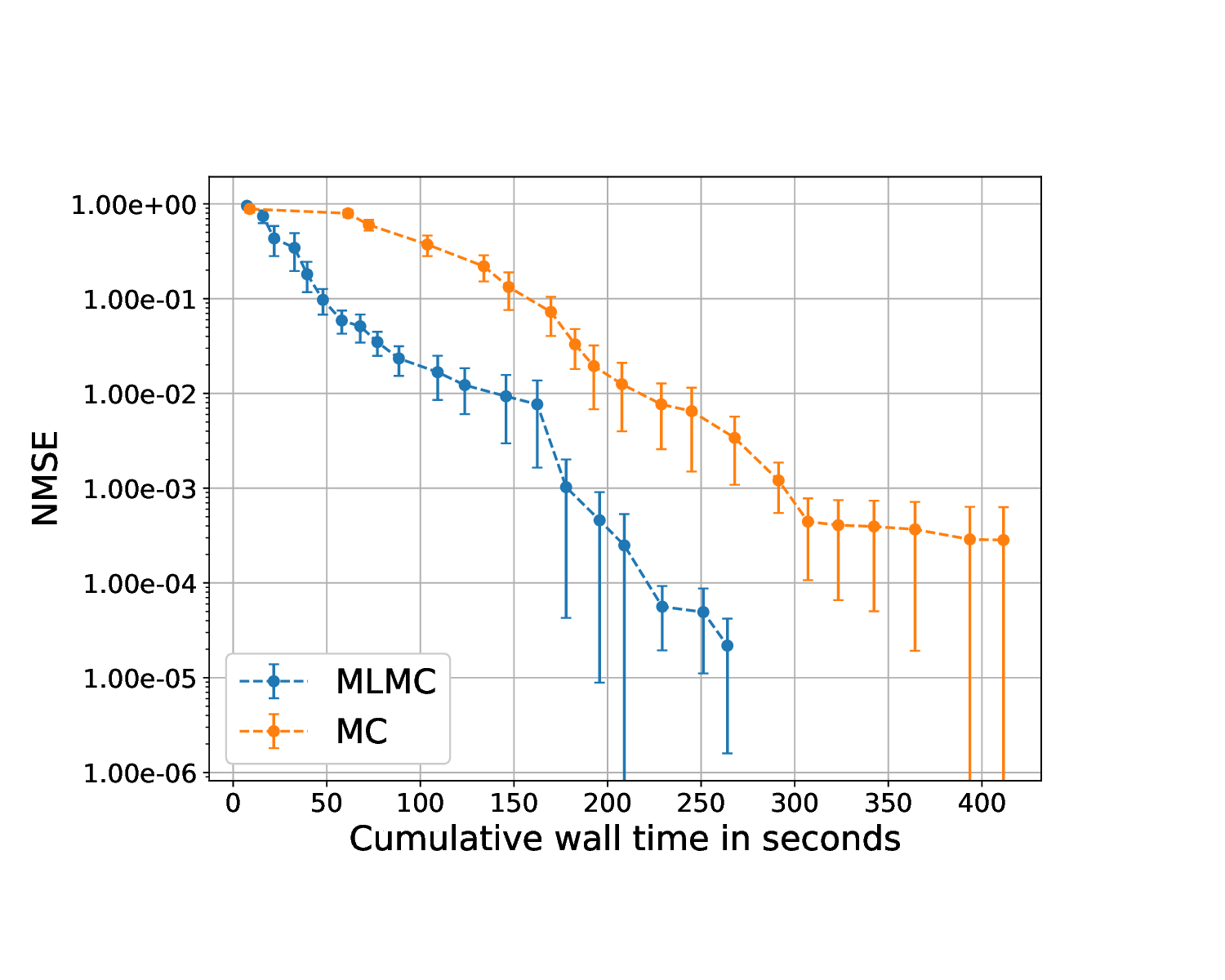}
    \vspace{-1.2cm}
    \captionsetup{width=1.18\linewidth}
    \caption{Branin (d=2)}
    \label{fig:Branin}
    \end{subfigure}

    \begin{subfigure}{0.495\linewidth}
    \includegraphics[width=1.18\linewidth]{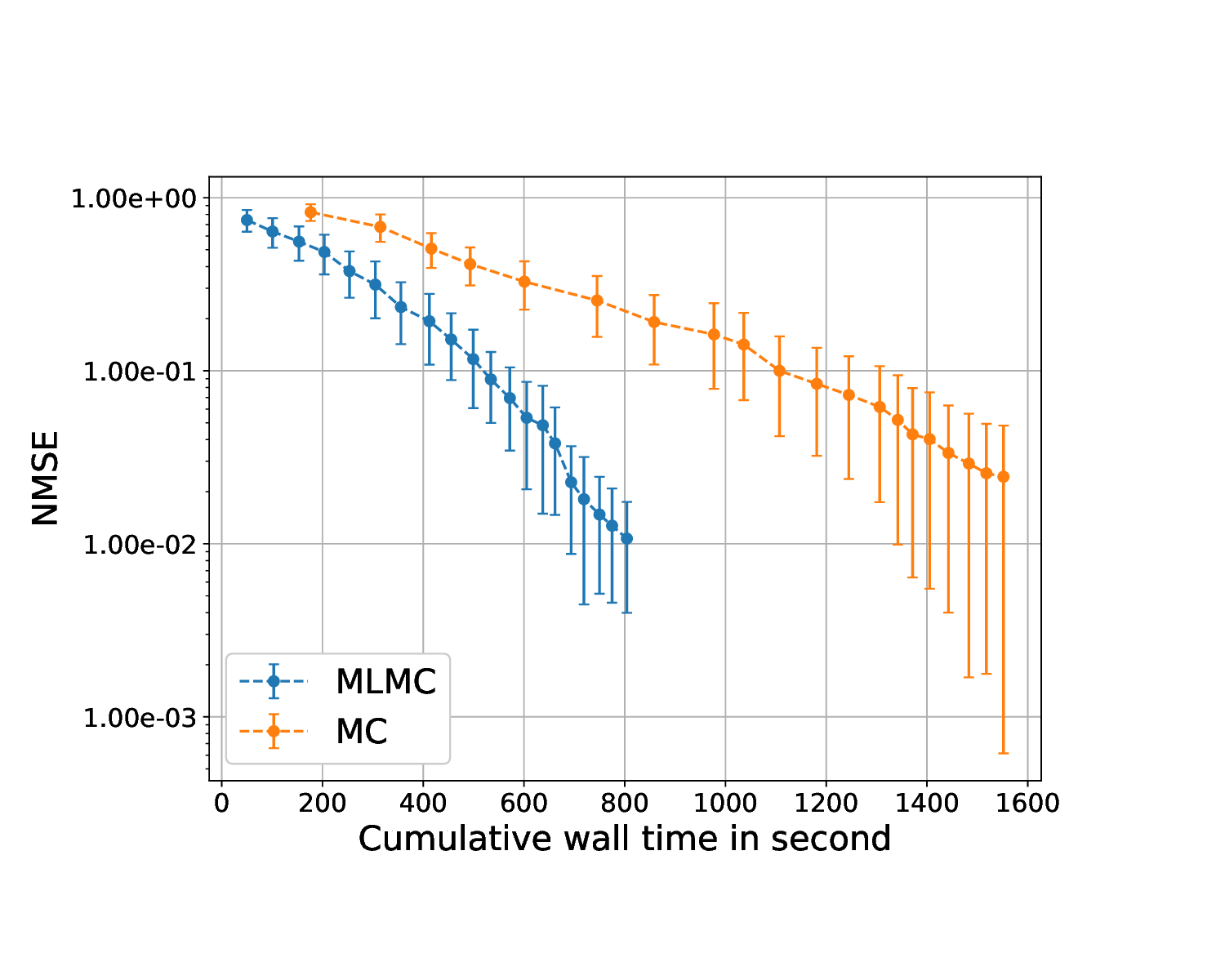}
    \vspace{-1.2cm}
    \captionsetup{width=1.18\linewidth}
    \caption{Hartmann6 (d=6)}
    \label{fig:Hartmann}
    \vspace{-0.3cm}
    \end{subfigure}
    \begin{subfigure}{0.495\linewidth}
    \includegraphics[width=1.18\linewidth]{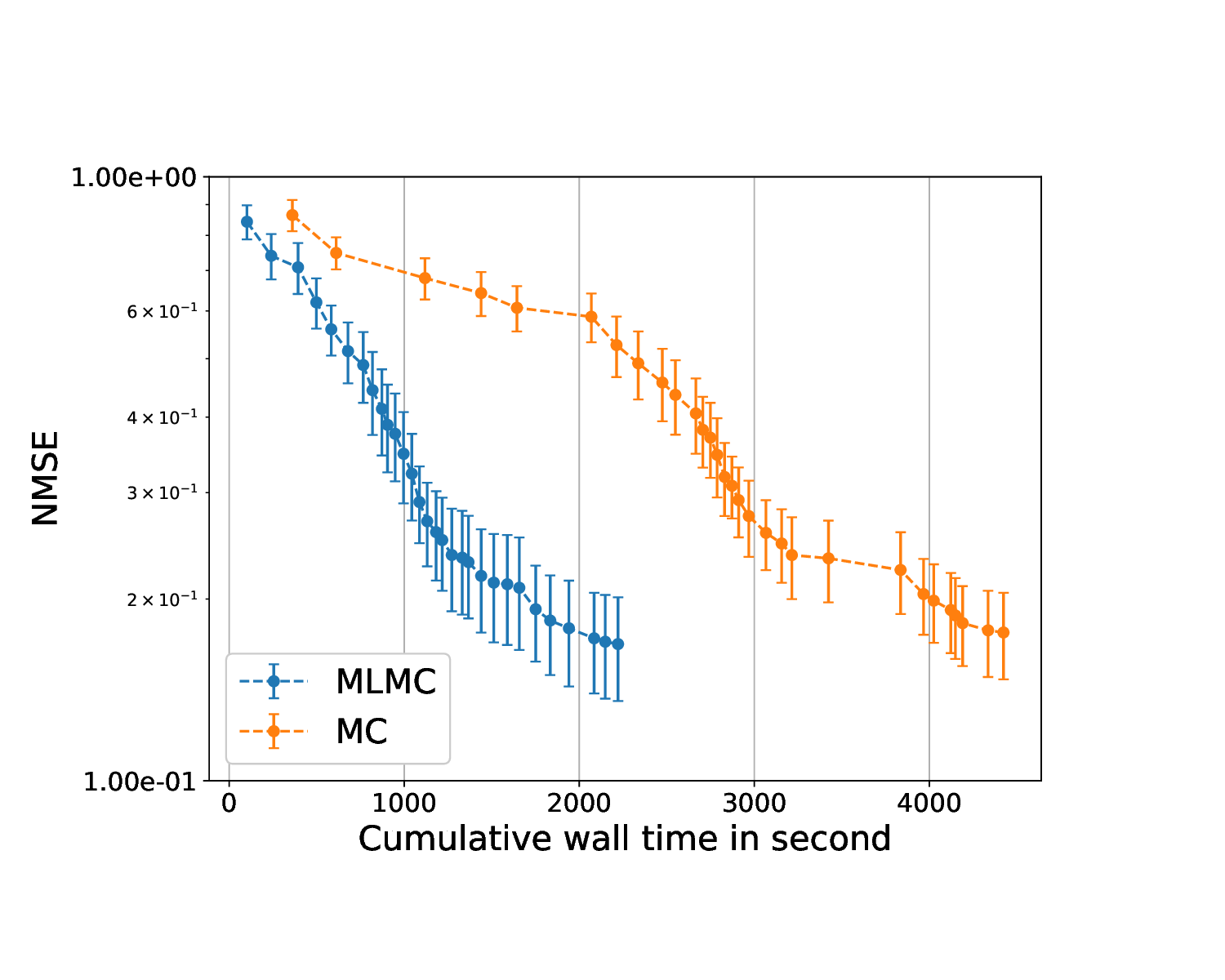}
    \vspace{-1.2cm}
    \captionsetup{width=1.18\linewidth}
    \caption{Cosine8 (d=8)}
    \label{fig:Cosine}
    \vspace{-0.3cm}
    \end{subfigure}
    \caption{Convergence of the BO algorithm with respect to the cumulative wall time in seconds, with error bars (computed with 20 realizations). The Mat{\'e}rn kernel is applied. The initial BO run starts with $2\times d$ observations.}
    \label{fig:fullBOMLqEI}
\end{figure} 
\begin{table*}[t!]
\centering
\resizebox{\textwidth}{!}{\begin{tabular}{c c c c c c c c c c c c c } 
\hline
Function Name & & PI & EI & UCB & PES & GLASSES & R-4-9 & R-4-10 & R-5-9 & R-5-10 & Ours \\
\hline
\multirow{2}{7em}{Branin-Hoo} & \multicolumn{1}{l}{Mean} & 0.847 & 0.818 & 0.848 & 0.861 & 0.846 & 0.904 & 0.898 & 0.887 & 0.903 & \textbf{0.992}\\
& \multicolumn{1}{l}{Median} & 0.922 & 0.909 & 0.910 & 0.983 & 0.909 & 0.959 & 0.943 & 0.921 & 0.950 & \textbf{0.997}\\
\hline

\multirow{2}{7em}{Goldstein-Price} & \multicolumn{1}{l}{Mean} & 0.873 & 0.866 & 0.733 & 0.819 & 0.782 & 0.895 & 0.784 & 0.861 & 0.743 & \textbf{0.987}\\ 
& \multicolumn{1}{l}{Median} & 0.983 & 0.981 & 0.899 & 0.987 & 0.919 & 0.991 & 0.985 & 0.989 & 0.928 & \textbf{0.998}\\ 
\hline

\multirow{2}{7em}{Griewank} & \multicolumn{1}{l}{Mean} & 0.827 & 0.884 & 0.913 & 0.972 & 1.0\footnotemark{}
& 0.882 & 0.885 & 0.930 & 0.867 & \textbf{0.973}\\
& \multicolumn{1}{l}{Median} & 0.904 & 0.953 & 0.970 & \textbf{0.987} & 1.0\footnotemark[\value{footnote}]
& 0.967 & 0.962 & 0.960 & 0.954 & 0.975\\
\hline

\multirow{2}{7em}{6-hump Camel} & \multicolumn{1}{l}{Mean} & 0.850 & 0.887 & 0.817 & 0.664 & 0.776 & 0.860 & 0.825 & 0.793 & 0.803 & \textbf{0.942}\\
& \multicolumn{1}{l}{Median} & 0.893 & 0.970 & 0.915 & 0.801 & 0.941 & 0.926 & 0.900 & 0.941 & 0.907 & \textbf{0.979}\\
\hline
\end{tabular}}
\caption{[Reproduced with permission from \cite{lam2016bayesian}]. 
Performance of our 2-step look-ahead acquisition function (1EI + 2EI) on benchmark functions. 
The rollout algorithms of \cite{lam2016bayesian} are denoted 
R-.... The other algorithms are commonly used and all are myopic except GLASSES. 
The error metric is GAP $= (g(x_0) - \max_{\mathcal{D}_k}g(x))/(g(x_0) - g(x^*))$ after a fixed budget of 
$k=15$ function evaluations, 
and statistics are from $40$ realizations randomly initialized with a single function evaluation.} 
\label{tab:ComparisonLam}
\end{table*}

{In this section, we run the whole BO algorithm on various synthetic functions from BoTorch \cite{balandat2020botorch} and show the benefits of using MLMC \eqref{eq:mlmcest} with the antithetic approach over standard MC \eqref{eq:OneStepNestedMC} in AF approximation. We apply the normalized MSE, defined as
\begin{equation}\label{eq:NMSE}
    \text{NMSE} = \mathbb E\left[\frac{\|\max_{\mathcal{D}_k}g(x)-g(x^*)\|^2}{\|\max_{\mathcal{D}_0}g(x)-g(x^*)\|^2}\right],
\end{equation}
where $\mathcal{D}_0$ and $\mathcal{D}_k$ are the initial observation set and the observation set after $k$ iterations, as the error metric. It measures the algorithm's expected relative improvement over the initial values. It normalizes the value to $[0,1]$, which can make the performance on different problems with different scales of domain comparable and remains the decaying feature of error.  
The AF we applied is the two-step look-ahead 1-EI + 2-EI defined in \eqref{eq:1EIqEI}. According to Figure \ref{fig:fullBOMLqEI}, in expectation, compared with the MC method, the MLMC method can achieve the same NMSE with less computational cost to find the optimum of test functions. 
{We primarily focus on the method from Section \ref{policy_version} as our initial study. {Comparisons between the multilevel optimizer method from Section \ref{policy_version} and the multilevel function method from Section \ref{sec:qfunc_approach} are performed in Section \ref{app:thm1}. }

{The number of samples in MC is $N = M = 1/\varepsilon^2$. The number of samples in MLMC for $N_l$ and $M_l$, related to $\varepsilon$, is chosen according to the formula in Theorem \ref{thm:qFunc}. For comparison, the $\varepsilon$ is set to be the same for any given problem. Here, we use $\varepsilon=0.2$ for Figure \ref{fig:1DToyBO}, \ref{fig:Ackley2}, \ref{fig:DropWave}, \ref{fig:Branin} and \ref{fig:Cosine}, and $\varepsilon=0.15$ for Figure \ref{fig:Hartmann}.} It is useful to note that MLMC requires balancing the "variance" and "bias" to achieve the canonical rate. This requires careful tuning in selecting the finest level and the number of samples at each level. From the previously derived formulas, the level and the number of samples depend on the variance of the increments. In practice, since we need to approximate the AF interactively, it is not realistic nor efficient to compute the variance every time before we evaluate the functions. A practical way to deal with that is to use $C2^{-\beta l}$, which is due to the proportionality of the variance, with the constant $C$ empirically. It turns out that the worst situation is that MLMC degenerates to MC, and we have an MC rate of convergence in a single AF approximation. Overall, MLMC with this pragmatic choice performs much better in practice.}

{
Because L-BFGS is a local optimization method,
we need to run it with multiple initial conditions to get an approximation of the global maximizer.
An additional complication in the present context is that we 
need to match the coarse and fine maximizers 
at each level $l$ in order for \eqref{eq:mlmcest} to deliver a good approximation.
The simplest way to achieve this is by matching all levels with the best one from level $0$ (or~$L$), 
which is the strategy adopted in the present work.
However, 
we want the estimator \eqref{eq:mlmcest} which delivers the maximum value, 
rather than being anchored to any individual level.
This can be achieved with backward (or forward) matching, 
 where we essentially prune a tree whose branches start 
 at level~0 (or~L) and branch forward (or backward) in levels.
This approach is more robust but also slightly more complicated and costly, 
and we have found that the former strategy is satisfactory in practice.
The approaches are illustrated in Figures \ref{fig:PointMatching} and \ref{fig:BackwardMatching} 
in Appendix~\ref{sec:Matching}.
A global strategy which explores all combinations of the local optimizers of the 
$2L$ intermediate problems would incur a non-trivial cost of $O(\varepsilon^{-2})$.
}

In Table~\ref{tab:ComparisonLam} we replicate Table 2 of \cite{lam2016bayesian} and include a comparison with our 2-step look-ahead AF (1EI + 2EI) computed using MLMC.  We observe that the 2-step look-ahead AF (1EI + 2EI) is the best in 7 out of 8 cases and the second best for the median of 
the Griewank function.
\cite{wu2019practical, jiang2020binoculars, jiang2020efficient} 
provide further comparisons of look-ahead methods
with myopic competitors, illustrating the advantage of look-ahead in general. 
Further discussion is provided in Appendix~\ref{app:Compare2OPT}.

{Numerics related to 2-step look-ahead 2-EI \eqref{eq:1qEI} are performed in Appendix~\ref{sec:MLMCReal2EI}.}

\footnotetext{According to \cite{lam2016bayesian}, the GAP = 1 results of GLASSES arise from an arbitrary choice by one optimizer to evaluate the
origin, which coincides with the minimizer of the Griewank function. Those results are excluded from the analysis.}

\section{Future Directions}\label{app:conc}

{Evaluating multi-step look-ahead acquisition functions is computationally expensive. In this work we introduced MLMC into this area and showed the benefits of using MLMC over MC. Further theoretical analysis remains to be carried out. This paper mainly focuses on the two- and three-step look-ahead settings, but it can be extended to additional steps.  
MLMC may not be able to achieve the canonical rate with 
an increasing number of look-ahead step due to the curse of dimensionality, which leads to a violation of regularity assumptions. Under mixed regularity assumptions, 
this can be solved by using multi-index Monte Carlo (MIMC) \cite{haji2016multi, jasra2023multi}, which works by considering increments of increments (of increments ...) over a grid of levels/indices instead of a simple increment at each level. 
In this case, the regularity conditions required for canonical complexity are only required on each nested MC individually. 

Randomized MLMC \cite{rhee2012new,rhee2015unbiased} and randomized MIMC can eliminate the bias and 
costly tuning ordinarily required to balance with variance,
at the cost of a potentially larger constant 
\cite{liang2023randomized}. 
It may be possible to further reduce computational complexity to MSE$^{-1/2}$ or $O(\varepsilon^{-1})$ for small $\varepsilon > 0$ by introducing multilevel quasi-Monte Carlo or multi-index quasi-Monte Carlo developed based on MLMC or MIMC and quasi-Monte Carlo \cite{caflisch1998monte, l2009monte}, which generates a sequence of low discrepancy samples for variance reduction. 
However, the improved complexity relies on the smoothness of the integrand, and performance degenerates to that of standard MC, $O(\varepsilon^{-2})$, in the non-smooth case.}

\section*{Impact Statement}

This paper presents work whose goal is to advance the field of Machine Learning. There are many potential societal consequences of our work, none which we feel must be specifically highlighted here.

\bibliography{refs}
\bibliographystyle{plain}

\newpage
\appendix
\onecolumn

\section{Acquisition Functions}\label{app:acquisition}

Some one-step look-ahead acquisition functions, such as expected improvement (EI), are analytically tractable, but the functions are generally intractable for the look-ahead steps greater than one. Monte Carlo (MC) estimation is a standard method for approximating the values. Using the re-parameterization trick, we assume the following 
\begin{align*}
    f(x, \xi;\mathcal D) &= N(\mu(x),\sigma^2(x)) = \mu(x) + \sigma(x)\xi, \quad \\
    f(x_1, \xi_1;\mathcal D_{1}(x; \xi)) &= N(\mu(x_1;x),\sigma^2(x_1;x)) = 
\mu(x_1;x) + \sigma(x_1;x)\xi_1, 
\end{align*}
where $\xi, \xi_1 \sim N(0,1)$, $\mu(\cdot;x)$ and $\sigma^2(\cdot;x)$ are the posterior mean and variance functions derived with $\mathcal D_{1}(x; \xi) = \{\mathcal D, (x, f(x,\xi;\mathcal{D}))\}$. 
{
{For simplicity, we do not carry on the reparameterization trick explicitly and keep using the notation $f(x; \mathcal{D})$ when the dependence is on $\xi$ and $f(x_1; \mathcal{D}_1(x))$ when the dependence is on $\xi$ and $\xi_1$. The superscript notations are used for realization driven by a random source. Specifically, the notation 
\begin{align*}
    f^i(x;\mathcal{D}) &= f(x,\xi^i;\mathcal{D}) \\
    f^{j}(x_1;\mathcal{D}_1(x)) &= f(x_1,\xi_1^j;\mathcal{D}_1(x;\xi))\\
    f^{ij}(x_1;\mathcal{D}_1(x)) &= f(x_1,\xi_1^j;\mathcal{D}_1(x;\xi^i))
\end{align*}
 for $i=1,2,...$ and $j=1,2,...$ is used to stand for a realization of the function. Here, $\xi^i$, $\xi_1^j$ and $\xi_1^{ij}$ are i.i.d. The data set follows the superscript notation as well.}

Even if the one-step AF is analytically tractable, we must 
approximate the two-step and three-step look-ahead AFs, which can be done with MC as follows
\begin{align}
    \alpha_{1,N}(x;\mathcal{D}) &= \mathbb E_{f(\cdot ; \mathcal D)} [ r(f,x) ] + 
 \frac{1}{N}\sum_{i=1}^{N} \max_{x_1^i}
\mathbb E_{f(\cdot ; 
\mathcal D_1^i(x))} 
 \left[ r( f,x_1^i )  \right] \, ,
\label{eq:OneStepMC}\\
\alpha_{2,N,M}(x;\mathcal{D}) &= \mathbb E_{f(\cdot ; \mathcal D)} [ r(f,x) ] + 
 \frac{1}{N}\sum_{i=1}^{N}\max_{x_1^i}\Bigg\{ 
\mathbb E_{f(\cdot ; 
\mathcal D_1^i(x))} 
 \left[ r( f,x_1^i )  \right] +  \frac{1}{M}\sum_{j=1}^{M} \max_{x_2^{ij}}
\mathbb E_{f(\cdot;\mathcal D_2^{ij}(x,x_1^i))} 
 [ r(f,x_2^{ij}) ]\Bigg\} \, ,
\label{eq:TwoStepMC}
\end{align}
where $N$ and $M$ are number of samples.

\subsection{Explicit 2- and 3-step functions}\label{sec:Exp12EI}

\textbf{Two-step look-ahead expected improvement} can be written as 
\begin{align}\label{eq:1EI}
\alpha_1(x; \mathcal{D}) :=
   \mathbb E_{f(\cdot ; \mathcal D)} \left[ r(f,x) 
+ 
 \max_{x_1} \mathbb E_{f(\cdot ; 
\mathcal D_1(x))} 
 \left[ r( f,x_1 )  \right] \right],
\end{align}
with $r(f,x) = (f(x;\mathcal{D})-f^*(\mathcal{D}))_+$, $r(f,x_1)=(f(x_1; \mathcal{D}_1(x)) - f^*(\mathcal{D}_1(x))_+$, where $f^*(\mathcal{D}_1(x)) = \max\{f^*(\mathcal{D}), f(x; \mathcal{D})\}$. The two-step look-ahead EI \eqref{eq:1EI} is analytically intractable, which can be approximated by referring to \eqref{eq:OneStepMC}.\\

\noindent
\textbf{Two-step look-ahead q-expected improvement} can be written as 
\begin{align}\label{eq:1qEI}
\alpha_1(x; \mathcal{D}) :=
   \mathbb E_{f(\cdot ; \mathcal D)} \left[ r(f,x) 
+ 
 \max_{x_1} \mathbb E_{f(\cdot ; 
\mathcal D_1(x))} 
 \left[ r( f,x_1 )  \right] \right],
\end{align}
with $r(f,x) = \max_{j=1,...,q}(f(x_j;\mathcal{D})-f^*(\mathcal{D}))_+$ and $r(f,x_1)=\max_{j=1,...,q}(f(x_{1,j}; \mathcal{D}_1(x)) - f^*(\mathcal{D}_1(x))_+$. The two-step look-ahead qEI \eqref{eq:1qEI} is analytically intractable and introduces a nested MC due to the qEI, which can be evaluated by referring to \eqref{eq:OneStepNestedMC}.

In the numerical section \ref{sec:num} of this article, we let the zero-step reward be EI and the two-step reward be qEI for convenience, i.e.
\begin{align}\label{eq:1EIqEI}
\alpha_1(x; \mathcal{D}) :=
   \mathbb E_{f(\cdot ; \mathcal D)} \left[ r_0(f,x) 
+ 
 \max_{x_1} \mathbb E_{f(\cdot ; 
\mathcal D_1(x))} 
 \left[ r_1( f,x_1 )  \right] \right],
\end{align}
with $r_0(f,x) = (f(x;\mathcal{D})-f^*(\mathcal{D}))_+$ and $r_1(f,x_1)=\max_{j=1,...,q}(f(x_{1,j}; \mathcal{D}_1(x)) - f^*(\mathcal{D}_1(x))_+$.\\

\noindent
\textbf{Three-step look-ahead expected improvement} can be written as
\begin{equation}\label{eq:2EI}
    \alpha_2(x; \mathcal{D}) :=
  \mathbb E_{f(\cdot ; \mathcal D)} \Bigg[ r(f,x) 
+  
\max_{x_1}  \mathbb E_{f(\cdot ; 
\mathcal D_1(x))} 
 \Big[ r( f,x_1)  +  
 \max_{x_2} \mathbb E_{f(\cdot;\mathcal D_2(x,x_1))} 
 [ r(f,x_2) ] 
   \Big ]
  \Bigg ]
\end{equation}
with $r(f,x) = (f(x;\mathcal{D})-f^*(\mathcal{D}))_+$, $r(f,x_1)=(f(x_{1}; \mathcal{D}_1(x)) - f^*(\mathcal{D}_1(x))_+$ and $r(f,x_2)=(f(x_2; \mathcal{D}_2(x,x_1)) - f^*(\mathcal{D}_2(x,x_1)))_+$, where $f^*(\mathcal{D}_2(x,x_1)) = \max\{f^*(\mathcal{D}_1(x)), f(x_1; \mathcal{D}_1(x))\}$. 
Referring to equation \eqref{eq:TwoStepMC}, we obtain the approximation of the Three-step look-ahead EI \eqref{eq:2EI}.

\section{Assumptions and discussion related to SAA}\label{app:saa}

The following is a set of sufficient conditions for basic SAA convergence results to hold,
which will be made throughout.

\begin{ass}\label{assumptions}{\ }
\begin{enumerate}
\item {\bf(Quadratic Growth)} For the optimum $x^*$, there exist a constant $K>0$ and an open neighborhood $V$ of $x^{*}$ such that for all $x \in \Omega\cap V$,
\begin{equation*}
  \alpha(x;\mathcal{D}) \leq    \alpha(x^*;\mathcal{D}) - K \|x^*-x\|^2.
\end{equation*}
    \item The input space $\mathcal{X}$ is compact, and  
    $\alpha(x;\mathcal{D}) \in C^1(U)$, where $U$ is a neighbourhood of $S^*$. 
    \item 
    $\mathbb{E}[\alpha(x;\mathcal{D})^2|\mathcal{D}],\  
    \mathbb{E}[\|\nabla_x \alpha(x';\mathcal{D})\|^2] < \infty$ for all $x \in \mathcal{X}$
    and $x' \in U$.
    \item Both $\alpha(\cdot;\mathcal{D})$ and $\nabla_x \alpha(\cdot;\mathcal{D})$ are Lipschitz continuous with Lipschitz constant $L(\xi)$ on the input space, which is finite in expectation.
\end{enumerate}
\end{ass}

Theorem 5.3 of \cite{shapiro2021lectures} implies that the limit point (in $N$) of any solution from $S^N$ lies in $S^*$, where $S^N$ is the set of solutions of the SAA \eqref{eq:SAA}, by assuming
\ref{assumptions}(3, 4).
If the function is convex, then a local optimizer is a global optimizer.
The rate of convergence of optimizers is based on more factors. 
We require an essential regularity condition called quadratic growth condition, defined in \ref{assumptions}(1).
Otherwise, if the function $r$ increases linearly near $x^*$, the optimizer may
converge at a faster rate \cite{kim2015guide} while the rate may be slower if it behaves like a higher-order polynomial. 

\section{Proof of Proposition \ref{prop:MCrate}}
\label{sec:proof1}

\begin{proof}
Recall $\alpha(x;\mathcal{D})$, \eqref{eq:OneStepMC} and \eqref{eq:OneStepNestedMC} as follows

\begin{align*}
\alpha(x; \mathcal{D}) &:=
   \mathbb E_{f(\cdot ; \mathcal D)} \left[ r(f,x) 
+ 
 \max_{x_1} \mathbb E_{f(\cdot ; 
\mathcal D_1(x))} 
 \left[ r( f,x_1 )  \right] \right]\\
\alpha_{N}(x;\mathcal{D}) &= \frac{1}{N}\sum_{i=1}^{N} \bigg[ r(f^i(x;\mathcal{D}))  + 
 \max_{x_1^i}
\mathbb E_{f(\cdot; 
\mathcal D_1^i(x))} 
 \left[ r( f,x_1^i )  \right] \bigg] \, \\
 \alpha_{N,M}(x;\mathcal{D}) &= \frac{1}{N}\sum_{i=1}^{N_l}
    \bigg[r(f^i(x;\mathcal{D})) + 
    \max_{x_1^i}\frac{1}{M}\sum_{j=1}^{M}
    r(f^{ij}(x_1^i;\mathcal{D}_1^i(x)))\bigg].
\end{align*}
We first prove equation \eqref{eq:nonas}. Note that 
\begin{align}
\bbE [| \alpha_{N,M}(x;\mathcal{D}) - \alpha(x;\mathcal{D})|^2] &\leq 2\bbE[|\alpha_{N}(x;\mathcal{D}) - \alpha(x;\mathcal{D})|^2] \label{eq:vb1} \\
&\quad + 2\bbE[|\alpha_{N,M}(x;\mathcal{D}) - \alpha_{N}(x;\mathcal{D})|^2],\label{eq:vb2}
\end{align}
due to $(a+b)^2 \leq 2a^2 + 2b^2$. We deal with the two terms \eqref{eq:vb1} and \eqref{eq:vb2} separately.

The first term \eqref{eq:vb1} is simply a standard MC error bounded by the following
\begin{align}
    \bbE[|\alpha_{N}(x;\mathcal{D}) - \alpha(x;\mathcal{D})|^2] &= \bbE\Bigg[\Bigg(\frac{1}{N}\sum_{i=1}^{N} \bigg[ r(f^i(x;\mathcal{D}))  + 
 \max_{x_1^i}
\mathbb E_{f(\cdot ; 
\mathcal D_1^i(x))} 
 \left[ r( f,x_1^i )  \right] \bigg]\\
 & \quad - \mathbb E_{f(\cdot ; \mathcal D)} \left[ r(f,x) 
+ 
 \max_{x_1} \mathbb E_{f(\cdot ; 
\mathcal D_1(x))} 
 \left[ r( f,x_1 )  \right] \right]\Bigg)^2\Bigg]\nonumber\\
    &= \frac{\mathsf{Var}\Big(r(f,x) 
+ 
 \max_{x_1} \mathbb E_{f(\cdot ; 
\mathcal D_1(x))} 
 \left[ r( f,x_1 )  \right]\Big)}{N}\nonumber\\
 &\leq \frac{\sigma_{\max}}{N}, \label{eq:bias}
\end{align}
where the last inequality is due to the bounded variance assumption.

For the second term \eqref{eq:vb2}, we have
\begin{align}
   \bbE [| \alpha_{N,M}(x;\mathcal{D}) - \alpha_{N}(x;\mathcal{D})|^2] & =\mathbb E\Bigg[\Bigg|\frac{1}{N}\sum_{i=1}^{N_l}
    \bigg[r(f^i(x;\mathcal{D})) + 
    \max_{x_1^i}\frac{1}{M}\sum_{j=1}^{M}
    r(f^{ij}(x_1^i;\mathcal{D}_1^i(x)))\bigg]\nonumber\\
&\quad -\frac{1}{N}\sum_{i=1}^{N} \bigg[ r(f^i(x;\mathcal{D}))  + 
 \max_{x_1^i}
\mathbb E_{f(\cdot ; 
\mathcal D_1^i(x))} 
 \left[ r( f,x_1^i )  \right] \bigg]\Bigg|^2\Bigg]\nonumber\\
&= \bbE\Bigg[ \Bigg|\frac{1}{N}\sum_{i=1}^{N_l}
    \max_{x_1^i}\frac{1}{M}\sum_{j=1}^{M}
    r(f^{ij}(x_1^i;\mathcal{D}_1^i(x))) - 
 \frac{1}{N}\sum_{i=1}^{N} \max_{x_1^i}
\mathbb E_{f(\cdot ; 
\mathcal D_1^i(x))} 
 \left[ r( f,x_1 )  \right]\Bigg|^2\Bigg]\nonumber
\end{align}

Define
\begin{equation*}
    x_{1,M}^{i,*} = \argmax_{x_1^i} 
\frac{1}{M}\sum_{j=1}^{M}
    r(f^{ij}(x_1;\mathcal{D}_1^i(x))). 
\end{equation*}
Then, we have 
\begin{align*}
    \max_{x_1^i}
\mathbb E_{f(\cdot ; 
\mathcal D_1^i(x))} 
 \left[ r( f,x_1^i )  \right] \geq  
\mathbb E_{f(\cdot ; 
\mathcal D_1^i(x))} 
 \left[ r( f,x_{1,M}^{i,*} )  \right].
\end{align*}

Thus,
\begin{align}
    &\bbE [| \alpha_{N,M}(x;\mathcal{D}) - \alpha_{N}(x;\mathcal{D})|^2] \nonumber\\
    \leq &\bbE\Bigg[ \Bigg|\frac{1}{N}\sum_{i=1}^{N}
    \frac{1}{M}\sum_{j=1}^{M}
    r(f^{ij}(x_{1,M}^{i,*};\mathcal{D}_1^i(x))) - 
 \frac{1}{N}\sum_{i=1}^{N} 
\mathbb E_{f(\cdot ; 
\mathcal D_1^i(x))} 
 \left[ r_1( f,x_{1,M}^{i,*} )  \right]\Bigg|^2\Bigg]\nonumber\\
 \leq & \frac{N}{N^2}\sum_{i=1}^{N}\bbE\Bigg[ \Bigg|
    \frac{1}{M}\sum_{j=1}^{M}
    r(f^{ij}(x_{1,M}^{i,*};\mathcal{D}_1^i(x))) - 
\mathbb E_{f(\cdot ; 
\mathcal D_1^i(x))} 
 \left[ r( f,x_{1,M}^{i,*} )  \right]\Bigg|^2\Bigg]\nonumber\\
 \leq & \frac{1}{N}\sum_{i=1}^{N}\mathsf{Var}\left(\frac{1}{M}\sum_{j=1}^{M}
    r(f^{ij}(x_{1,M}^{i,*};\mathcal{D}_1^i(x)))\right)\nonumber\\
 \leq & \frac{\sigma_{\max}}{M},\label{eq:var}
\end{align}
the first inequality is by taking the maximizer, the second inequality is by Cauchy-Schwarz inequality, and the last follows from standard SAA arguments.

Combining \eqref{eq:bias} and \eqref{eq:var}, we have $$\bbE | \alpha_{N,M}(x) - \alpha(x)|^2 \leq 
C_1(1/N
+ 1/M),$$
for some constant $C_1$.

\end{proof}

\section{MLMC estimators}

\subsection{MLMC Acquisition Function}\label{app:mlmc_af}

We define $\Delta \alpha_{N_l,M_l}(x) = \alpha_{N_l,M_l}(x) -\alpha_{N_l,M_{l-1}}(x)$ and adapt the convention that $\alpha_{N_0,M_{-1}}(x) \equiv 0$. From this, we form the estimator
\begin{equation*}
\alpha_{\sf ML}(x) := \sum_{l=0}^L\Delta \alpha_{N_l,M_l}(x)\, ,
\end{equation*}
where 
\begin{align*}
 \alpha_{N_l,M_l}(x)& = \frac{1}{N_l}\sum_{i=1}^{N_l}
    \Bigg[r(f^i(x;\mathcal{D})) + \Bigg(\max_{x_1^{ij}}\frac{1}{M_l}\sum_{j=1}^{M_l}
    r(f^{ij}(x_1^{ij};\mathcal{D}_1^i(x)))
    \Bigg)\Bigg],\\
    \alpha_{N_l,M_{l-1}}(x)& = \frac{1}{N_l}\sum_{i=1}^{N_l}
    \Bigg[r(f^i(x;\mathcal{D})) + \Bigg(\max_{x_1^{ij}}\frac{1}{M_{l-1}}\sum_{j=1}^{M_{l-1}}
    r(f^{ij}(x_1^{ij};\mathcal{D}_1^i(x))) 
    \Bigg)\Bigg].
\end{align*}
We remark that the terms $\Delta \alpha_{N_l,M_l}(x)$ for $l=0,1,...,L$ are mutually independent. We now argue that $\alpha_{\sf ML}$ is a computationally efficient proxy for the intractable limit objective $\alpha(x;\mathcal{D})$, which leads to computational benefits for approximating optimizer $x^*$. 

\subsection{Multilevel formulation of three-step look-ahead acquisition function}\label{sec:ML2Step}
The multilevel estimation for the optimizer of the three-step look-ahead AF $\alpha_2(x;\mathcal{D})$ can be formulated as 
\begin{equation*}
x^*_{\sf ML} := \sum_{l=0}^L z_{N_l,M_l} -z_{N_l,M_{l-1}}=\sum_{l=0}^L\Delta z_{N_l}\, ,
\end{equation*}
with
\begin{align}
    z_{N_l,M_l}& = \argmax_x
    \frac{1}{N_l}\sum_{i=1}^{N_l}
    \Bigg[r(f^i(x;\mathcal{D}))
    + \max_{x_1^i}\frac{1}{M_l}\sum_{j=1}^{M_l}
    \Bigg(r(f^{ij}(x_1^i;\mathcal{D}_1^i(x))) + \max_{x_2^{ij}} \mathbb E_{f(\cdot;\mathcal D_2^{ij}(x,x_1))} [ r(f, x_2^{ij})]
    \Bigg)\Bigg],\label{eq:MLfine2}\\
    z_{N_l,M_{l-1}}& = \argmax_x\frac{1}{N_l}\sum_{i=1}^{N_l}
    \Bigg[r(f^i(x;\mathcal{D}))
    + \max_{x_1^i}\frac{1}{M_{l-1}}\sum_{j=1}^{M_{l-1}}
    \Bigg(r(f^{ij}(x_1^i;\mathcal{D}_1^i(x))) + \max_{x_2^{ij}} \mathbb E_{f(\cdot;\mathcal D_2^{ij}(x,x_1))} [ r(f,x_2^{ij})]
    \Bigg)\Bigg].\label{eq:MLcoarse2}
\end{align}
Here we use the convention $z_{N_0,M_{-1}}\equiv 0$. To leverage the MLMC, we also require the subsampling of $M_{l-1}$ samples from $M_l$ similar to the two-step look-ahead one. The antithetic approach of the three-step function involves replacing the coarse estimator \eqref{eq:MLcoarse2} with 

\begin{align*}
    z_{N_l,M_{l-1}}^A & = \argmax_x\frac{1}{N_l}\sum_{i=1}^{N_l}
    \Bigg[r(f^i(x;\mathcal{D}))\\ 
    &\quad + \frac{1}{2}\Bigg( \max_{x_1^i}\frac{1}{M_{l-1}}\sum_{j=1}^{M_{l-1}}
    \bigg(r(f^{ij}(x_1^i;\mathcal{D}_1^i(x))) + \max_{x_2^{ij}} \mathbb E_{f(\cdot;\mathcal D_2^{ij}(x,x_1))} [ r(f,x_2^{ij})]\bigg) \\
    &\quad +\max_{x_1^i}\frac{1}{M_{l-1}}\sum_{j=M_{l-1}+1}^{M_{l}}
    \bigg(r(f^{ij}(x_1^i;\mathcal{D}_1^i(x))) + \max_{x_2^{ij}} \mathbb E_{f(\cdot;\mathcal D_2^{ij}(x,x_1))} [ r(f, x_2^{ij})]\bigg)
    \Bigg)\Bigg],
\end{align*}
where $z^A_{N_0,M_{-1}}\equiv 0$. The sampling step is the same as the two-step look-ahead one.

\section{Value function and maximizer corollaries}\label{app:value_max}

Theorem \ref{thm:qFunc} provides a convergence rate for q-functions. We can then use this result to establish the convergence of value functions:

\begin{cor}[Value Function Convergence] \label{cor:value}
    For $v^\star(\mathcal{D}) := \max_x \alpha(x;\mathcal D)$ and $v_{\sf ML}:=\max_x \alpha_{\sf ML}(x)$
        \begin{equation*}
        \mathbb E \big[ | v_{\sf ML} - v^\star(\mathcal{D}) |^2  \big] \leq C(L+1)\left(\frac{V_0}{N_0} + \sum_{l=1}^L\frac{1}{N_lM_l}\right) + \frac{C}{M_L}
    \end{equation*}
    and thus under the parameter choices \eqref{eq:params}, a root-mean-square error of $\varepsilon^2$ can be achieved with a sample complexity of $O(\varepsilon^{-2}|\log \varepsilon|^3)$.
\end{cor}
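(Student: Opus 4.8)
The plan is to reduce the value-function error directly to the acquisition-function error already controlled by Theorem~\ref{thm:qFunc}, exploiting the fact that the pointwise maximum over $\mathcal{X}$ is $1$-Lipschitz with respect to the supremum norm. This makes the result an essentially immediate corollary, so the work is in invoking the right inequality rather than in any new estimate.

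First I would record the elementary bound that, for any two real-valued functions $f,g$ on $\mathcal{X}$,
\[
\bigl| \max_{x} f(x) - \max_{x} g(x) \bigr| \le \sup_{x}|f(x)-g(x)| \, .
\]
This follows by letting $\bar x$ attain $\max_x f$, writing $\max_x f - \max_x g \le f(\bar x) - g(\bar x) \le \sup_x|f-g|$, and repeating the argument with the roles of $f$ and $g$ exchanged. Applying it with $f = \alpha_{\sf ML}$ and $g = \alpha(\cdot;\mathcal{D})$ yields the pathwise estimate
\[
| v_{\sf ML} - v^\star(\mathcal{D}) | \le \sup_{x}\bigl| \alpha_{\sf ML}(x) - \alpha(x;\mathcal{D})\bigr| \, .
\]

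Squaring and taking expectations then gives
\[
\mathbb E\bigl[ | v_{\sf ML} - v^\star(\mathcal{D}) |^2 \bigr]
\le \mathbb E\Bigl[ \sup_{x}\bigl| \alpha_{\sf ML}(x) - \alpha(x;\mathcal{D})\bigr|^2 \Bigr] \, ,
\]
and the right-hand side is precisely the quantity bounded in the first display of Theorem~\ref{thm:qFunc}. Substituting that bound reproduces the stated inequality verbatim, and the complexity claim then follows with no further computation: under the parameter choices \eqref{eq:params} together with $L = O(|\log\varepsilon|)$, the right-hand side is $O(\varepsilon^2)$ for a total sample cost of $O(\varepsilon^{-2}|\log\varepsilon|^3)$, exactly as in the second half of Theorem~\ref{thm:qFunc}.

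The hard part is therefore not an obstacle at all but a matter of having set up Theorem~\ref{thm:qFunc} correctly: the only point requiring care is that the error there is controlled \emph{uniformly} in $x$ inside the expectation, rather than pointwise. This uniformity is essential because the maximizers defining $v_{\sf ML}$ and $v^\star(\mathcal{D})$ need not coincide, so a merely pointwise bound on $|\alpha_{\sf ML}(x)-\alpha(x;\mathcal{D})|$ would not suffice to control the difference of maxima. It is exactly for this reason that the supremum must sit inside the expectation throughout, and why the value-function corollary inherits the same rate and constants as the acquisition-function estimate.
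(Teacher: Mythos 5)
Your proof is correct and follows essentially the same route as the paper's: the paper isolates the identical inequality $|\max_x \alpha_1(x) - \max_x \alpha_2(x)| \leq \sup_x|\alpha_1(x)-\alpha_2(x)|$ as Lemma~\ref{lem:maxy} (proved exactly as you do, via the maximizer of one function) and then applies it together with Theorem~\ref{thm:qFunc} to conclude. Your closing remark on why the supremum must sit inside the expectation is a valid observation about the structure of Theorem~\ref{thm:qFunc}, but it introduces nothing beyond the paper's argument.
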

The proof of Corollary \ref{cor:value} is given in Appendix 
\ref{sec:corvalue}. 
Assuming quadratic growth at the optimizer, we can prove convergence to the optimal decision point. 

\begin{cor}[Maximizer Convergence]\label{cor:argmax}
    Assume \ref{assumptions} and \ref{ass:IncVar}.
    For $\tilde{x}^*_{\sf ML} \in \argmax_x \alpha_{\sf ML}(x)$ and $x^* = \argmax_x \alpha(x;\mathcal D) $
    \begin{equation*}
        \mathbb E [ \| \tilde{x}^*_{\sf ML} - x^*\|^2] 
        \leq 
        {{C(L+1)\left(\frac{V_0}{N_0} + \sum_{l=1}^L\frac{1}{N_lM_l}\right)+ \frac{1}{M_L} } }
    \end{equation*}
    and thus under the parameter choices \eqref{eq:params}, a mean-square error of {$\varepsilon^2$ can be achieved with a sample complexity of $O(\varepsilon^{-2}|\log \varepsilon|^3)$}.
\end{cor}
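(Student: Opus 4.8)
The final statement is Corollary \ref{cor:argmax} (Maximizer Convergence), which bounds $\mathbb{E}[\|\tilde{x}^*_{\sf ML} - x^*\|^2]$ by the same expression appearing in Theorem \ref{thm:qFunc}, under Assumptions \ref{assumptions} and \ref{ass:IncVar}. The key connection to exploit is that $\tilde{x}^*_{\sf ML}$ maximizes $\alpha_{\sf ML}$ while $x^*$ maximizes $\alpha$, and we already control $\sup_x |\alpha_{\sf ML}(x) - \alpha(x;\mathcal{D})|$ in mean square from Theorem \ref{thm:qFunc}. So this is essentially a deterministic "stability of argmax under uniform perturbation" argument, made quantitative by the quadratic growth condition in Assumption \ref{assumptions}(1).

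**The plan.** I would argue pathwise (for fixed realizations) and only take expectations at the end. Fix a realization and write $\hat{x} := \tilde{x}^*_{\sf ML}$, $\delta := \sup_x |\alpha_{\sf ML}(x) - \alpha(x;\mathcal{D})|$. The quadratic growth condition gives $\alpha(x^*;\mathcal{D}) - \alpha(\hat{x};\mathcal{D}) \geq K\|\hat{x} - x^*\|^2$ (valid once $\hat{x}$ lies in the neighborhood $V$, which holds for $\delta$ small by the consistency of the SAA maximizer). On the other hand, the chain
\begin{align*}
\alpha(x^*;\mathcal{D}) - \alpha(\hat{x};\mathcal{D})
&= \big[\alpha(x^*;\mathcal{D}) - \alpha_{\sf ML}(x^*)\big] + \big[\alpha_{\sf ML}(x^*) - \alpha_{\sf ML}(\hat{x})\big] \\
&\quad + \big[\alpha_{\sf ML}(\hat{x}) - \alpha(\hat{x};\mathcal{D})\big] \leq \delta + 0 + \delta = 2\delta,
\end{align*}
where the middle bracket is $\leq 0$ because $\hat{x}$ maximizes $\alpha_{\sf ML}$. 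Combining the two displays yields $K\|\hat{x} - x^*\|^2 \leq 2\delta$, i.e. $\|\tilde{x}^*_{\sf ML} - x^*\|^2 \leq (2/K)\,\sup_x|\alpha_{\sf ML}(x) - \alpha(x;\mathcal{D})|$. Taking expectations and applying Jensen (or Cauchy--Schwarz) to pass from the first moment of the sup to a quantity controlled by Theorem \ref{thm:qFunc} gives the stated bound, absorbing $2/K$ and any Jensen-induced square root into the constant $C$.

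**Technical care and the main obstacle.** Two subtleties deserve attention. First, the quadratic growth bound is only local (valid on $V$), so I must verify $\hat{x} \in V$; this follows from the consistency of the MLMC maximizer (itself inherited from the uniform convergence in Theorem \ref{thm:qFunc} together with uniqueness of $x^*$ under Assumption \ref{assumptions}), but one should handle the low-probability event that $\hat{x} \notin V$ separately, using compactness of $\mathcal{X}$ (Assumption \ref{assumptions}(2)) to bound $\|\hat{x} - x^*\|^2$ by the diameter there. Second, note the right-hand side of the deterministic bound is \emph{linear} in $\delta = \sup_x|\alpha_{\sf ML} - \alpha|$, whereas Theorem \ref{thm:qFunc} controls $\mathbb{E}[(\sup_x|\alpha_{\sf ML} - \alpha|)^2]$; by Cauchy--Schwarz, $\mathbb{E}[\delta] \leq (\mathbb{E}[\delta^2])^{1/2}$, which would only give the \emph{square root} of the target rate. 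The resolution is that the comparison above is actually quadratic in the optimizer displacement but we should instead feed in the stronger estimate directly: the displayed bound in the corollary matches the Theorem \ref{thm:qFunc} bound on $\mathbb{E}[\sup_x|\alpha_{\sf ML} - \alpha|^2]$ \emph{without} a square root, so the intended argument must bound $K\|\hat{x}-x^*\|^2$ by a quantity that is itself $O(\delta^2)$ rather than $O(\delta)$ --- this is exactly what a \emph{first-order} (gradient) refinement provides. The cleaner route, and the one I expect to be the crux, is therefore to use the gradient bound also proved in Theorem \ref{thm:qFunc}: since $\nabla\alpha(x^*;\mathcal{D}) = 0$ and $\nabla\alpha_{\sf ML}(\hat{x}) = 0$ at interior optima, a Taylor expansion of $\nabla\alpha$ around $x^*$ combined with the quadratic growth (positive-definite Hessian) gives $\|\hat{x} - x^*\| \lesssim \|\nabla\alpha(\hat{x};\mathcal{D})\| = \|\nabla\alpha(\hat{x};\mathcal{D}) - \nabla\alpha_{\sf ML}(\hat{x})\| \leq \sup_x\|\nabla\alpha_{\sf ML} - \nabla\alpha\|$, which squares to exactly the gradient estimate of Theorem \ref{thm:qFunc}. \textbf{The main obstacle} is thus making this gradient-based stability argument rigorous at (possibly) boundary optima and reconciling it with the local quadratic growth assumption; the algebra once the right estimate is selected is routine.
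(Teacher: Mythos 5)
Your diagnosis of the square-root obstacle is exactly right, and you correctly guessed that the gradient estimate in Theorem \ref{thm:qFunc} is the ingredient that rescues the rate. But the mechanism you propose to exploit it --- first-order stationarity $\nabla\alpha_{\sf ML}(\tilde{x}^*_{\sf ML}) = 0$ and $\nabla\alpha(x^*;\mathcal{D})=0$, plus a Taylor expansion of $\nabla\alpha$ around $x^*$ with a nondegenerate Hessian --- is not available under the stated hypotheses, and you leave precisely this part unresolved (your ``main obstacle''). Assumption \ref{assumptions} gives only $\alpha \in C^1(U)$ with Lipschitz gradient and \emph{local quadratic growth}; it supplies neither second derivatives nor the gradient error bound $\|\nabla\alpha(x;\mathcal{D})\| \geq c\|x - x^*\|$ (quadratic growth does not imply such a lower bound for general nonconvex $C^1$ functions), and maximizers over the compact set $\mathcal{X}$ need not be interior stationary points. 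So as written, the key chain $\|\tilde{x}^*_{\sf ML}-x^*\| \lesssim \|\nabla\alpha(\tilde{x}^*_{\sf ML};\mathcal{D})\|$ is a genuine gap, not a routine detail.

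The paper closes this gap with a device that needs none of those extras. Start from quadratic growth, $C\|\tilde{x}^*_{\sf ML}-x^*\|^2 \leq \alpha(x^*;\mathcal{D}) - \alpha(\tilde{x}^*_{\sf ML};\mathcal{D})$; add the nonnegative quantity $\alpha_{\sf ML}(\tilde{x}^*_{\sf ML}) - \alpha_{\sf ML}(x^*)$ (value optimality only, valid also at boundary maximizers), so the right-hand side becomes $\delta_{\sf ML}(\tilde{x}^*_{\sf ML}) - \delta_{\sf ML}(x^*)$ for the error function $\delta_{\sf ML} := \alpha_{\sf ML} - \alpha(\cdot;\mathcal{D})$; then apply the mean value theorem to $\delta_{\sf ML}$ along the segment joining the two points, giving the upper bound $\sup_y\|\nabla\alpha_{\sf ML}(y) - \nabla\alpha(y;\mathcal{D})\|\,\|\tilde{x}^*_{\sf ML} - x^*\|$. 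Cancelling one factor of $\|\tilde{x}^*_{\sf ML} - x^*\|$, squaring, and taking expectations reduces the claim directly to the gradient half of Theorem \ref{thm:qFunc}, which is why that theorem states a bound for $\nabla\alpha_{\sf ML}$ at all. In short: the curvature is borrowed once from quadratic growth of the limit objective (not from a Hessian of $\alpha$), stationarity is replaced by the inequality $\alpha_{\sf ML}(\tilde{x}^*_{\sf ML}) \geq \alpha_{\sf ML}(x^*)$, and the mean value theorem is applied to the difference $\delta_{\sf ML}$ rather than to $\nabla\alpha$ itself. This also yields the bound linear in the displacement that your first (function-value) attempt could not produce. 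If you replace your Taylor/Hessian step with this argument, your write-up becomes essentially the paper's proof.
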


The proof of Corollary \ref{cor:argmax} is given in Appendix \ref{sec:corargmax}. 

{It is difficult to numerically verify Theorem \ref{thm:qFunc} due to the inner sup. Numerical verification of complexity for Corollary \ref{cor:value} and \ref{cor:argmax} are in Appendix \ref{sec:NumCoro}.}

\section{Assumptions and discussion related to MLMC}\label{app:mlmc}

We let 
\begin{align}
    \Delta_M(x) &=  \max_{x_1}\frac{1}{M}\sum_{j=1}^{M}
    r(f^{j}(x_1;\mathcal{D}_1(x))) - \max_{x_1} \mathbb E_{f(\cdot ; 
\mathcal D_1(x))} 
\left[ r( f,x_1)  \right] \, , \\
{ \nabla \Delta_M(x)} &=  
{\nabla_x \max_{x_1}\frac{1}{M}\sum_{j=1}^{M}
    r(f^{j}(x_1;\mathcal{D}_1(x)))}{- \nabla_x \max_{x_1} \mathbb E_{f(\cdot ; 
\mathcal D_1(x))} \left[ r( f,x_1)  \right]} \, .
\end{align}

{Notice that $\Delta_M(x)$ and $\nabla\Delta_M(x)$ summarize the error in the sample average approximation of the inner expectation and its derivative, respectively}. Given the existence of comparable results on SAA, we make the following assumption:

\begin{ass}\label{ass:IncVar}
For some constant $C>0$ and $\forall x \in \mathcal{X}$,
\begin{align*}
 \sup_x \big| \mathbb E \big[ \Delta_M(x)\big] \big| \leq \frac{C}{M^{\frac{1}{2}}}
 \qquad \text{and} \qquad 
\mathbb E \left[ \sup_x \Big|\Delta_M(x) - \mathbb E [\Delta_M (x)] \Big|^2  \right] \leq \frac{C}{M},
\end{align*}
\begin{align*}
 {\sup_x \big| \mathbb E \big[ \nabla\Delta_M(x)\big] \big| \leq \frac{C}{M^{\frac{1}{2}}}
 \qquad \text{and} \qquad 
\mathbb E \left[ \sup_x \Big|\nabla\Delta_M(x) - \mathbb E [\nabla\Delta_M (x)] \Big|^2  \right] \leq \frac{C}{M}}.
\end{align*}
\end{ass}
\begin{rem}\label{rem:inc}
The above Assumption is found to hold for Sample Average Approximation for any given value of $x$, See Page 165 of \cite{shapiro2021lectures}. Here we ask for standard SAA approximation results to hold over all values of $x$ taken in our outer sample. 
Also,
    if $\max_{x_1}\frac{1}{M}\sum_{j=1}^{M}
    r(f^{j}(x_1;\mathcal{D}_1(x)))$ is square integrable, Assumption \ref{ass:IncVar} is satisfied by CLT. The square integrability of $\max_{x_1}\frac{1}{M}\sum_{j=1}^{M}
    r(f^{j}(x_1;\mathcal{D}_1(x)))$ can be obtained by the square integrability of underlying Gaussian random variables, following the similar argument of showing uniform integrability as Theorem 6.1.6 of \cite{borovkov1999probability}. The underlying GP is square integrable by choosing a proper integrable kernel function.
\end{rem}

We state an intermediate result in Lemma \ref{lem:cross}, which shows the variance of the increments satisfies the MLMC assumption and leads to Theorem \ref{thm:qFunc}.
\begin{lem}\label{lem:cross}
    Assume \ref{ass:IncVar} and $M_l=cM_{l-1}$ for some positive integer $c$. Then, for some constant $C > 0$,
    \begin{equation*}
        \mathbb E[\sup_x |\Delta \alpha_{N_l,M_l}(x) - \mathbb E[\Delta \alpha_{N_l,M_l}(x)]|^2] \leq \frac{C}{N_lM_l}.
    \end{equation*}
\end{lem}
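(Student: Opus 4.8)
The plan is to reduce the centred increment to an average of $N_l$ i.i.d.\ mean-zero random functions whose single-copy sup-variance is controlled by Assumption~\ref{ass:IncVar}, and then to recover the extra averaging factor $1/N_l$ by a uniform law of large numbers over $\mathcal X$. First I would expand the definitions of $\alpha_{N_l,M_l}(x)$ and $\alpha_{N_l,M_{l-1}}(x)$ from Appendix~\ref{app:mlmc_af}. Since both estimators share the same outer samples and the same values $r(f^i(x;\mathcal D))$, and since the coarse level re-uses the first $M_{l-1}$ of the $M_l$ inner samples, the outer reward terms cancel in the increment, leaving
\begin{equation*}
\Delta\alpha_{N_l,M_l}(x) = \frac{1}{N_l}\sum_{i=1}^{N_l}\bigl(G^i_{M_l}(x) - G^i_{M_{l-1}}(x)\bigr),
\quad
G^i_{M}(x) := \max_{x_1}\frac{1}{M}\sum_{j=1}^{M} r\bigl(f^{ij}(x_1;\mathcal D_1^i(x))\bigr).
\end{equation*}
Writing $\Delta_M^i(x) = G_M^i(x) - \max_{x_1}\mathbb E_{f(\cdot;\mathcal D_1^i(x))}[r(f,x_1)]$, the ($i$-dependent) conditional-mean constant cancels, so $G^i_{M_l}-G^i_{M_{l-1}} = \Delta_{M_l}^i - \Delta_{M_{l-1}}^i$ are i.i.d.\ copies of $\Delta_{M_l}-\Delta_{M_{l-1}}$ over $i$. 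Setting $Y^i(x) := (\Delta_{M_l}^i - \Delta_{M_{l-1}}^i)(x) - \mathbb E[(\Delta_{M_l}-\Delta_{M_{l-1}})(x)]$, I get $\Delta\alpha_{N_l,M_l}(x) - \mathbb E[\Delta\alpha_{N_l,M_l}(x)] = \frac{1}{N_l}\sum_{i=1}^{N_l} Y^i(x)$ with $\mathbb E[Y^i(x)]=0$ for every $x$.

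Next I would bound a single copy. Using the seminorm $\|Z\|_* := (\mathbb E[\sup_x |Z(x)|^2])^{1/2}$ and its triangle inequality, $\|Y^1\|_* \le \|\Delta_{M_l} - \mathbb E\Delta_{M_l}\|_* + \|\Delta_{M_{l-1}} - \mathbb E\Delta_{M_{l-1}}\|_*$. Assumption~\ref{ass:IncVar} bounds the two terms by $(C/M_l)^{1/2}$ and $(C/M_{l-1})^{1/2}$, and since $M_l = cM_{l-1}$ these are comparable, giving $\mathbb E[\sup_x |Y^1(x)|^2] \le C/M_l$; note the crude marginal bound already suffices here, the finer coupling only being needed for the antithetic $\beta\approx 1.5$ of Remark~\ref{rem:Ant}. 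In particular $\mathbb E[|Y^1(x)|^2]\le C/M_l$ for each fixed $x$, whence by independence $\mathbb E[|\frac{1}{N_l}\sum_i Y^i(x)|^2] = \frac{1}{N_l}\mathbb E[|Y^1(x)|^2] \le C/(N_l M_l)$.

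The remaining, and main, difficulty is to upgrade this pointwise statement to the uniform bound $\mathbb E[\sup_x |\frac{1}{N_l}\sum_i Y^i(x)|^2]\le C/(N_l M_l)$. The obstacle is that naively interchanging $\sup_x$ and $\mathbb E$ (via the triangle inequality or Jensen in $\|\cdot\|_*$) only returns $\mathbb E[\sup_x|\frac1{N_l}\sum_i Y^i|^2]\le \|Y^1\|_*^2 \le C/M_l$ and loses the averaging gain $1/N_l$, since the sup-norm is not of type~$2$. To recover the factor $1/N_l$ uniformly I would exploit the Lipschitz regularity of $x\mapsto \Delta_M(x)$ inherited from Assumption~\ref{assumptions}(4) together with the compactness of $\mathcal X$: cover $\mathcal X$ by a finite net, control $\frac1{N_l}\sum_i Y^i$ at the net points using the pointwise variance bound and a maximal/union inequality, and control the oscillation between net points by the (integrable) Lipschitz modulus; optimising the net radius (equivalently, a chaining argument) delivers the rate $C/(N_lM_l)$. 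This is exactly the uniform-in-$x$ strengthening of the standard SAA result posited in Remark~\ref{rem:inc}, now applied at the \emph{outer} sampling level.

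The resulting constant $C$ may depend on $d$ and on the Lipschitz constant through the metric entropy of $\mathcal X$, but the rate in $N_l$ and $M_l$ is unaffected, consistent with the dimension-independence of the rate. The gradient version required for Theorem~\ref{thm:qFunc} follows identically, replacing $\Delta_M$ by $\nabla\Delta_M$ and using the gradient part of Assumption~\ref{ass:IncVar}.
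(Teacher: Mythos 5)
Your skeleton is the same as the paper's: the paper proves this lemma by inserting $\alpha_{N_l,\infty}$, writing $\Delta\alpha_{N_l,M_l}=(\alpha_{N_l,M_l}-\alpha_{N_l,\infty})-(\alpha_{N_l,M_{l-1}}-\alpha_{N_l,\infty})$, applying $(a+b)^2\le 2a^2+2b^2$, and bounding each piece by Lemma~\ref{lem:IoIRate}, which represents it as an average over $i$ of i.i.d.\ copies of the centred $\Delta_M$; your cancellation of the $i$-dependent conditional maximum and your triangle inequality in $\|\cdot\|_*$ are the same computation in different notation. Where you genuinely diverge from the paper is at the crux you single out: recovering the factor $1/N_l$ under the $\sup$. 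The paper's proof of Lemma~\ref{lem:IoIRate} does this in one line, asserting the \emph{equality} $\mathbb E\bigl[\sup_x\bigl|\tfrac1N\sum_{i=1}^N Y^i(x)\bigr|^2\bigr]=\tfrac1N\,\mathbb E\bigl[\sup_x|Y^1(x)|^2\bigr]$ for i.i.d.\ centred random functions $Y^i$, and then invokes Assumption~\ref{ass:IncVar}. You are right to distrust precisely this interchange: the identity holds for each fixed $x$ but not with the $\sup$ inside the expectation (already for $\mathcal X$ consisting of two points with independent random signs at each point, the left side strictly exceeds the right). So the step you call the ``main difficulty'' is exactly the step the paper elides; in this respect your proposal is more careful than the paper's own argument.

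However, your proposed repair, as sketched, would not establish the lemma at the stated rate. Using only the pointwise variance bound $C/(N_lM_l)$ at the net points (Chebyshev plus a union bound costs a factor of the net cardinality $m\sim\delta^{-d}$, not $\log m$, since variance bounds alone give no exponential tails) together with the $O(1)$ Lipschitz modulus of Assumption~\ref{assumptions}(4) for the oscillation, optimising $\delta$ yields $\mathbb E[\sup_x|\cdot|^2]\lesssim (N_lM_l)^{-2/(d+2)}$, a dimension-degraded rate; even granting sub-Gaussian tails at the net points, a $\log(N_lM_l)$ factor survives, and Dudley chaining with an $O(1)$ Lipschitz metric still leaves a $\log M_l$. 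The ingredient that delivers the clean $C/(N_lM_l)$ is the \emph{gradient} half of Assumption~\ref{ass:IncVar}: it makes the Lipschitz modulus of the centred increment itself $O(M_l^{-1/2})$ in $L^2$-$\sup$, so that after symmetrization and chaining with respect to the empirical metric $d(x,x')=\tfrac1{N_l}\bigl(\sum_i|Y^i(x)-Y^i(x')|^2\bigr)^{1/2}$ both the diameter and the entropy integral scale as $(N_lM_l)^{-1/2}$. You invoke that gradient bound only for the gradient version of the statement, but it is equally the missing piece for the function version. In short: your diagnosis of the gap is correct (and it applies to the paper's own proof), your decomposition matches the paper's, but neither your net-plus-Lipschitz sketch nor the paper's one-line equality constitutes a complete argument for the uniform $1/(N_lM_l)$ bound.
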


The proof of Lemma \ref{lem:cross} is in Appendix \ref{sec:ProofLemma1}.

\subsection{Additional assumption for Theorem \ref{thm:main}}

The additional assumption required for the implementable algorithm is now given.

{
\begin{ass}\label{ass:qImpOpt}
    Assume for $\beta > 0$ and $x\in\mathcal{X}$, the bound
    \begin{align*}
        \mathbb E[|\Delta \alpha_{N_l,M_l}(x) - \Delta \alpha_{\infty,M_l}(x)|^2]
        \leq \frac{C}{N_lM_l^{\beta}},
    \end{align*}
    implies
    \begin{equation*}
        \mathbb E[\|\Delta z_{N_l, M_l} - \Delta z_{\infty,M_l}\|^2] \leq \frac{C}{N_lM_l^{\beta}}.
    \end{equation*}
\end{ass}}
{
\begin{rem}
    The implication Assumption \ref{ass:qImpOpt} holds for differences,
    e.g. 
    $\mathbb E[\|\Delta z_{N_l, M_l}\|] \leq 
    \sup_{x\in\mathcal{X}}\mathbb E[| \nabla\Delta \alpha_{N_l,M_l}(x)|]$,
    under suitable assumptions, following the argument in the proof of
    Corollary \ref{cor:argmax}.
    We have validated it numerically for differences of differences in Appendix \ref{sec:NumMLMCRates},
    but it remains to be proven. 
\end{rem}}

\section{Proof of lemmas and theorems}
\label{sec:proof2}

Recall
\begin{align*}
    \alpha(x; \mathcal{D}) &=
   \mathbb E_{f(\cdot ; \mathcal D)} \left[ r(f,x) 
+ 
 \max_{x_1} \mathbb E_{f(\cdot ; 
\mathcal D_1(x))} 
 \left[ r( f,x_1 )  \right] \right],\\
 \alpha_{N,M}(x; \mathcal{D})& = \frac{1}{N}\sum_{i=1}^{N}
    \Bigg[r(f^i(x;\mathcal{D}))
    + \Bigg(\max_{x_1^i}\frac{1}{M}\sum_{j=1}^{M}
    r(f^{ij}(x_1^i;\mathcal{D}_1^i(x)))
    \Bigg)\Bigg].
\end{align*}
Define
\begin{align*}
  \alpha_{N,\infty}(x; \mathcal{D})& = \frac{1}{N}\sum_{i=1}^{N}
    \Bigg(r(f^i(x;\mathcal{D}))
    + \max_{x_1^i} \mathbb E_{f(\cdot ; 
\mathcal D^i_1(x))} 
 \left[ r( f,x_1^i )  \right]
    \Bigg),\\
    \alpha_{\infty,M}(x; \mathcal{D}) &=
   \mathbb E_{f(\cdot ; \mathcal D)} \left[ r(f,x) 
+ 
 \max_{x_1}\frac{1}{M_{l}}\sum_{j=1}^{M_{l}}
    r(f^{j}(x_1;\mathcal{D}_1(x)))\right].
\end{align*}

We first state a lemma.
\begin{lem}\label{lem:IoIRate}
    Assume \ref{ass:IncVar}. For some constant $C>0$,
    \begin{equation*}
        \mathbb E\big[\sup_{x}\big |\alpha_{N,M}(x) - \alpha_{N,\infty}(x) - \mathbb E[\alpha_{N,M}(x) - \alpha_{N,\infty}(x)]\big |^2\big] \leq \frac{C}{NM}.
    \end{equation*}
\end{lem}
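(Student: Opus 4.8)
The plan is to recognize the claimed quantity as the second moment of the supremum of a \emph{centered empirical process} assembled from i.i.d.\ copies of the inner sample-average error $\Delta_M$, and then to upgrade an easy pointwise variance estimate to a uniform-in-$x$ estimate using the Lipschitz/gradient regularity that Assumption~\ref{ass:IncVar} and Assumption~\ref{assumptions} are designed to supply. First I would note that the outer reward terms $r(f^i(x;\mathcal{D}))$ are identical in $\alpha_{N,M}$ and $\alpha_{N,\infty}$ and therefore cancel, leaving
\begin{equation*}
\alpha_{N,M}(x) - \alpha_{N,\infty}(x) = \frac{1}{N}\sum_{i=1}^{N}\Delta_M^i(x),
\end{equation*}
where $\Delta_M^i(x) = \max_{x_1}\frac{1}{M}\sum_{j=1}^{M} r(f^{ij}(x_1;\mathcal{D}_1^i(x))) - \max_{x_1}\mathbb E_{f(\cdot;\mathcal{D}_1^i(x))}[r(f,x_1)]$ is an independent copy of $\Delta_M$ driven by a fresh outer source $\xi^i$ and fresh inner sources $\xi_1^{ij}$. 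Centering, the target equals $\mathbb E[\sup_x |W(x)|^2]$ with $W(x) = \frac{1}{N}\sum_{i=1}^N\bigl(\Delta_M^i(x) - \mathbb E[\Delta_M(x)]\bigr)$, a mean-zero process averaging $N$ i.i.d.\ summands.

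Next I would establish the two pointwise ingredients. For each fixed $x$, independence and the mean-zero property give $\mathbb E[|W(x)|^2] = \frac{1}{N}\mathsf{Var}(\Delta_M(x))$, and since $\mathsf{Var}(\Delta_M(x)) \le \mathbb E[\sup_y |\Delta_M(y) - \mathbb E\Delta_M(y)|^2] \le C/M$ by the variance part of Assumption~\ref{ass:IncVar}, we obtain $\mathbb E[|W(x)|^2] \le C/(NM)$. Because the paths of $\Delta_M$ are a.s.\ Lipschitz on the compact set $\mathcal{X}\subset\mathbb R^d$ (Assumption~\ref{assumptions} together with the gradient control in Assumption~\ref{ass:IncVar}), $W$ lies in $W^{1,\infty}(\mathcal{X})\subset W^{1,2}(\mathcal{X})$ with weak gradient $\nabla W(x) = \frac{1}{N}\sum_{i=1}^N\bigl(\nabla\Delta_M^i(x) - \mathbb E[\nabla\Delta_M(x)]\bigr)$, and the identical argument applied to the gradient part of Assumption~\ref{ass:IncVar} yields the companion bound $\mathbb E[|\nabla W(x)|^2] \le C/(NM)$ for every $x$.

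The hard part — and precisely the reason the gradient version of Assumption~\ref{ass:IncVar} is imposed — is the passage from these pointwise estimates to control of $\sup_x|W(x)|$: the naive bound $\sup_x|W|\le \frac{1}{N}\sum_i\sup_x|\Delta_M^i-\mathbb E\Delta_M|$ destroys the variance reduction and gives only $O(1/M)$, because the maximizing $x$ is random and couples the summands. To avoid this I would invoke a Morrey/Sobolev-type embedding to dominate $\sup_x|W(x)|^2$ by $C\bigl(\int_{\mathcal{X}}|W(x)|^2\,\mathrm{d}x + \int_{\mathcal{X}}|\nabla W(x)|^2\,\mathrm{d}x\bigr)$; taking expectations, applying Fubini, and inserting the two pointwise bounds gives $\mathbb E[\sup_x|W(x)|^2]\le C/(NM)$ directly. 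This embedding is immediate in low dimension (the regime of interest for BO) and extends to general $d$ at the cost of higher moments or, more robustly, by replacing it with a chaining/covering argument over an $\delta$-net of $\mathcal{X}$: control $W$ at the net points via the pointwise estimate and a maximal inequality, control the oscillation between net points via the gradient empirical process, and optimize $\delta$. Since uniformly Lipschitz classes on a fixed-dimensional compact set have polynomial covering numbers and are Donsker, the supremum costs only a constant depending on $d$, $\mathcal{X}$, and the Lipschitz constant, and degrades neither the $1/N$ nor the $1/M$ rate, delivering the claimed $C/(NM)$. The delicate technical point to watch is ensuring the constant in this uniformization step is genuinely dimension- and $M$-independent, so that the rate is preserved exactly.
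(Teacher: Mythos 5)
Your proposal takes a genuinely different route from the paper, and the difference is instructive. The paper's proof is essentially three lines: after the same cancellation of the outer reward terms that you perform, it writes the centered difference as $\frac1N\sum_{i=1}^N(\Delta_M^i(x)-\mathbb E[\Delta_M(x)])$ with $\Delta_M^i$ i.i.d., and then asserts, as an \emph{equality},
\[
\mathbb E\Big[\sup_x\Big|\tfrac{1}{N}\textstyle\sum_{i=1}^N\big(\Delta_M^i(x)-\mathbb E[\Delta_M(x)]\big)\Big|^2\Big]
\;=\;\frac{1}{N}\,\mathbb E\Big[\sup_x\big|\Delta_M(x)-\mathbb E[\Delta_M(x)]\big|^2\Big],
\]
after which the variance half of Assumption~\ref{ass:IncVar} gives $C/(NM)$; the gradient half of the assumption is never invoked in this lemma (it is reserved for the $\nabla\alpha$ statement of Theorem~\ref{thm:qFunc}). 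In other words, the paper resolves what you call the ``hard part'' by fiat: it pulls the $1/N$ through the supremum as though sup and expectation commuted. Your observation that this is the crux is exactly right --- the identity holds pointwise in $x$ but not under the sup, and for i.i.d.\ centered processes the left side can exceed the right by factors growing with the complexity of the index set (e.g.\ a $\log K$ factor for $K$ independent sign bumps). Your alternative --- prove the pointwise bounds $\mathbb E|W(x)|^2\le C/(NM)$ and $\mathbb E|\nabla W(x)|^2\le C/(NM)$ exactly, using both halves of Assumption~\ref{ass:IncVar}, then uniformize by the embedding $W^{1,2}(\mathcal X)\hookrightarrow L^\infty(\mathcal X)$ --- is rigorous where it applies, avoids the illegal exchange entirely, and has the side benefit of explaining why the gradient half of Assumption~\ref{ass:IncVar} is natural for this lemma.

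The gap is in the scope of your uniformization step. Morrey's embedding $W^{1,2}\hookrightarrow L^\infty$ holds only for $d=1$, and $d=1$ is not ``the regime of interest'': the paper's statement is for $\mathcal X\subseteq\mathbb R^d$ generally, and its experiments run up to $d=8$. Neither of your fallbacks closes this. The $L^p$ Sobolev route with $p>d$ needs $p$-th moment versions of Assumption~\ref{ass:IncVar}, which the assumption (stated only for second moments) does not supply. The covering argument does not preserve the rate with second moments alone: a union bound over a $\delta$-net costs a factor $\delta^{-d}$, and the oscillation term controlled through $\sup_z|\nabla W(z)|$ either is bounded crudely by Jensen at $C/M$ (losing the $1/N$) or requires uniform control of the gradient empirical process --- which is the very statement being proved, applied to $\nabla W$, hence circular. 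Optimizing $\delta^{-d}/(NM)+\delta^2/M$ over $\delta$ yields $N^{-2/(d+2)}/M$, strictly worse than $1/(NM)$ for every $d\ge1$, and genuine multi-scale chaining would need sub-Gaussian-type increment control that second moments do not provide. So, as written, your argument proves the lemma for $d=1$ (or under strengthened moment assumptions), not in the generality claimed --- although it must be said that the paper's own proof of the general case rests precisely on the unjustified equality that your approach was designed to avoid.
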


\begin{proof}
    We have
    \begin{align*}
        &\mathbb E\big[\sup_{x}\big |\alpha_{N,M}(x) - \alpha_{N,\infty}(x) - \mathbb E[\alpha_{N,M}(x) - \alpha_{N,\infty}(x)]\big |^2\big]\\
        =&\mathbb E\Bigg[\sup_{x}\Bigg|\frac{1}{N}\sum_{i=1}^{N}\Bigg(\max_{x_1^i}\frac{1}{M}\sum_{j=1}^{M}
    r(f^{ij}(x_1^i;\mathcal{D}_1^i(x))) - \max_{x_1^i} \mathbb E_{f(\cdot ; 
\mathcal D^i_1(x))} 
 \left[ r( f,x_1^i )  \right]\\
 & - \mathbb E\bigg[\max_{x_1^i}\frac{1}{M}\sum_{j=1}^{M}
    r(f^{ij}(x_1^i;\mathcal{D}_1^i(x))) - \max_{x_1^i} \mathbb E_{f(\cdot ; 
\mathcal D^i_1(x))} 
 \left[ r( f,x_1^i )  \right]\bigg]\Bigg)\Bigg|^2\Bigg]\\
 =&\frac{1}{N}\mathbb E\Bigg[\sup_{x}\Bigg(\max_{x_1^i}\frac{1}{M}\sum_{j=1}^{M}
    r(f^{ij}(x_1^i;\mathcal{D}_1^i(x))) - \max_{x_1^i} \mathbb E_{f(\cdot ; 
\mathcal D^i_1(x))} 
 \left[ r( f,x_1^i )  \right]\\
 & - \mathbb E\bigg[\max_{x_1^i}\frac{1}{M}\sum_{j=1}^{M}
    r(f^{ij}(x_1^i;\mathcal{D}_1^i(x))) - \max_{x_1^i} \mathbb E_{f(\cdot ; 
\mathcal D^i_1(x))} 
 \left[ r( f,x_1^i )  \right]\bigg]\Bigg)^2\Bigg]\\
 =& \frac{1}{N} \mathbb E \left[ \sup_x \Big|\Delta_M(x) - \mathbb E [\Delta_M (x)] \Big|^2  \right] 
 \\
 \leq & \frac{C}{NM}.
    \end{align*}
In the final inequality, we apply Assumption \ref{ass:IncVar}.
\end{proof}

\subsection{Proof of Lemma \ref{lem:cross}}\label{sec:ProofLemma1}
Now, we prove Lemma \ref{lem:cross}, which is a direct result of Lemma \ref{lem:IoIRate}.

\begin{proof}
    Note that 
    \begin{align*}
        \Delta \alpha_{N_l,M_l}(x) &= \alpha_{N_l,M_l}(x) - \alpha_{N_l,M_{l-1}}(x)\\
        &= \alpha_{N_l,M_l}(x) - \alpha_{N_l,\infty}(x) - \alpha_{N_l,M_{l-1}}(x) + \alpha_{N_l,\infty}(x).
    \end{align*}
    Thus, 
    \begin{align*}
        \mathbb E[\sup_x|\Delta \alpha_{N_l,M_l}(x) - \mathbb E[\Delta \alpha_{N_l,M_l}(x)]|^2] &= \mathbb E\big[\sup_x\big|\alpha_{N_l,M_l}(x) - \alpha_{N_l,\infty}(x) - \mathbb E[\alpha_{N_l,M_l}(x) - \alpha_{N_l,\infty}(x)]\\ 
        &\quad - \alpha_{N_l,M_{l-1}}(x) + \alpha_{N_l,\infty}(x) + \mathbb E[\alpha_{N_l,M_{l-1}}(x) - \alpha_{N_l,\infty}(x)]\big|^2\big]\\
        &\leq 2\mathbb E\big[\sup_x\big|\alpha_{N_l,M_l}(x) - \alpha_{N_l,\infty}(x) - \mathbb E[\alpha_{N_l,M_l}(x) - \alpha_{N_l,\infty}(x)]\big|^2\big]\\ 
        &\quad + 2\mathbb E\big[\sup_x\big|\alpha_{N_l,M_{l-1}}(x) - \alpha_{N_l,\infty}(x) - \mathbb E[\alpha_{N_l,M_{l-1}}(x) - \alpha_{N_l,\infty}(x)]\big|^2\big]\\
        &\leq \frac{2C}{N_lM_l} + \frac{2C}{N_lM_{l-1}}\\
        &\leq \frac{C}{N_lM_l},
    \end{align*}
    where the second inequality is obtained by applying Lemma \ref{lem:IoIRate} and the last inequality is by $M_{l} = cM_{l-1}$ for some positive integer $c$ and relabeling the constant $C$.
\end{proof}

\subsection{Proof of Theorem \ref{thm:qFunc}}\label{sec:ProofqFunc}
The proof of Theorem \ref{thm:qFunc} is given below.
{The proof is only given for $\alpha$, however,
given the assumptions the proof for $\nabla \alpha$ 
follows in exactly the same fashion.}
\begin{proof}
To begin, the error can be divided into its bias and variance components as follows:
    \begin{align}
        \mathbb{E}[\sup_x \|\alpha_{\sf ML}(x) - \alpha(x;\mathcal{D})\|^2] &\leq 2\mathbb{E}[\sup_x \|\alpha_{\sf ML}(x) - \mathbb{E}[\alpha_{\sf ML}(x)]\|^2]\label{eq:firstq}\\
        &\quad +  2 \sup_x |\mathbb E[\alpha_{\sf ML}(x)] - \alpha(x;\mathcal{D})|^2\label{eq:secondq}
    \end{align}
    We will proceed to bound \eqref{eq:firstq} and \eqref{eq:secondq}. 
    We bound the term \eqref{eq:firstq}. 
 Notice that, by Cauchy-Schwartz
 \[
|\alpha_{\sf ML}(x) - \mathbb E [\alpha_{\sf ML}(x)]  |^2
=
\Big| \sum_{l=0}^L \Delta \alpha_{N_l, M_l}(x) - \mathbb E [\Delta \alpha_{N_l, M_l}(x)]  \Big|^2
 \leq (L+1)  \sum_{l=0}^L \Big| \Delta \alpha_{N_l, M_l}(x) - \mathbb E [\Delta \alpha_{N_l, M_l}(x)]  \Big|^2 \, .
  \]
 Thus, we have
    \begin{align}
        \mathbb{E}[\sup_x \|\alpha_{\sf ML}(x) - \mathbb{E}[\alpha_{\sf ML}(x)]\|^2] 
        &\leq (L+1) \sum_{l=0}^L \mathbb{E}[\sup_x\|\Delta \alpha_{N_l, M_l}(x) - \mathbb{E}[\Delta \alpha_{N_l, M_l}(x)]\|^2]\nonumber\\
        &\leq C (L+1)\left(\frac{V_0}{N_0} + \sum_{l=1}^L\frac{1}{N_lM_l}\right).\label{eq:firstBound}
    \end{align}
    The first line above is due to independence. For $l=0$, we have the variance of $\alpha_{N_0,M_0}$ as $\mathbb E[\|\alpha_{N_0,M_0}-\mathbb E[\alpha_{N_0,M_0}]\|^2]=V_0/N_0$. The equation \eqref{eq:firstBound} follows from Lemma \ref{lem:cross}.

    Finally, we deal with the term \eqref{eq:secondq}. First we note that $\alpha_{\sf ML}(x)$ is an unbiased estimator of $\alpha_{\infty,L}(x)$ since
    \begin{align*}
        \mathbb E[\alpha_{\sf ML}(x)] &= \mathbb E\Big[\sum_{l=0}^L\Delta \alpha_{N_l, M_l}(x)\Big]\\
        &= \mathbb E[\alpha_{N_0,M_0}(x)] + \sum_{l=1}^L(\mathbb E[\alpha_{N_l,M_l}(x)] - \mathbb E[\alpha_{N_l,M_{l-1}}(x)])\\
        &= \mathbb E[\alpha_{N_0,M_0}(x)] + \sum_{l=1}^L(\mathbb E[\alpha_{N_l,M_l}(x)] - \mathbb E[\alpha_{N_{l-1},M_{l-1}}(x)])\\
        &= \mathbb E[\alpha_{N_L,M_L}(x)]\\
        &= \alpha_{\infty,M_L}(x).
    \end{align*}
    For the third line, above, note that $\mathbb E[\alpha_{N_l,M_{l-1}}(x)] = \mathbb E[\alpha_{N_{l-1},M_{l-1}}(x)]$. In the fourth line, we cancel terms in the interpolating sum.
    Thus
    \begin{equation*}
        |\mathbb E[\alpha_{\sf ML}(x)] - \alpha(x;\mathcal{D})| = | \alpha_{\infty,M_L}(x) - \alpha(x;\mathcal{D})|.
    \end{equation*}
   And so, 
    \begin{align}
      &  \sup_x |\alpha_{\infty,M_L}(x) - \alpha(x;\mathcal{D})|^2 \notag \\
      &= \sup_x\Bigg|
   \mathbb E_{f(\cdot ; \mathcal D)} \bigg[ r(f,x) 
+ 
 \max_{x_1}\frac{1}{M_{L}}\sum_{j=1}^{M_{L}}
    r(f^{j}(x_1;\mathcal{D}_1(x)))\bigg]  - 
   \mathbb E_{f(\cdot ; \mathcal D)} \bigg[ r(f,x) 
+ 
 \max_{x_1} \mathbb E_{f(\cdot ; 
\mathcal D_1(x))} 
 \left[ r( f,x_1 )  \right] \bigg]\Bigg|^2\nonumber\\
        &= \sup_x \Bigg|
   \mathbb E_{f(\cdot ; \mathcal D)} \bigg[
 \max_{x_1}\frac{1}{M_{L}}\sum_{j=1}^{M_{L}}
    r(f^{j}(x_1;\mathcal{D}_1(x))) - 
 \max_{x_1} \mathbb E_{f(\cdot ; 
\mathcal D_1(x))} 
 \left[ r( f,x_1 )  \right] \bigg]\Bigg|^2 \nonumber\\
 & = \sup_x |\mathbb E \big[ \Delta_{M_L}(x) \big] |^2 \notag
 \\
 &\leq \frac{C}{M_L} \label{eq:secondBound},
    \end{align}
The final inequality follows by Assumption \ref{ass:IncVar}.
Applying \eqref{eq:firstBound} and \eqref{eq:secondBound} to \eqref{eq:firstq} and \eqref{eq:secondq} gives
\begin{equation*}
    \mathbb E \big[ \sup_x \|\alpha_{\sf ML}(x) - \alpha(x;\mathcal{D})\|^2 \big] \leq C(L+1)\left(\frac{V_0}{N_0} +  \sum_{l=1}^{L}\frac{1}{N_lM_l}\right) + \frac{C}{M_L}.
\end{equation*}
Thus the required the bound in \ref{thm:qFunc} holds. 

Setting $L = 2|\log\varepsilon|/\log 2$ gives the term $C/M_L$ of $O(\varepsilon^2)$. Given the cost assumption, we have total computational cost Cost$=\sum_{l=0}^{L}M_lN_l$. We can minimise this cost for a fixed variance $\frac{V_0}{N_0} + \sum_{l=1}^L\frac{1}{N_lM_l} = \varepsilon^2/(L+1)$ using Lagrange multipliers. See Lemma \ref{lem:costmin} for a proof. Specifically, Lemma \ref{lem:costmin}
(with $c_l=M_l$, $l=0,...,L$, $v_0=V_0$ and $v_l=1/M_l$, $l=1,...,L$) gives 
\begin{equation*}
    N_0 = (L+1)\varepsilon^{-2}K_L\left(\frac{V_0}{M_0}\right)^{1/2}\quad \text{and} \quad N_l=(L+1)\varepsilon^{-2}K_L\frac{1}{M_l}, \quad \text{for}\quad l \geq 1,
\end{equation*}
for
\begin{equation*}
    K_L = (V_0M_0)^{1/2} + L,
\end{equation*}
and hence COST $=(L+1)\varepsilon^{-2}K_L^2 = O(|\log \varepsilon|^3 / \varepsilon^2 )$ since $K_L={O}(|\log\varepsilon|)$ and $L = O(|\log\varepsilon|)$.
Finally, observe that if we round up terms, I.e.
$
    N_0 = \Big\lceil \varepsilon^{-2}K_L\left(\frac{V_0}{M_0}\right)^{1/2} \Big\rceil $ \text{and} $ N_l= \Big\lceil \varepsilon^{-2}K_L\frac{1}{M_l} \Big\rceil, \quad \text{for}\quad l \geq 1,
$
then the additional cost added is $\sum_{l=0}^L M_l \leq 2^{L+1} \leq \frac{2}{\epsilon^2}$. Thus the total cost of the algorithm is of order $O (  \varepsilon^{-2}|\log \varepsilon|^3)$.
\end{proof}

\begin{lem}\label{lem:costmin}
The cost minimization 
\begin{align*}
  \text{minimize} \quad \sum_{l=0}^L n_l c_l \quad \text{subject to} \quad \sum_{l=0}^L \frac{v_l}{n_l} \leq \epsilon^2
\quad \text{over} \quad n_l \geq 0, \;\; l=0,...,L,
\end{align*}
has optimal solution and the optimal cost
\begin{align*}
  n^\star_l = \frac{1}{\epsilon^2} \sqrt{\frac{v_l}{c_l}} \left( \sum_{l'=0}^L \sqrt{v_{l'}c_{l'}} \right)\qquad \text{and} \qquad   C^\star = \frac{1}{\epsilon^2} \left( \sum_{l'=0}^L \sqrt{v_{l'}c_{l'}}
\right)^2 .
\end{align*}
\end{lem}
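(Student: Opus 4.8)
The plan is to treat this as a convex optimization problem and solve it by Lagrange multipliers, then certify that the stationary point found is the global minimizer. A cleaner alternative, which I would present alongside (or in place of) the multiplier computation, bypasses the multipliers entirely via a Cauchy--Schwarz argument that simultaneously yields the optimal cost and the optimizer.

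First I would record two structural observations. The constraint is active at any minimizer: the objective $\sum_l n_l c_l$ is strictly increasing in each $n_l$ while the feasibility function $\sum_l v_l/n_l$ is strictly decreasing in each $n_l$, so if $\sum_l v_l/n_l < \epsilon^2$ strictly one could lower some $n_l$ to reduce cost while staying feasible; hence $\sum_{l=0}^L v_l/n_l = \epsilon^2$ at the optimum. Moreover the program is convex, since the objective is linear and each map $n_l \mapsto v_l/n_l$ is convex on $(0,\infty)$, so the feasible region is convex and any KKT point is a global minimizer.

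Next I would form the Lagrangian $\mathcal{L}(n,\lambda) = \sum_{l=0}^L n_l c_l + \lambda\big(\sum_{l=0}^L v_l/n_l - \epsilon^2\big)$ and set $\partial_{n_l}\mathcal{L} = c_l - \lambda v_l/n_l^2 = 0$, giving $n_l = \sqrt{\lambda v_l/c_l}$. Substituting into the active constraint yields $\sum_l v_l/n_l = \lambda^{-1/2}\sum_l \sqrt{v_l c_l} = \epsilon^2$, so $\sqrt{\lambda} = \epsilon^{-2}\sum_l \sqrt{v_l c_l}$. Feeding this back gives $n_l^\star = \epsilon^{-2}\sqrt{v_l/c_l}\,\sum_{l'}\sqrt{v_{l'}c_{l'}}$, exactly as claimed, and the optimal cost follows from $C^\star = \sum_l n_l^\star c_l = \epsilon^{-2}\big(\sum_l \sqrt{v_l c_l}\big)\big(\sum_{l'}\sqrt{v_{l'}c_{l'}}\big) = \epsilon^{-2}\big(\sum_l \sqrt{v_l c_l}\big)^2$, using $\sqrt{v_l/c_l}\,c_l = \sqrt{v_l c_l}$.

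The computations are entirely routine, so the only point requiring genuine care --- the \emph{main obstacle}, such as it is --- is justifying that this stationary point is the global minimizer over the open domain $n_l > 0$. The convexity observation above settles this, but I would prefer to give the self-contained Cauchy--Schwarz certificate: for any feasible $n$, $\big(\sum_l \sqrt{v_l c_l}\big)^2 = \big(\sum_l \sqrt{v_l/n_l}\,\sqrt{n_l c_l}\big)^2 \leq \big(\sum_l v_l/n_l\big)\big(\sum_l n_l c_l\big) \leq \epsilon^2 \sum_l n_l c_l$, whence $\sum_l n_l c_l \geq C^\star$ for every feasible $n$. Equality in Cauchy--Schwarz forces $v_l/n_l \propto n_l c_l$, i.e. $n_l \propto \sqrt{v_l/c_l}$, and pinning the proportionality constant by the active constraint recovers precisely $n^\star$; checking that $n^\star$ is feasible with equality then shows the lower bound is attained, establishing global optimality. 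I would also note the boundary behaviour is benign, since $n_l \to 0$ sends $\sum_l v_l/n_l \to +\infty$, pushing the minimizer strictly into the interior.
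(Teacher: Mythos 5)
Your Lagrangian computation is, in substance, identical to the paper's proof: the paper forms the same Lagrangian, solves $c_l - \lambda v_l/n_l^2 = 0$ to get $n_l = \sqrt{\lambda v_l/c_l}$, pins $\lambda$ by primal feasibility, and reads off $n_l^\star$ and $C^\star$ exactly as you do. Where you go beyond the paper is in certifying optimality: the paper stops at the stationarity computation and never argues that the stationary point is a global minimizer (nor that the constraint must be active), whereas you supply both the convexity/KKT observation and, better still, the self-contained Cauchy--Schwarz certificate
\[
\Bigl( \sum_{l=0}^L \sqrt{v_l c_l} \Bigr)^2 \leq \Bigl( \sum_{l=0}^L \frac{v_l}{n_l} \Bigr)\Bigl( \sum_{l=0}^L n_l c_l \Bigr) \leq \epsilon^2 \sum_{l=0}^L n_l c_l ,
\]
whose equality condition $n_l \propto \sqrt{v_l/c_l}$ recovers the claimed optimizer and simultaneously proves the lower bound is attained. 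That argument is more elementary (no calculus, no multipliers), handles the boundary behaviour you note ($n_l \to 0$ forces infeasibility), and turns the paper's heuristic stationarity computation into a complete proof; its only cost is that it is specific to this particular objective--constraint pairing, while the multiplier route generalizes mechanically to other cost/variance models. All of your individual steps check out, so there is no gap --- your write-up is strictly stronger than the paper's.
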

\begin{proof}
    The Lagrangian of the above optimization problem is:
    \begin{equation*}
        \mathcal{L}(n;\lambda) = \sum_{l=0}^L n_l c_l  + \lambda \left(
\sum_{l=0}^L \frac{v_l}{n_l} - \epsilon^2
        \right) \, .
    \end{equation*}
    Thus 
    \[
        c_l - \frac{\lambda v_l}{n_l^2} = 0 ,\qquad  \text{and thus} \qquad n_l = \sqrt{\frac{\lambda v_l}{c_l}} \, .
    \]
For primal feasibility, we require
\[
\epsilon^2 =  \frac{v_l}{n_l} =  \sum_{l=0}^L \sqrt{\frac{c_l v_l}{\lambda }}, 
\qquad \text{thus} \qquad 
\lambda = \left( 
    \frac{1}{\epsilon^2} \sum_{l=0}^L \sqrt{c_lv_l} 
\right)^2\, .
\]
Thus we have, as required,
\[
n_l =  \left( \frac{1}{\epsilon^2} \sum_{l=0}^L \sqrt{c_lv_l} \right) \times \sqrt{ \frac{v_l}{c_l}} \, ,
\qquad
\text{and}
\qquad
\sum_{l=0}^L n_l c_l =  \frac{1}{\epsilon^2} \left(\sum_{l=0}^L \sqrt{c_lv_l} \right)^2 \, .
\]
\end{proof}

\subsection{Proof of Corollary \ref{cor:value}}\label{sec:corvalue}

The proof of Corollary \ref{cor:value} is a consequence of the following standard lemma.

\begin{lem}\label{lem:maxy}
    For two functions $\alpha_1(x)$ and $\alpha_2(x)$
    \[
| \max_x \alpha_1(x) - \max_x \alpha_2(x) | \leq \sup_x | \alpha_1(x) - \alpha_2(x) |
    \]
\end{lem}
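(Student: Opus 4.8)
The plan is to prove the two inequalities $\max_x \alpha_1(x) - \max_x \alpha_2(x) \leq \sup_x|\alpha_1(x)-\alpha_2(x)|$ and its mirror image separately, and then combine them. The cleanest route is entirely pointwise and does not even require that the maxima be attained, though attainment is guaranteed here by the compactness of $\mathcal{X}$ in Assumption~\ref{assumptions}. First I would fix an arbitrary $x\in\mathcal{X}$ and write $\alpha_1(x) = \alpha_2(x) + (\alpha_1(x)-\alpha_2(x))$, so that
\begin{equation*}
\alpha_1(x) \leq \alpha_2(x) + \sup_y|\alpha_1(y)-\alpha_2(y)| \leq \max_y \alpha_2(y) + \sup_y|\alpha_1(y)-\alpha_2(y)|.
\end{equation*}

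The right-hand side no longer depends on $x$, so taking the supremum over $x$ on the left yields $\max_x\alpha_1(x) \leq \max_x\alpha_2(x) + \sup_y|\alpha_1(y)-\alpha_2(y)|$, i.e. $\max_x\alpha_1(x) - \max_x\alpha_2(x) \leq \sup_y|\alpha_1(y)-\alpha_2(y)|$. Swapping the roles of $\alpha_1$ and $\alpha_2$ in the identical argument gives $\max_x\alpha_2(x) - \max_x\alpha_1(x) \leq \sup_y|\alpha_1(y)-\alpha_2(y)|$. Since the absolute value of a real number is the larger of it and its negation, these two bounds together give $|\max_x\alpha_1(x)-\max_x\alpha_2(x)| \leq \sup_x|\alpha_1(x)-\alpha_2(x)|$, as claimed.

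An equivalent \emph{argmax}-based version would set $x_i^\star \in \argmax_x \alpha_i(x)$ and, after assuming without loss of generality that $\max_x\alpha_1(x)\geq\max_x\alpha_2(x)$, use $\alpha_2(x_2^\star)\geq \alpha_2(x_1^\star)$ to write $\max_x\alpha_1 - \max_x\alpha_2 = \alpha_1(x_1^\star)-\alpha_2(x_2^\star) \leq \alpha_1(x_1^\star)-\alpha_2(x_1^\star)\leq \sup_x|\alpha_1-\alpha_2|$; I prefer the pointwise form above because it avoids invoking attainment. There is essentially no substantive obstacle in this lemma: it is a one-step supremum manipulation of the same flavour as the reverse triangle inequality, and the only point meriting a comment is the harmless choice between working with $\sup$ (valid unconditionally) or with $\max$ (valid here by compactness). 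Its role is simply to transfer the uniform error bound of Theorem~\ref{thm:qFunc} on $|\alpha_{\sf ML}(x)-\alpha(x;\mathcal{D})|$ into the corresponding bound on the value functions $|v_{\sf ML}-v^\star(\mathcal{D})|$ needed for Corollary~\ref{cor:value}.
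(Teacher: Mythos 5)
Your proposal is correct and is essentially the paper's argument: the ``argmax-based version'' you sketch in your final paragraph is exactly the proof given in the paper (take $x_1^\star \in \argmax_x \alpha_1(x)$, bound $\alpha_1(x_1^\star) - \max_x \alpha_2(x) \leq \alpha_1(x_1^\star) - \alpha_2(x_1^\star) \leq \sup_x|\alpha_1(x)-\alpha_2(x)|$, and conclude by symmetry). Your preferred pointwise form is a harmless variant of the same two-sided-bound-plus-symmetry idea, with the minor benefit of not requiring the maxima to be attained.
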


\begin{proof}
    Let $x_1^\star \in \argmax \alpha_1(x)$ then
    \[
 \max_x \alpha_1(x) - \max_x \alpha_2(x) 
 = \alpha_1(x_1^\star ) - \max_x \alpha_2(x) 
 \leq 
 \alpha_1(x_1^\star ) - \alpha_2(x_1^\star )
 \leq 
 \sup_x | \alpha_1(x) - \alpha_2(x) |
    \]
By a symmetric argument we have $\max_x \alpha_2(x) - \max_x \alpha_1(x)\leq \sup_x | \alpha_1(x) - \alpha_2(x) | $ and thus the result holds.
\end{proof}

\begin{proof}[Proof of Corollary \ref{cor:value}]
By Lemma \ref{lem:maxy} and Theorem \ref{thm:qFunc}
\[
            \mathbb E \big[ | v_{\sf ML} - v^\star(\mathcal{D}) |^2  \big] 
            \leq 
            \mathbb E\Big[\sup_x |\alpha_{\sf ML}(x) - \alpha(x;\mathcal{D})|^2\Big]
            \leq C(L+1)\left(\frac{V_0}{N_0} + \sum_{l=1}^L\frac{1}{N_lM_l}\right) + \frac{C}{M_L}
\]
\end{proof}

\subsection{Proof of Corollary \ref{cor:argmax}.}\label{sec:corargmax}

\begin{proof}[Proof of Corollary \ref{cor:argmax}.]
    By the quadratic growth condition on $\alpha(x;\mathcal{D})$ in Assumption \ref{assumptions}, we have for some $C>0$,  
   \begin{align*}
        C\|\tilde{x}_{\sf ML}^* - x^*\|^2 &\leq   \alpha(x^*;\mathcal{D})- \alpha(\tilde{x}_{\sf ML}^*;\mathcal{D}) \\
        &= \alpha(x^*;\mathcal{D})- \alpha(\tilde{x}_{\sf ML}^*;\mathcal{D}) + \alpha_{\sf ML}(\tilde{x}_{\sf ML}^*) - \alpha_{\sf ML}(\tilde{x}_{\sf ML}^*)\\
        &\leq\alpha(x^*;\mathcal{D})- \alpha(\tilde{x}_{\sf ML}^*;\mathcal{D})  + \alpha_{\sf ML}(\tilde{x}_{\sf ML}^*) - \alpha_{\sf ML}(x^*)\\
        &= \delta_{\sf ML}(\tilde{x}_{\sf ML}^*;\mathcal{D}) - \delta_{\sf ML}(x^*;\mathcal{D})\\
        &\leq \nabla\delta_{\sf ML}(y;\mathcal{D}) \cdot (\tilde{x}_{\sf ML}^*-x^*)\\
        &\leq |\nabla\delta_{\sf ML}(y;\mathcal{D})|\|\tilde{x}_{\sf ML}^*-x^*\|\\
        &={\|\nabla\alpha_{\sf ML}(y) - \nabla\alpha(y;\mathcal{D})\|}\|\tilde{x}_{\sf ML}^*-x^*\|,
    \end{align*}
    
{Dividing through by $\| x^*_{\sf ML} - x^*\|$, squaring and taking sup and expectation, 
we have  
\begin{align*}
 C\mathbb E [ \| x^*_{\sf ML} - x^*\|^2]
& \leq \mathbb E [ \sup_y\|\nabla\alpha_{\sf ML}(y) - \nabla\alpha(y;\mathcal{D})\|^2] 
   \\
&
\leq 
{C(L+1)\left(\frac{V_0}{N_0} + \sum_{l=1}^L\frac{1}{N_lM_l}\right) + \frac{C}{M_L}}  \, .
\end{align*}
Theorem \ref{thm:qFunc} is applied directly. Applying the parameters, \eqref{eq:params} we see that a mean-squared-error of $\varepsilon^2$ is reached with a sample complexity of 
$O(\varepsilon^{-2}|\log \varepsilon|^3) $.} 
\end{proof}

\subsection{Proof of Theorem \ref{thm:main}}\label{sec:ProofOptimizer}
{We now provide the proof of the main MLMC Theorem \ref{thm:main}, reproduced here for convenience.}
{\begin{theorem}\label{thm:mainapp}
Assume \ref{assumptions}, \ref{ass:IncVar} and \ref{ass:qImpOpt}. Let $M_l = 2^l$.
    The estimator $x_{\sf ML}^*$ is such that, for $x\in\mathcal{X}$,
    \begin{equation*}
        \bbE [\| x^*_{\sf ML} - x^*\|^2]\leq C\left(\frac{V_0}{N_0} + \sum_{l=1}^L\frac{1}{N_lM_l^{\beta}}\right) + \frac{C}{M_L}
    \end{equation*}
    for $\beta \geq 1$ and some constant C. Consequently taking $L = O(|\log\varepsilon|)$,
\begin{equation*}
    N_0 = \varepsilon^{-2}K_L\left(\frac{V_0}{M_0}\right)^{1/2}\ \text{and} \ \  N_l=\varepsilon^{-2}K_L\frac{1}{M_l},
\end{equation*}
for $l \geq 1 $ with $K_L = (V_0M_0)^{1/2} + L,$
the MLMC estimator \eqref{eq:mlmcest} gives the following error estimate, for some $x^* \in S^*$,
\begin{equation*}
    \bbE [\| x^*_{\sf ML} - x^*\|^2] \leq C \varepsilon^2 \, ,
\end{equation*}
for a complexity of $O(\varepsilon^{-2}(\log\varepsilon)^2)$.
\end{theorem}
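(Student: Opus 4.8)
The plan is to mirror the bias--variance decomposition from the proof of Theorem~\ref{thm:qFunc}, but carried out at the level of the maximizer rather than the acquisition function, using Assumption~\ref{ass:qImpOpt} to transfer the increment rates of $\alpha$ to those of $z$. First I would introduce the deterministic intermediate target $z_{\infty,M_L} := \argmax_{x\in\calX}\alpha_{\infty,M_L}(x)$, which telescopes as $z_{\infty,M_L} = \sum_{l=0}^L \Delta z_{\infty,M_l}$ with $\Delta z_{\infty,M_l} := z_{\infty,M_l}-z_{\infty,M_{l-1}}$, and split
\begin{equation*}
\bbE\bigl[\|x^*_{\sf ML}-x^*\|^2\bigr] \leq 2\,\bbE\bigl[\|x^*_{\sf ML}-z_{\infty,M_L}\|^2\bigr] + 2\,\|z_{\infty,M_L}-x^*\|^2 \, .
\end{equation*}
The first term is the estimation error of the multilevel optimizer about its finest-level limit; the second is the deterministic inner-approximation bias.

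For the first term I would write $x^*_{\sf ML}-z_{\infty,M_L} = \sum_{l=0}^L Y_l$ with $Y_l := \Delta z_{N_l,M_l}-\Delta z_{\infty,M_l}$, which are mutually independent across $l$ since the level increments draw independent sample batches. Expanding $\bbE[\|\sum_l Y_l\|^2] = \sum_l \mathsf{Var}(Y_l) + \|\sum_l \bbE[Y_l]\|^2$ and using that $\Delta z_{\infty,M_l}$ is deterministic, each variance satisfies $\mathsf{Var}(Y_l) = \mathsf{Var}(\Delta z_{N_l,M_l}) \leq \bbE[\|\Delta z_{N_l,M_l}-\Delta z_{\infty,M_l}\|^2] \leq C/(N_lM_l^{\beta})$; here the final inequality is Assumption~\ref{ass:qImpOpt} applied to the acquisition-function increment bound, which holds because the increments of $\alpha$ are unbiased (as shown inside the proof of Theorem~\ref{thm:qFunc}) so that Lemma~\ref{lem:cross} supplies exactly the premise, and for $l=0$ this reads $CV_0/N_0$. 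The decisive point is that independence yields this variance sum with \emph{no} prefactor $(L+1)$, which is precisely what saves one logarithmic factor relative to Theorem~\ref{thm:qFunc}.

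For the deterministic bias term I would reuse the quadratic-growth argument in the proof of Corollary~\ref{cor:argmax}, now comparing $\alpha_{\infty,M_L}$ against $\alpha$ instead of $\alpha_{\sf ML}$ against $\alpha$. Assumption~\ref{assumptions}(1) gives $K\|z_{\infty,M_L}-x^*\|^2 \leq \alpha(x^*)-\alpha(z_{\infty,M_L})$, and inserting $\alpha_{\infty,M_L}$ together with the optimality $\alpha_{\infty,M_L}(z_{\infty,M_L})\geq\alpha_{\infty,M_L}(x^*)$ bounds the right side by $\sup_y\|\nabla\alpha_{\infty,M_L}(y)-\nabla\alpha(y)\|\,\|z_{\infty,M_L}-x^*\|$. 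Since $\nabla\alpha_{\infty,M_L}(y)-\nabla\alpha(y) = \bbE[\nabla\Delta_{M_L}(y)]$, the gradient half of Assumption~\ref{ass:IncVar} yields $\|z_{\infty,M_L}-x^*\|^2 \leq C/M_L$. Combining the two terms gives the stated estimate $\bbE[\|x^*_{\sf ML}-x^*\|^2]\leq C(V_0/N_0+\sum_{l=1}^L 1/(N_lM_l^{\beta}))+C/M_L$, and the complexity claim then follows exactly as in Theorem~\ref{thm:qFunc}: take $L$ with $M_L=\lceil\varepsilon^{-2}\rceil$ so the bias is $O(\varepsilon^2)$, and minimize $\sum_l N_lM_l$ under the variance constraint via Lemma~\ref{lem:costmin} with $c_l=M_l$, $v_l=M_l^{-\beta}$; for $\beta=1$ this gives $K_L=V_0^{1/2}+L=O(|\log\varepsilon|)$ and total cost $O(\varepsilon^{-2}|\log\varepsilon|^2)$, while the antithetic construction with $\beta\approx 1.5>\gamma=1$ enters the canonical regime of Proposition~\ref{prop:mlmc}, recovering $O(\varepsilon^{-2})$ as in Remark~\ref{rem:Ant}.

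\textbf{The main obstacle} is the residual bias term $\|\sum_l\bbE[Y_l]\|^2$. Because $\argmax$ does not commute with expectation, the optimizer increments are \emph{not} unbiased, so $\bbE[\Delta z_{N_l,M_l}]\neq\Delta z_{\infty,M_l}$ in general; the only generic control is the Jensen bound $\|\bbE[Y_l]\|\leq (C/(N_lM_l^{\beta}))^{1/2}$, which after summation and substitution of the parameter choices \eqref{eq:params} degrades to $O(\varepsilon^2|\log\varepsilon|)$ rather than $O(\varepsilon^2)$, reintroducing a spurious logarithmic factor. Removing this term at the sharp rate is exactly the delicate step that the ``simplified estimator'' framing and the strong Assumption~\ref{ass:qImpOpt} are designed to absorb---effectively letting the increments be treated as unbiased so that the $(L+1)$-free bound holds---while the fully rigorous accounting of this unbiasedness gap is what the paper defers to future work.
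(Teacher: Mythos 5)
Your proposal follows essentially the same route as the paper's proof: the same decomposition of $x^*_{\sf ML}-x^*$ into per-level increment errors $\Delta z_{N_l,M_l}-\Delta z_{\infty,M_l}$ plus the finest-level bias $z_{\infty,M_L}-x^*$, the same use of Assumption~\ref{ass:qImpOpt} together with Lemma~\ref{lem:cross} to obtain the $O(1/(N_lM_l^{\beta}))$ per-level bounds, an SAA/quadratic-growth argument for the $O(1/M_L)$ bias, and the same Lagrange-multiplier cost minimization (Lemma~\ref{lem:costmin}) yielding the borderline complexity $O(\varepsilon^{-2}(\log\varepsilon)^2)$. The obstacle you flag---the mean residual $\|\sum_l\bbE[Y_l]\|^2$ arising from the biased optimizer increments---is exactly the point the paper's own proof passes over silently (its second displayed inequality holds with an $L$-independent constant only if the increments are treated as effectively unbiased), which the paper concedes by framing its theory as a ``simplified estimator'' under the strong Assumption~\ref{ass:qImpOpt} and deferring the complete argument to future work.
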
}

\begin{proof}

We have
\begin{align*}
    \mathbb{E}[\|x^*_{\sf ML} - x^*\|^2] &\leq \mathbb{E}\bigg[\bigg\|\sum_{l=0}^L \Delta z_{N_l,M_l} - \sum_{l=0}^L (z_{\infty, M_l}-z_{\infty, M_{l-1}}) + \sum_{l=0}^L (z_{\infty, M_l}-z_{\infty, M_{l-1}})- x^*\bigg\|^2\bigg]\\
    &\leq C \sum_{l=0}^L\mathbb{E}\bigg[\bigg\|\Delta z_{N_l,M_l} - (z_{\infty, M_l} - z_{\infty, M_{l-1}})\bigg\|^2\bigg] + C\mathbb E[\|z_{\infty, M_L} - x^*\|^2]\\
    &\leq C\left(\frac{V_0}{N_0} + \sum_{l=1}^L\frac{1}{N_lM_l^{\beta}}\right) + \frac{C}{M_L}
\end{align*}
for $\beta=1$, where the last line is due to Assumption \ref{ass:qImpOpt}, Lemma \ref{lem:cross} and SAA results. The bias of ${O}(1/M_L) = {O}(2^{-L})$ requires one to choose $L = O(|\log\varepsilon|)$ for the desired MSE. We have total computational cost Cost$=\sum_{l=0}^{L}M_lN_l$. Minimising the cost for a fixed variance $\frac{V_0}{N_0} + \sum_{l=1}^L\frac{1}{N_lM_l} = \varepsilon^2$ using Lagrange multipliers as Lemma \ref{lem:costmin} gives 
\begin{equation*}
    N_0 = \varepsilon^{-2}K_L(V_0M_0)^{1/2}\quad \text{and} \quad N_l=\varepsilon^{-2}K_LM_l^{-(1+\gamma)/2}, \quad \text{for}\quad l \geq 1,
\end{equation*}
for
\begin{equation*}
    K_L = (V_0M_0)^{1/2} + C\sum_{l=1}^LM_l^{-(1-1)/2} = (V_0M_0)^{1/2} + CL,
\end{equation*}
and hence COST $=\varepsilon^{-2}K_L^2$ with $K_L={O}(|\log\varepsilon|)$. We recover the borderline complexity.
\end{proof}

\section{Numerical Results}

\subsection{Inner and outer Monte Carlo rates}\label{sec:NumMCRates}

We test the inner and outer rates using the 1D toy example \ref{eq:toy} introduced before. Here, we apply \eqref{eq:OneStepMC} with EIs and assume that we can only approximate the inner EI using MC estimation, i.e., we have
\begin{align}\label{eq:InnerEstimator}
    \alpha_{1,N,M}(x) = {\sf EI}(x|\mathcal{D}) + 
\frac{1}{N}\sum_{i=1}^{N}\max_{x_1}\frac{1}{M}\sum_{j=1}^M(f^j(x_1) - f^*(\{ \mathcal{D}_n, (x,f^i(x))\}))_+. 
\end{align}
The reason for using \eqref{eq:OneStepMC} is that we have a benchmark for the inner expectation.\\

\noindent
\textbf{Rate with respect to $N$:} We test the convergence with respect to $N$ using equation \eqref{eq:InnerEstimator} with inner MC replaced by the analytical solution of EI. The reference solution is computed with $N = 2^{12}$. According to Figure \ref{fig:OneEIOuter}, the maximizer converges with $O(N^{-1})$.\\ 

\noindent
\textbf{Rate with respect to $M$:} For the convergence with respect to $M$, we fix the number of sample $N=2^5$ for the outer Monte Carlo of equation \eqref{eq:InnerEstimator} and then let $M$ varies. Note that the approximation is sensitive to the base sample, so we fixed the base samples ($\xi^i$) for the outer MC. Figure \ref{fig:OneEIInner} illustrates that the maximizer converges with $O(M^{-1})$.

\begin{figure}
    \centering
    \begin{subfigure}{0.45\textwidth}
    \includegraphics[width=\linewidth]{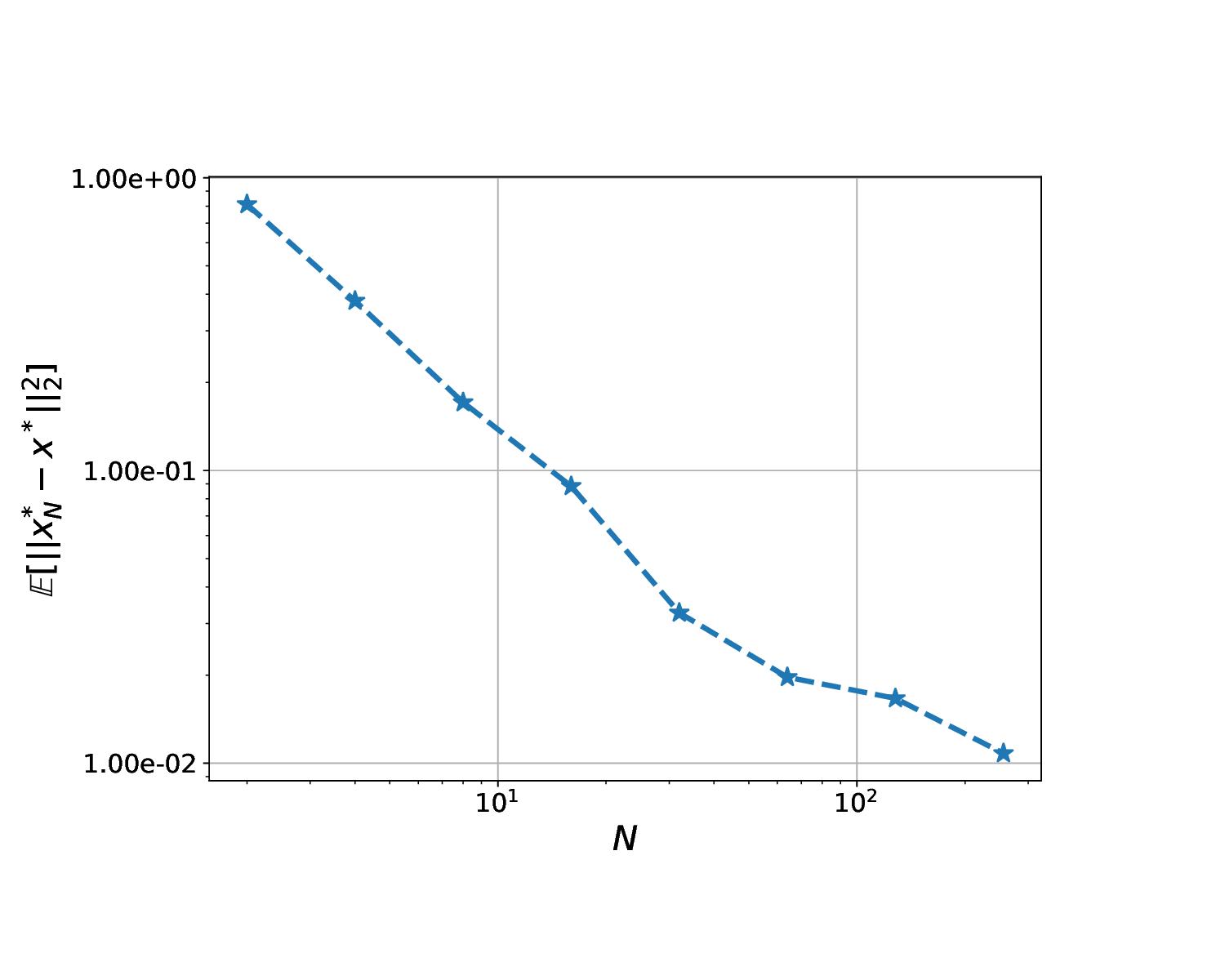}
    \caption{}
    \label{fig:OneEIOuter}
    \end{subfigure}
    \hfill
    \begin{subfigure}{0.45\textwidth}
    \includegraphics[width=\linewidth]{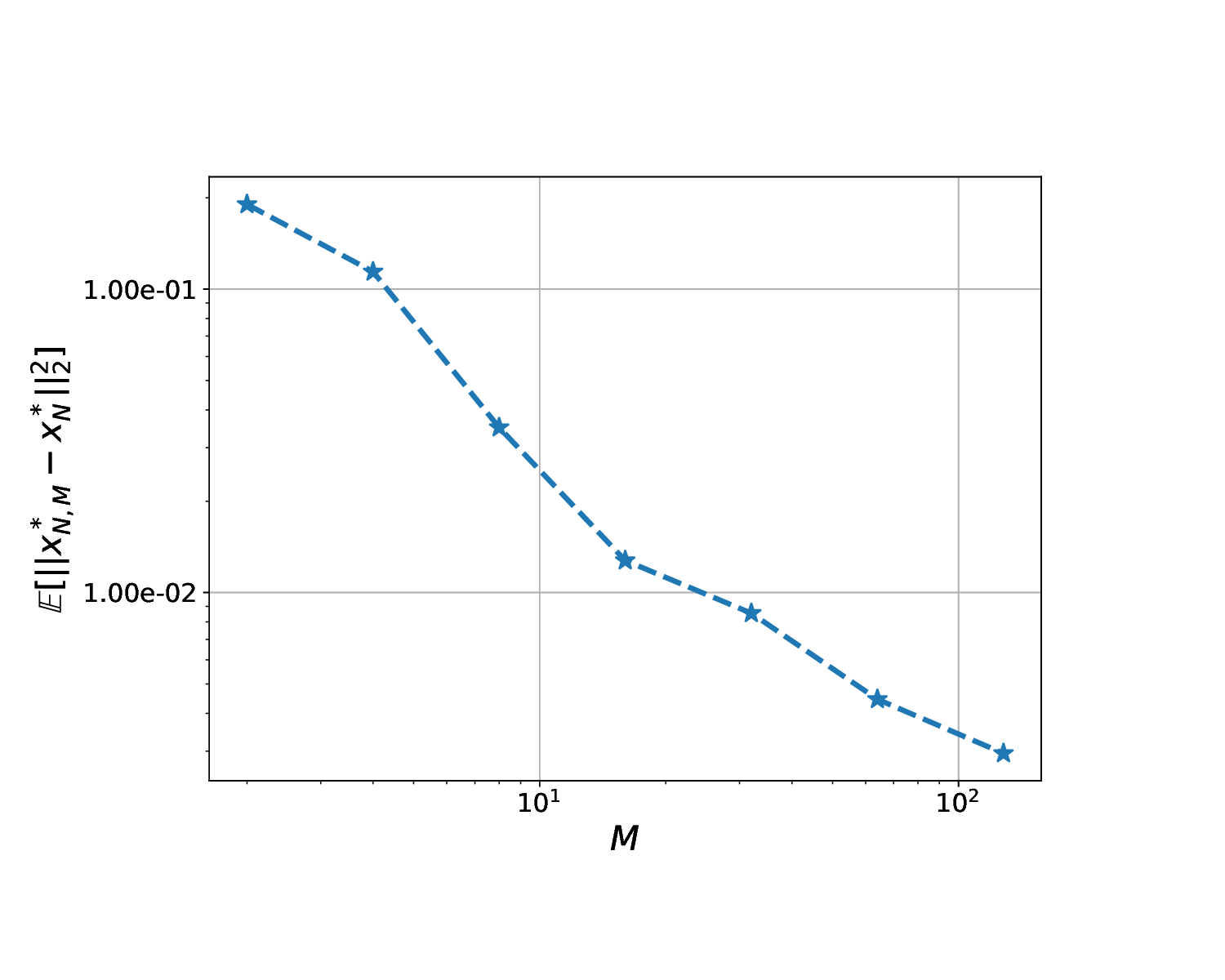}
    \caption{}
    \label{fig:OneEIInner}
    \end{subfigure}
    \caption{Sample average approximation rate of convergence. The Matern kernel is applied with six observations. \subref{fig:OneEIOuter}: convergence with respect to $N$. Rate of regression: -0.92. \subref{fig:OneEIInner}: convergence with respect to $M$ with fixed $N_l = 2^{5}$. Rate of regression: -1.05. 100 realizations are used for both plots.}
\end{figure}

{\subsection{Corollary \ref{cor:value} and Corollary \ref{cor:argmax}\label{sec:NumCoro}}
Figure \ref{fig:MLVal} and \ref{fig:optMLVal} verify the complexity of Corollary \ref{cor:value} and Corollary \ref{cor:argmax} where the rate MSE$^{-1}$ occurs for large cost due to the log penalty. It is possible that the constant of the multilevel value function estimator is smaller than that of the multilevel optimizer estimator, see Figure \ref{fig:1DMSE} and \ref{fig:optMLVal}, but this benefit is not practically significant due to the higher computational cost of the multilevel value function estimator, discussed in Appendix \ref{app:thm1}, and the feature of BO which requires repeatedly solving for new observations.}

\begin{figure}
    \centering
    \begin{subfigure}{0.45\textwidth}
    \includegraphics[width=\linewidth]{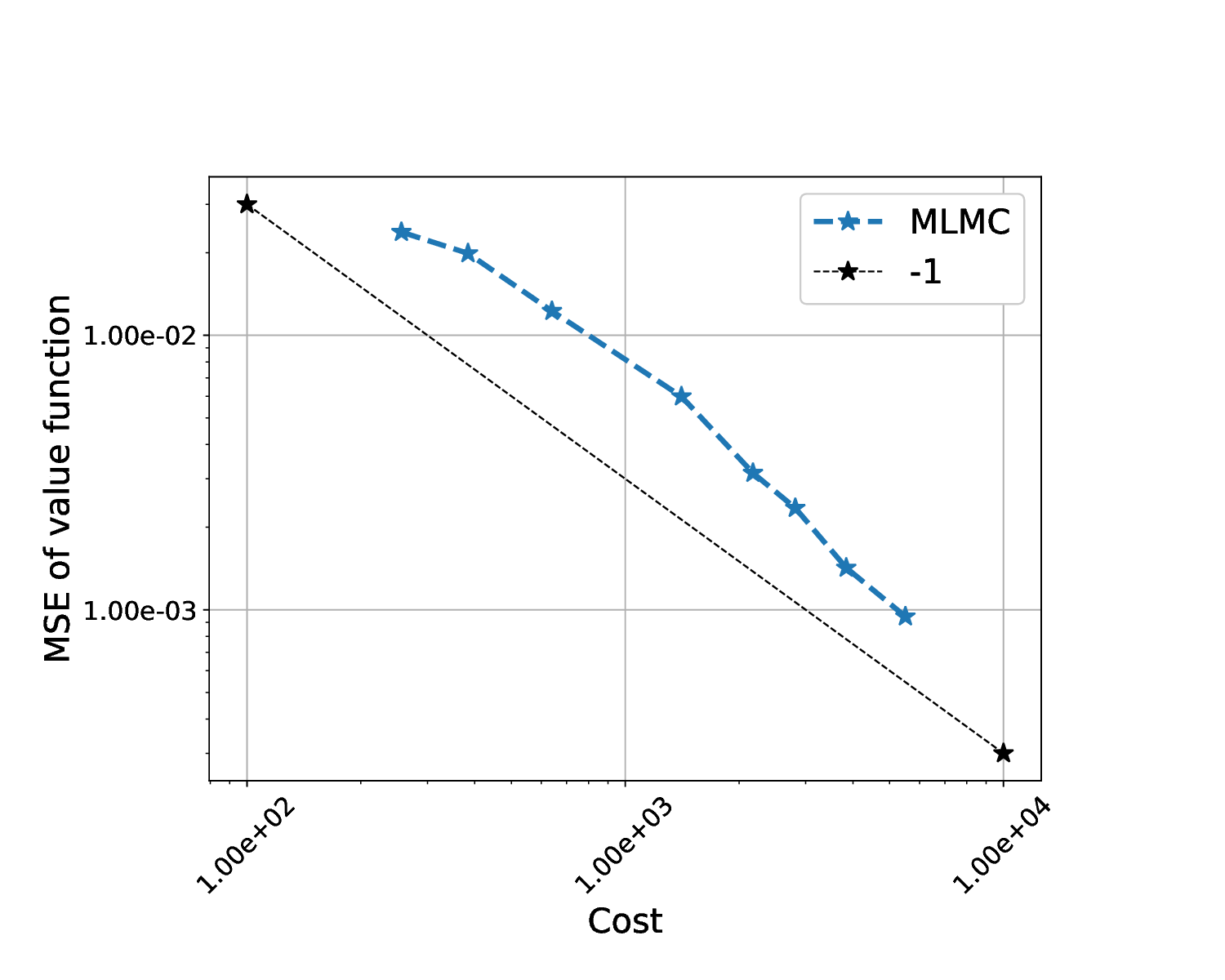}
    \caption{Multilevel value function}
    \label{fig:MLVal}
    \end{subfigure}
    \hfill
    \begin{subfigure}{0.45\textwidth}
    \includegraphics[width=\linewidth]{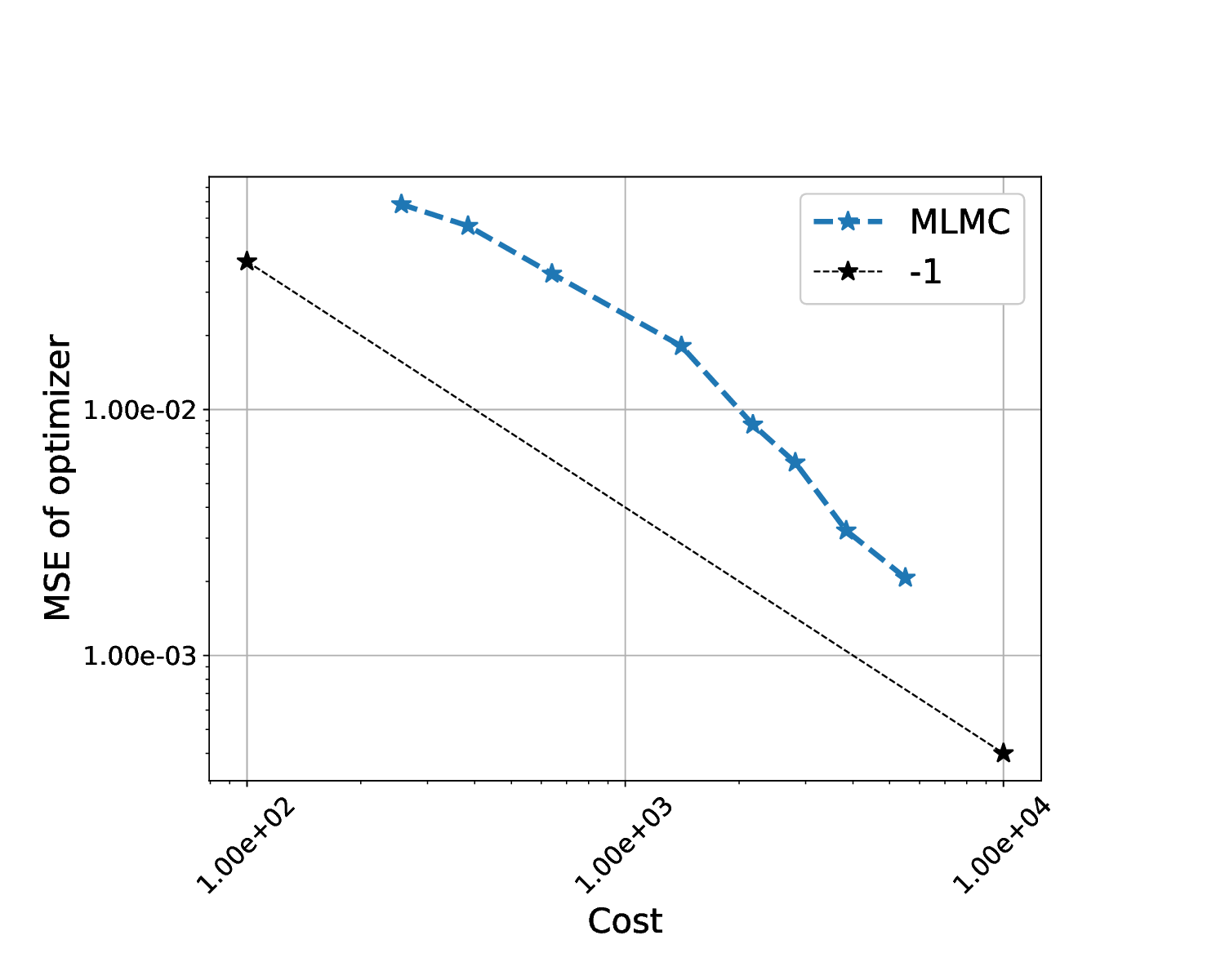}
    \caption{Optimizer of the multilevel value function}
    \label{fig:optMLVal}
    \end{subfigure}
    \caption{Complexity of MLMC value function and the corresponding optimizer. \subref{fig:MLVal}: a fitted slope of -1.08. \subref{fig:optMLVal}: a fitted slope of -1.16. 200 realizations are used for both plots.}
\end{figure}

\subsection{MLMC variance assumptions}\label{sec:NumMLMCRates}
We now numerically verify the multilevel variance assumptions.\\

\noindent
\textbf{Multilevel construction without antithetic approach:}
Figure \ref{fig:MLBeta} shows $\beta = 1$ as we expected.\\

\noindent
\textbf{Multilevel construction with antithetic approach:}
Figure \ref{fig:AntBeta} shows $\beta \approx 1.5$ as we expected.

It is noted that the variance of the increments can be large. 
It is the case where some of the increments can be hundreds of times larger or smaller than others
{\em for small sample sizes}. 
This effect, of course, vanishes asymptotically.
In addition, the constant for the increments is smaller than that of the antithetic increments.

\begin{figure}
    \centering
    \begin{subfigure}{0.45\textwidth}
    \includegraphics[width=\linewidth]{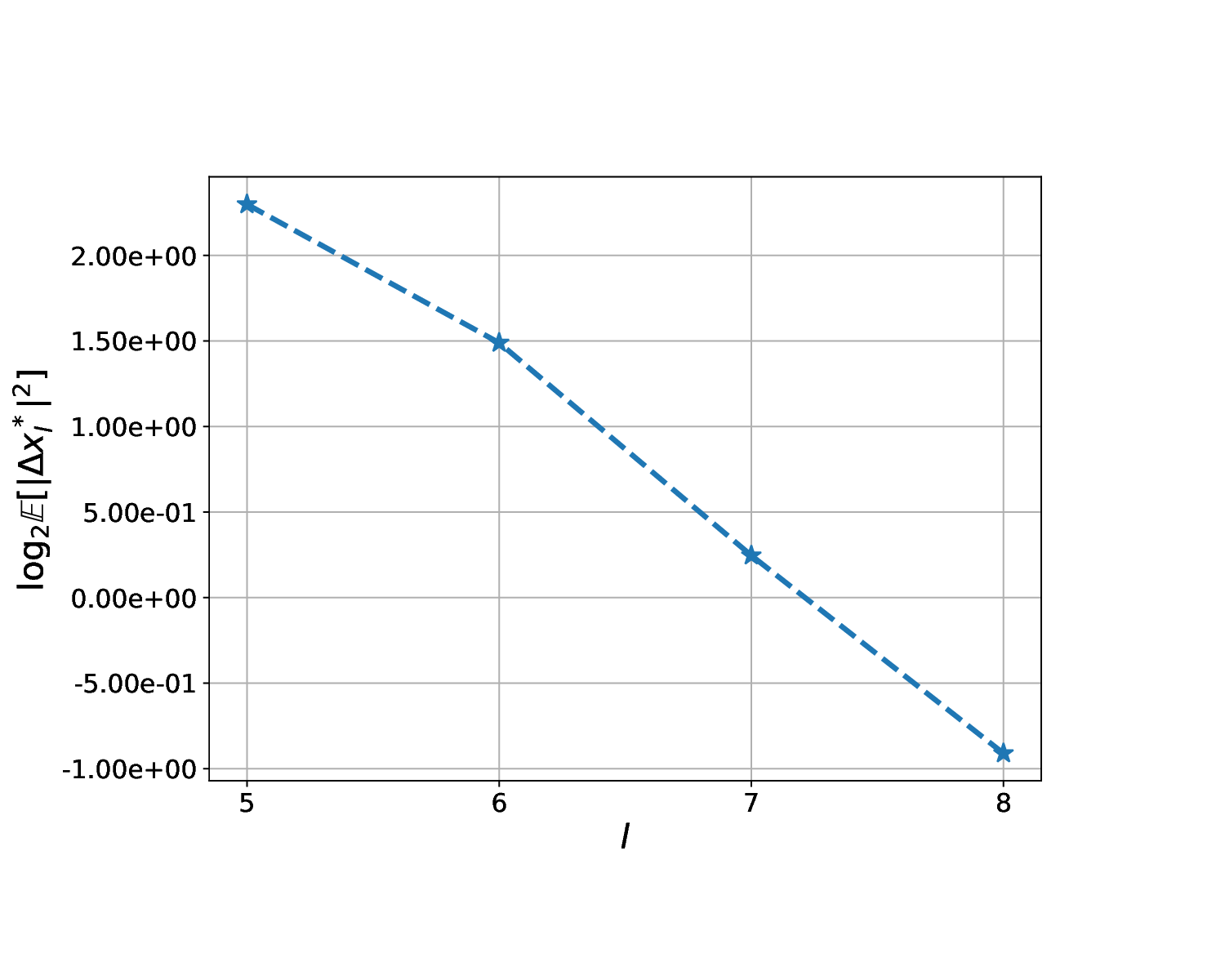}
    \caption{Multilevel increments}
    \label{fig:MLBeta}
    \end{subfigure}
    \hfill
    \begin{subfigure}{0.45\textwidth}
    \includegraphics[width=\linewidth]{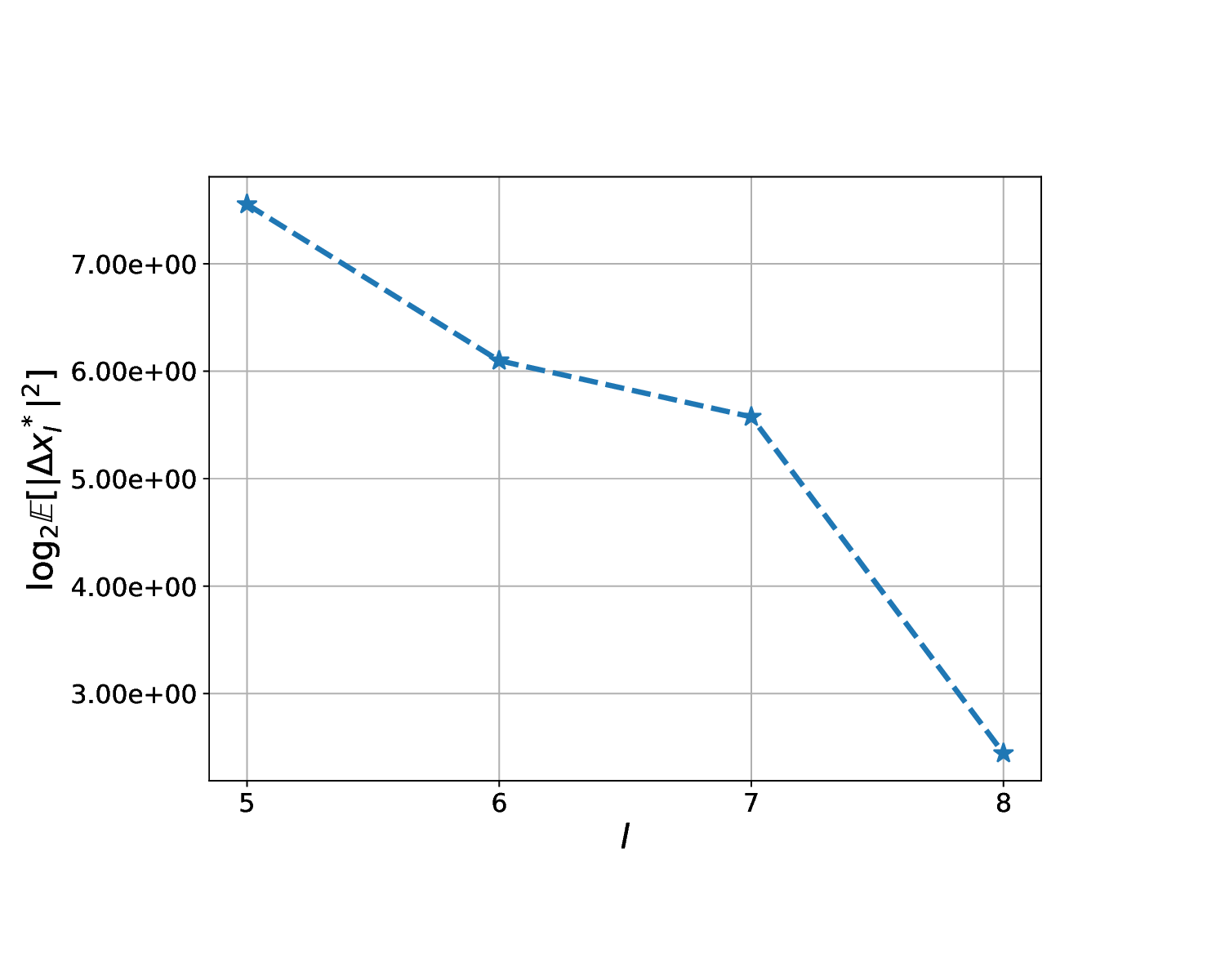}
    \caption{Antithetic multilevel increments}
    \label{fig:AntBeta}
    \end{subfigure}
    \caption{Rate of convergence of increments. \subref{fig:MLBeta}: $\beta$ of the multilevel increments with a fitted slope -1.09. \subref{fig:AntBeta}: $\beta$ of the multilevel antithetic increments with a fitted slope -1.59. A fixed $N_l = 2^{5}$ is applied for all levels, and 50 realisations are used for both plots.}
\end{figure}

\subsection{Results related to Theorem \ref{thm:qFunc}, and Corollaries \ref{cor:value}, \ref{cor:argmax}}
\label{app:thm1}

{Constructing the multilevel value function as Corollary \ref{cor:value} and seeking for optimizer requires solving a higher dimensional joint optimization problem which can be costly than solving multiple low dimensional problems as constructing the multilevel estimator for optimizer directly as Theorem \ref{thm:main} and Remark \ref{rem:Ant}. See Figure \ref{fig:WallTimeVSEps}, which indicates the cost for the multilevel value function is higher than the multilevel optimizer in expectation.
Figure \ref{fig:WallTimeVSObservations} shows computational times of multilevel value function and multilevel optimizer with different numbers of observations. This mimics the whole BO algorithm. A single estimation for the multilevel value function may be more computationally efficient than that for the multilevel optimizer, but in expectation that does not hold. Figure \ref{fig:1DToyqFunc} shows that BO with the multilevel value function estimation even performs worse than BO with MC, but the purpose of introducing the value function estimation is for theoretical analysis and the numerical disadvantage is expected.}

Figure \ref{fig:1DToyqFunc} shows better performance.
(Though use of the antithetic trick may improve performance for value functions.)}

\begin{figure}[htb]
    \centering
        \begin{subfigure}{0.32\textwidth}
    \includegraphics[width=\linewidth]{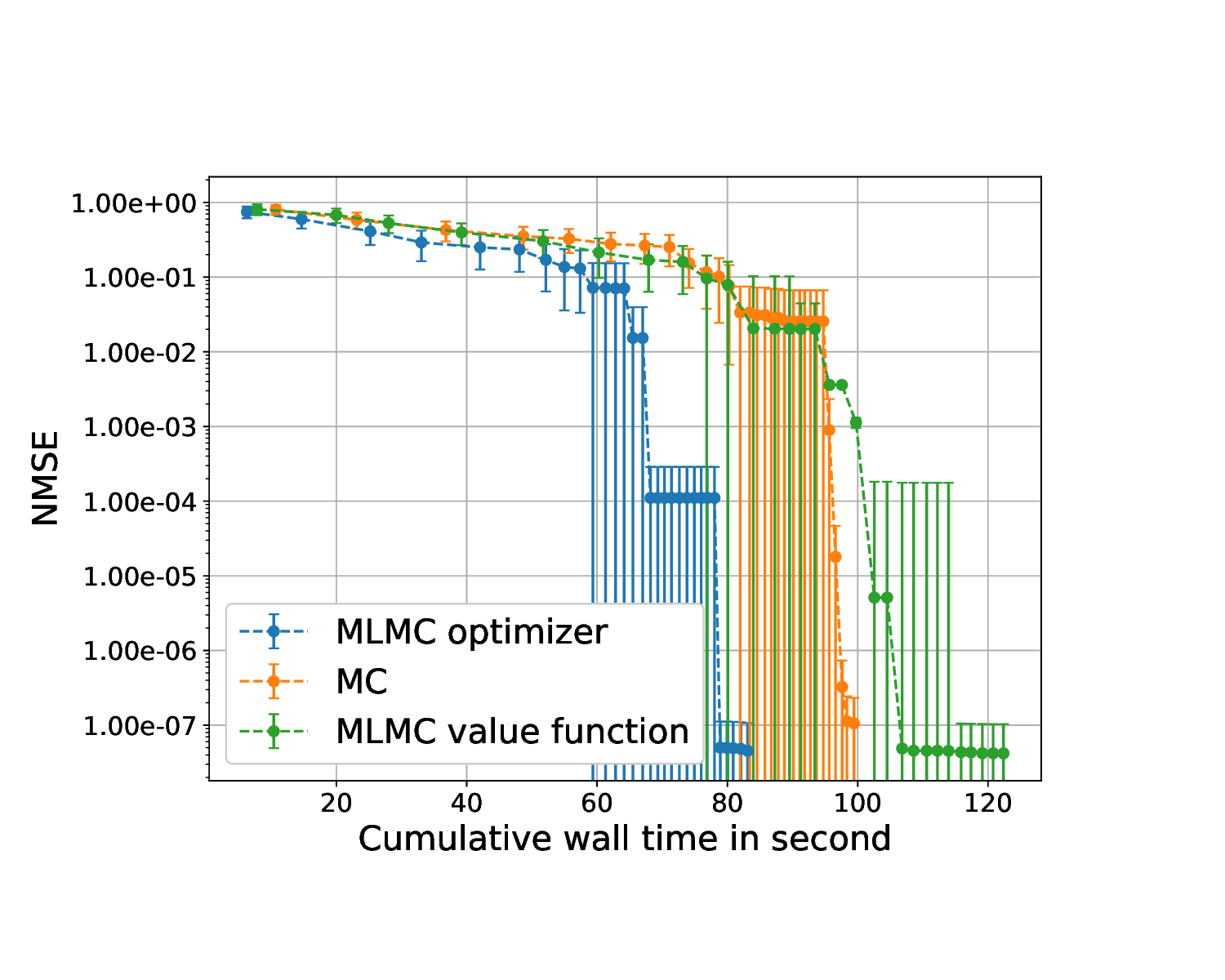}
    \caption{}
    \label{fig:1DToyqFunc}
    \end{subfigure}
    \hfill
    \begin{subfigure}{0.32\textwidth}
    \includegraphics[width=\linewidth]{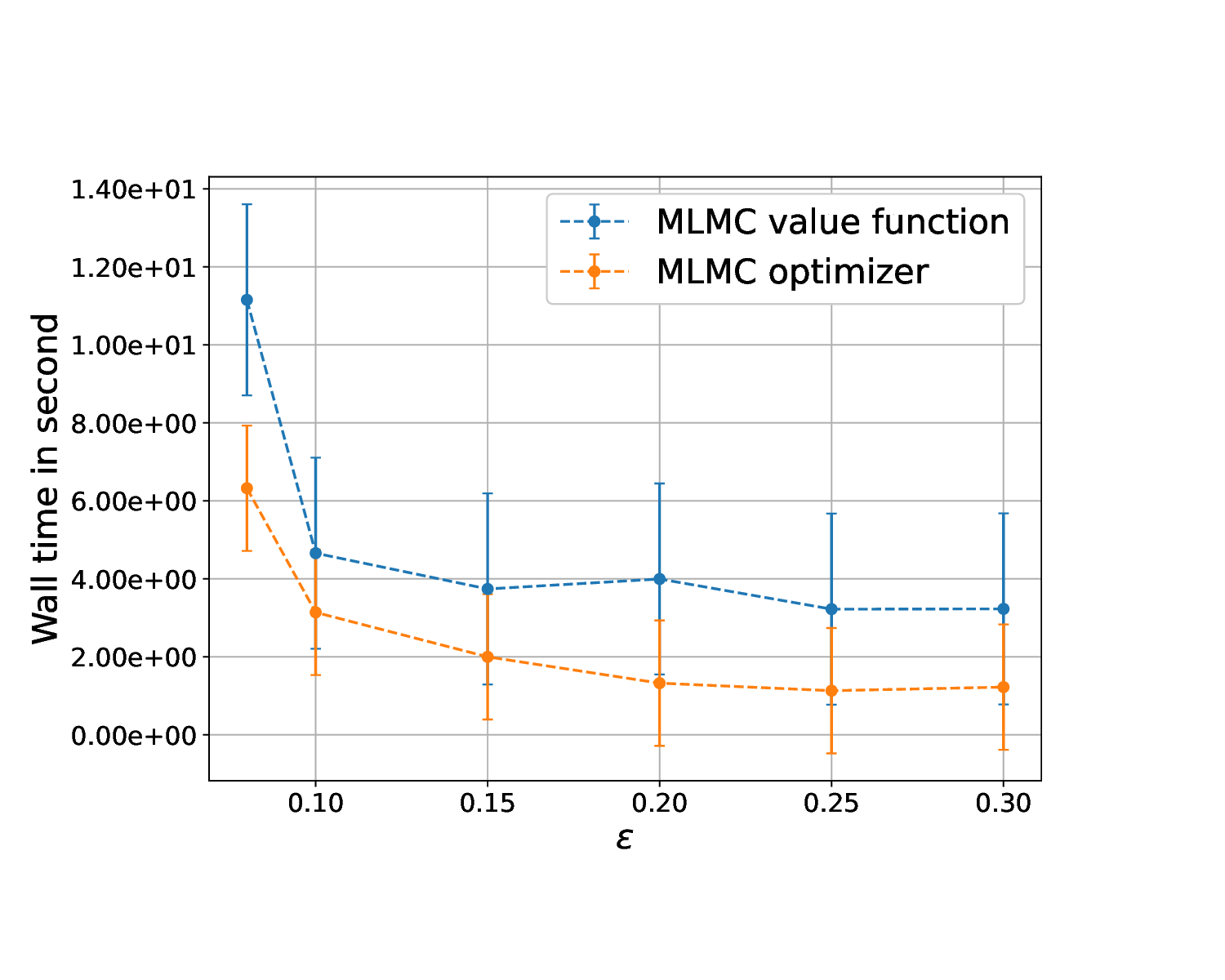}
    \caption{}
    \label{fig:WallTimeVSEps}
    \end{subfigure}
    \hfill
        \begin{subfigure}{0.32\textwidth}
    \includegraphics[width=\linewidth]{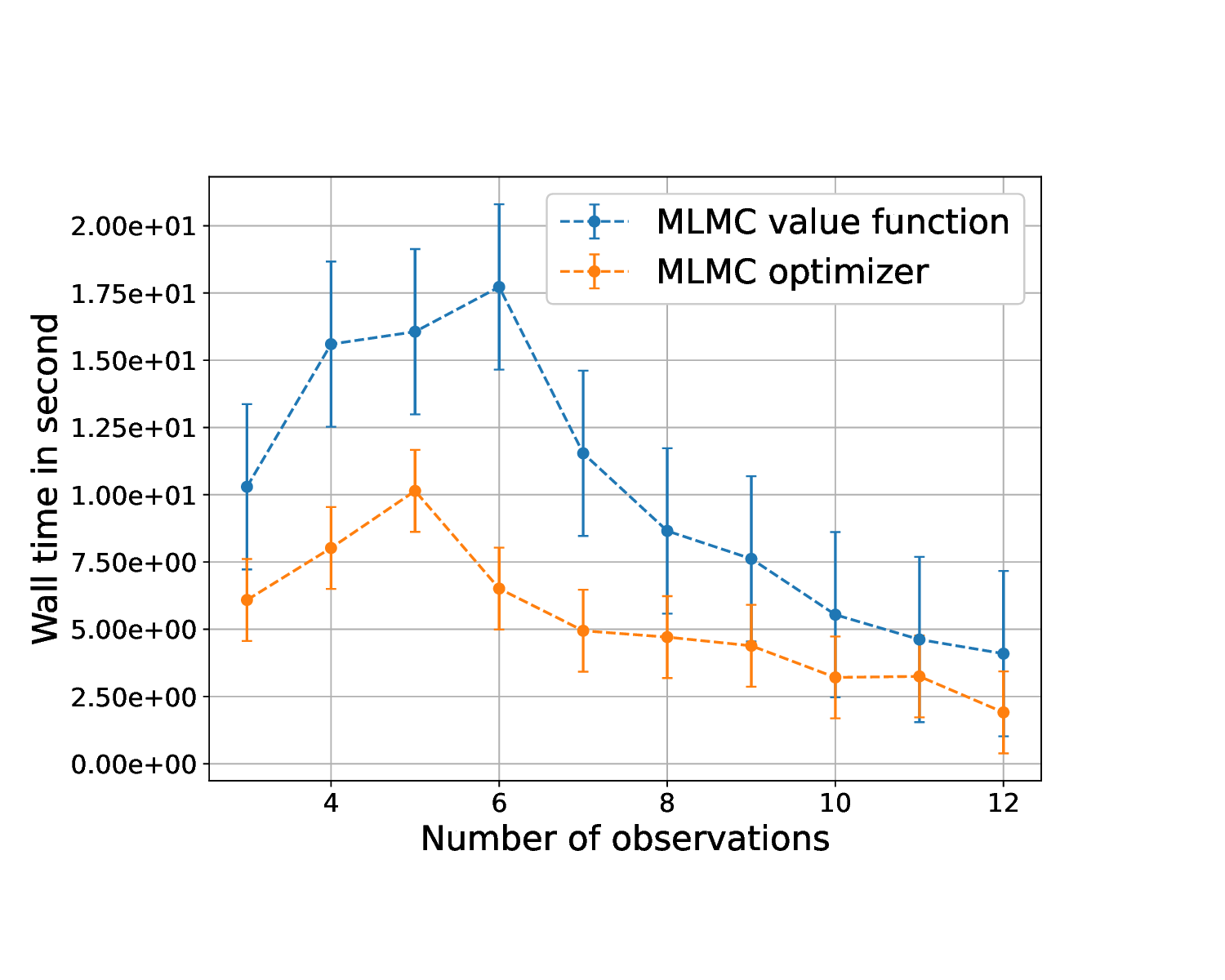}
    \caption{}
    \label{fig:WallTimeVSObservations}
    \end{subfigure}
    \caption{Computational times of multilevel value function and multilevel optimizer. Figure \ref{fig:1DToyqFunc}: convergence of the BO algorithm with respect to the cumulative wall time in seconds, with error bars (computed with 20 realizations). The Matern kernel is applied. The initial BO run starts with 2 observations. Figure \ref{fig:WallTimeVSEps}: time spent by multilevel value function and multilevel optimizer with respect to the required accuracy of the approximation. Figure \ref{fig:WallTimeVSObservations}: time spent by multilevel value function and multilevel optimizer with respect to the different number of observations. The 1D toy example is applied. Each line is computed with 50 realizations}
    \label{fig:TimeCompare}
\end{figure}

\subsection{Further Discussion of Multi-step Look-ahead}\label{app:Compare2OPT}

Figure \ref{fig:Frazier} [reproduced from \cite{wu2019practical}] illustrates the benefit
of the look-ahead acquisition function 2-OPT in comparison to myopic alternatives.
Follow-on work has demonstrated that the value of look-ahead is even
greater for {\em constrained optimization} \cite{lam2017lookahead,frazier2021constraint},
which is a context we intend to explore in the future.
See also \cite{jiang2020efficient} for comparisons between more
multi-step look-ahead candidates and other acquisition functions,
again showing the benefit of being non-myopic. 

We emphasize here that the goal of this paper is to introduce a method that can improve the 
{\em computational efficiency} of multi-step look-ahead acquisition functions, 
either in terms of a wall time for a given accuracy or accuracy for a given wall time.
And that gain is asymptotic, so that {\em the greater the accuracy the greater the gain in efficiency}
(beware the converse that for very low accuracy there may not be any gain).
We leave an exhaustive exploration of this space for the ideal acquisition function to future work.
The recently introduced logEI family are compelling candidates \cite{ament2024unexpected}.

\begin{figure}
    \centering
    \includegraphics[width=.9\linewidth]{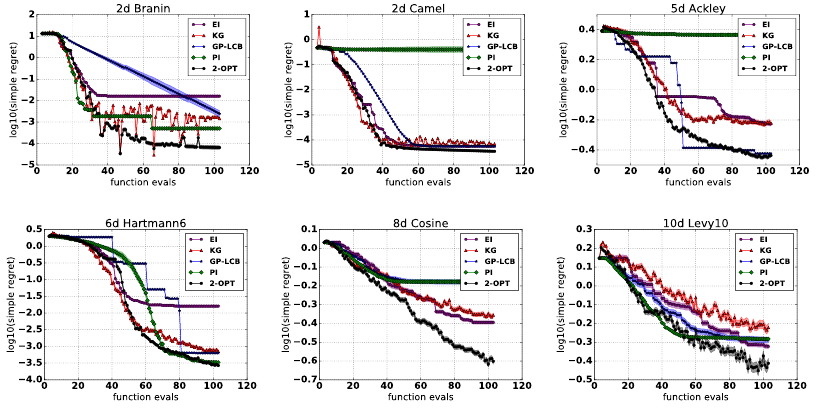}
    \caption{[Reproduced with permission from \cite{wu2019practical}] 
    Synthetic test functions, 90\% quantile of log10 immediate regret compared with common one-step heuristics. 
    2-OPT[, 2-step lookead EI implemented with importance sampling and stochastic gradient descent,]
    provides substantially more robust performance.}
    \label{fig:Frazier}
\end{figure}

\subsection{Results with two-step look-ahead 2-expected improvement \eqref{eq:1qEI}}\label{sec:MLMCReal2EI}
{We now consider the two-step look-ahead 2-EI \eqref{eq:1qEI} where the one- and two-step is a 2-EI. Figure \ref{fig:fullBOMLqEIReal} shows the benefits of using MLMC over MC for 2D Ackley function, 2D DropWave and 2D Shekel function.}

\begin{figure}[htb]
    \centering
    \begin{subfigure}[b]{0.325\textwidth}
    \includegraphics[width=\linewidth]{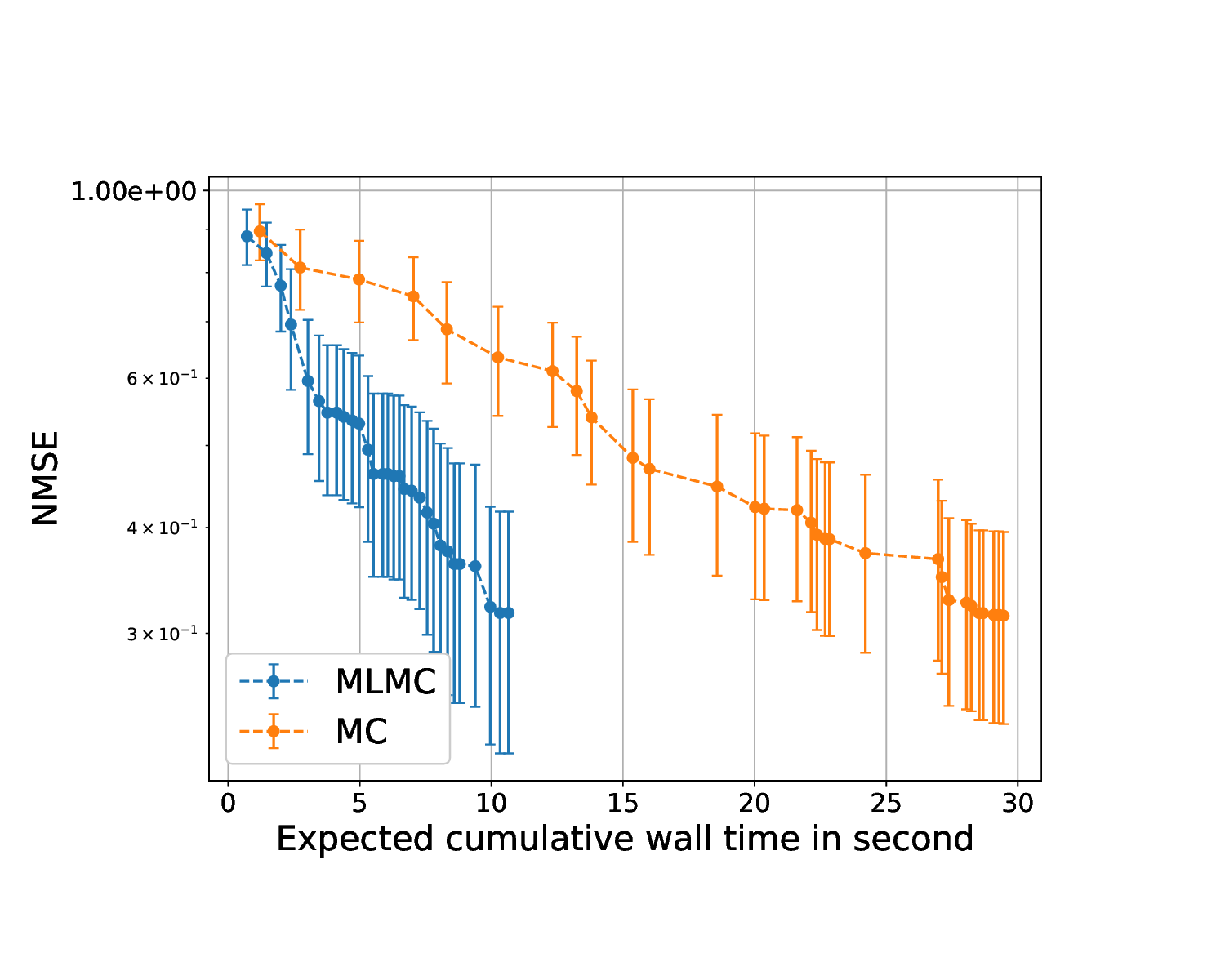}
    \caption{Ackley (d=2)}
    \label{fig:2DAckleyReal}
    \end{subfigure}
     \begin{subfigure}[b]{0.325\textwidth}
    \includegraphics[width=\linewidth]{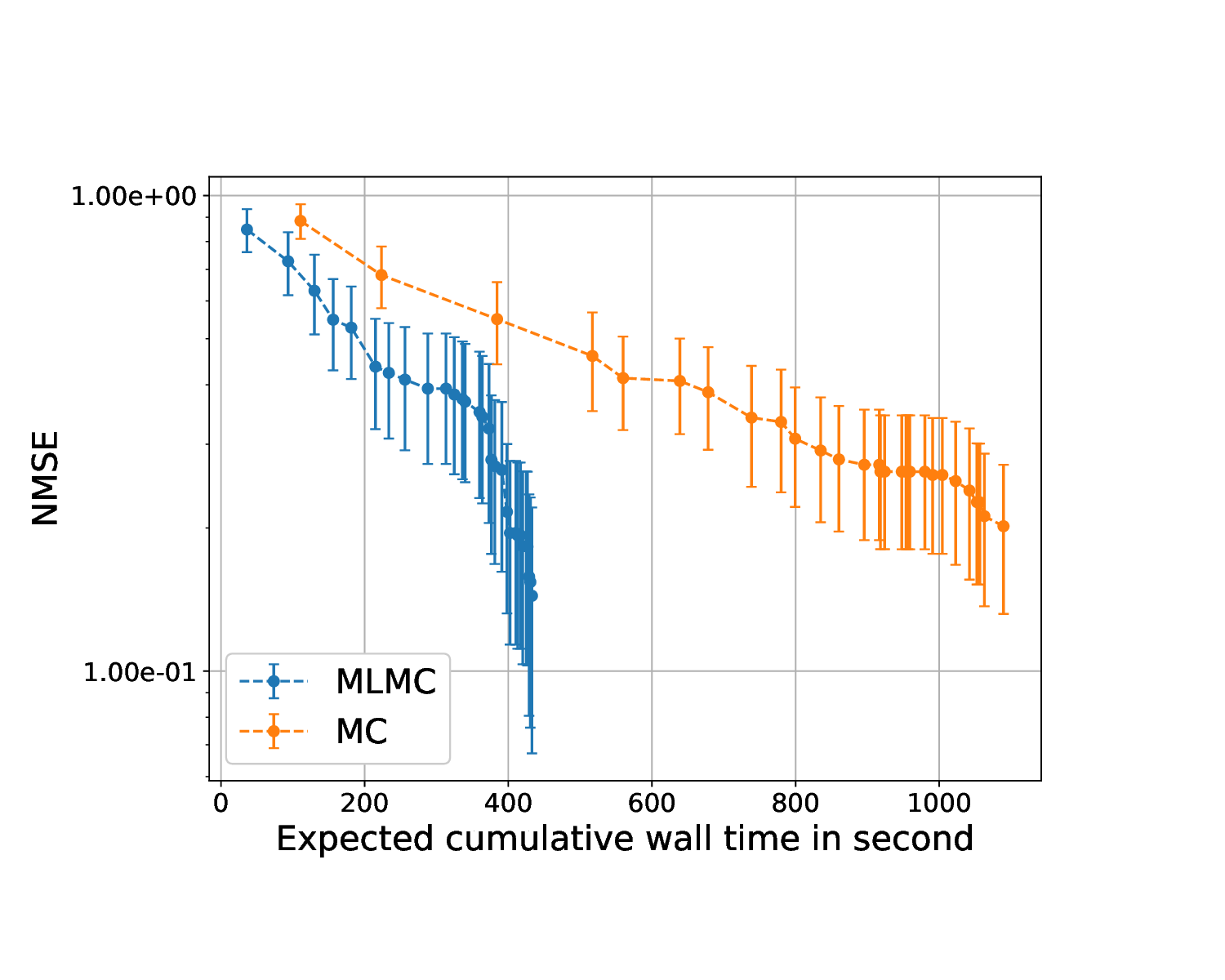}
    \caption{DropWave (d=2)}
    \label{fig:2DDropWave}
    \end{subfigure}
    \begin{subfigure}[b]{0.325\textwidth}
    \includegraphics[width=\linewidth]{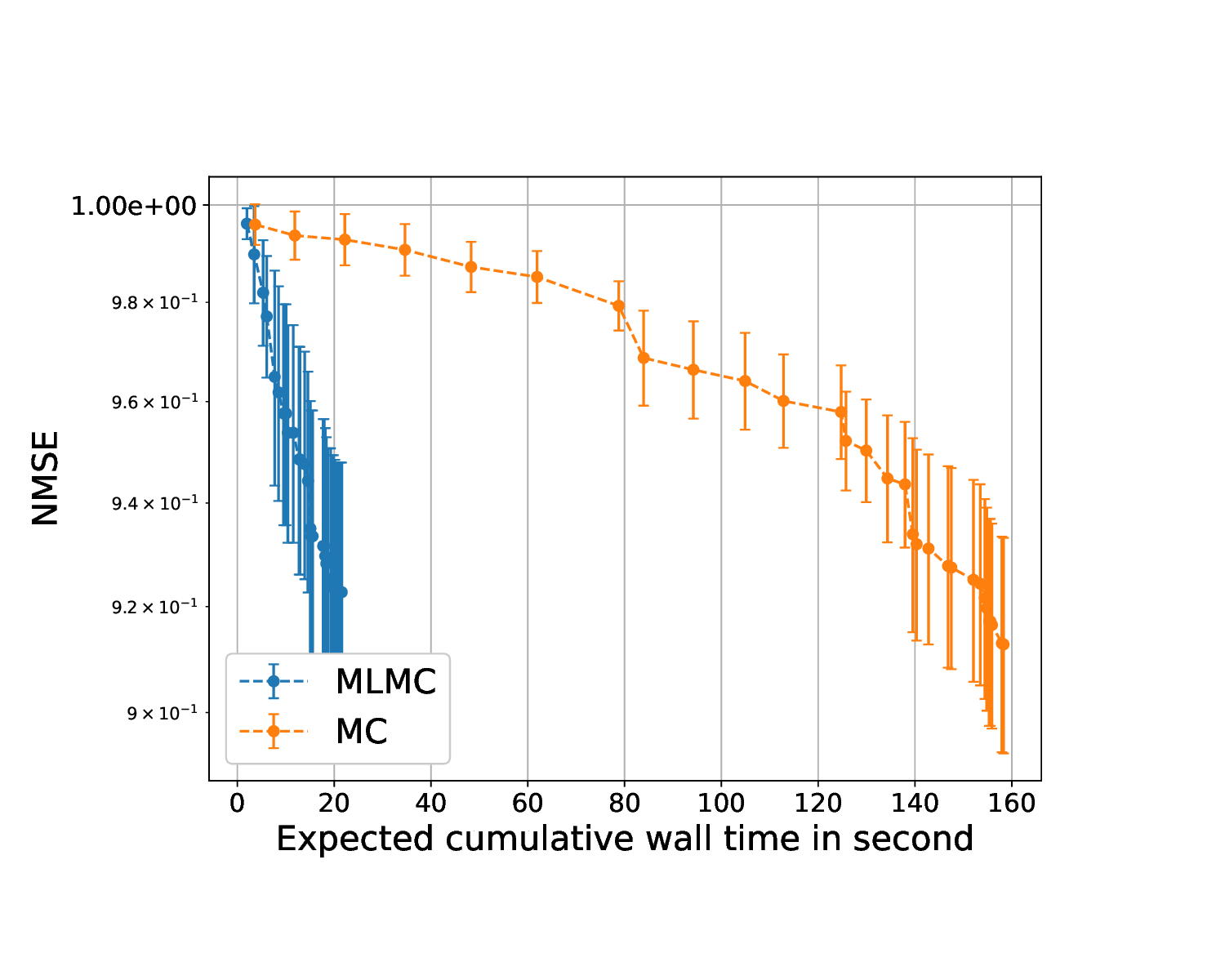}
    \caption{Shekel (d=2)}
    \label{fig:2DShekelReal}
    \end{subfigure}
    \caption{Convergence of the BO algorithm with respect to the cumulative wall time in seconds for 2-step look-ahead 2-EI, with error bars (computed with 20 realizations). The Mat{\'e}rn kernel is applied. The initial BO run starts with $2\times d$ observations.}
    \label{fig:fullBOMLqEIReal}
\end{figure}

\subsection{Matching strategies}\label{sec:Matching}

Two greedy matching strategies are illustrated in Figures \ref{fig:PointMatching} and \ref{fig:BackwardMatching}.

\begin{figure}[htb]
    \centering
    \begin{subfigure}{0.45\textwidth}
    \includegraphics[width=\linewidth]{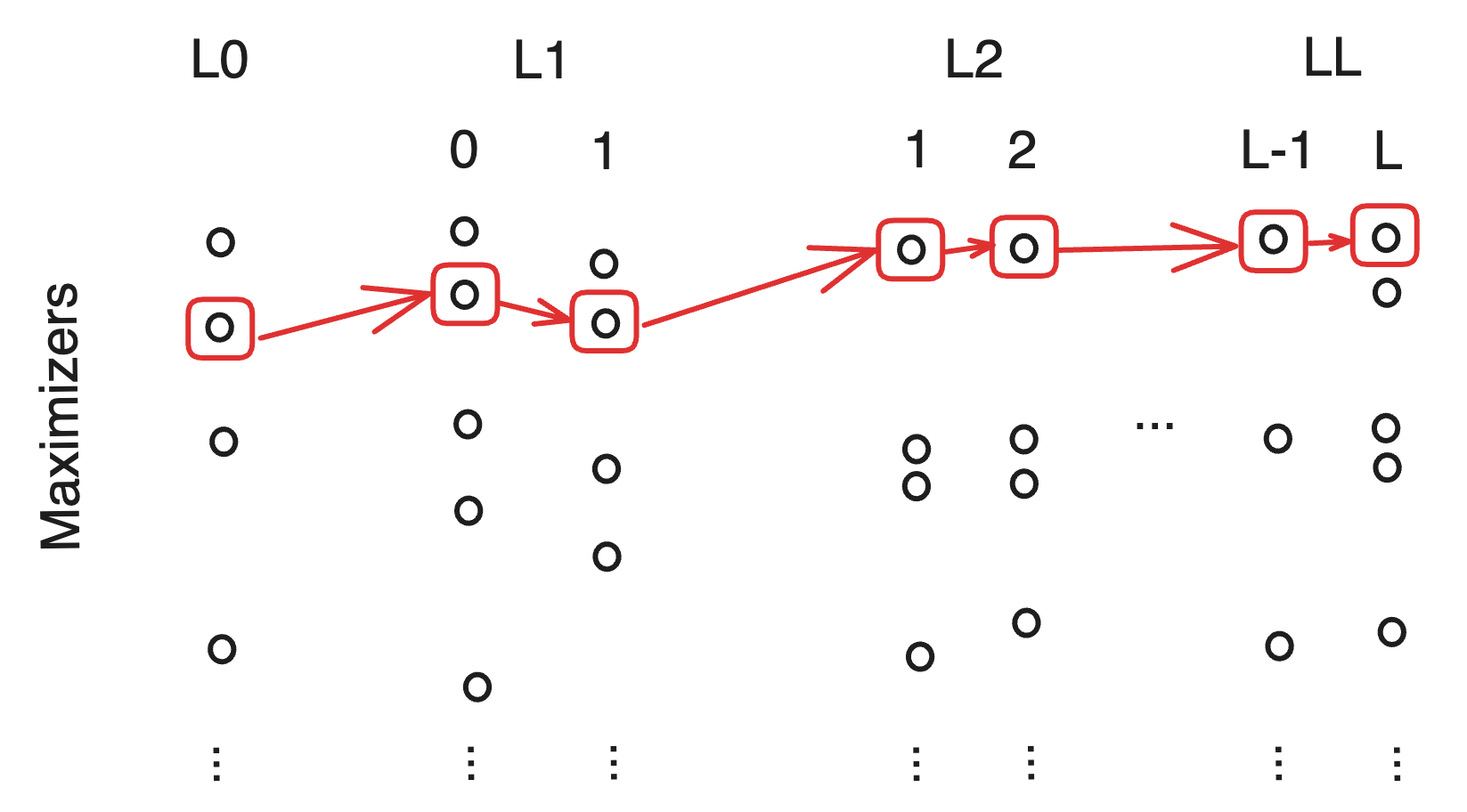}
    \caption{Point matching: all levels align with the best approximation of the global maximizer at level $0$.}
    \label{fig:PointMatching}
    \vspace{.4cm}
    \end{subfigure}
    \hfill
    \begin{subfigure}{0.45\textwidth}
    \includegraphics[width=\linewidth]{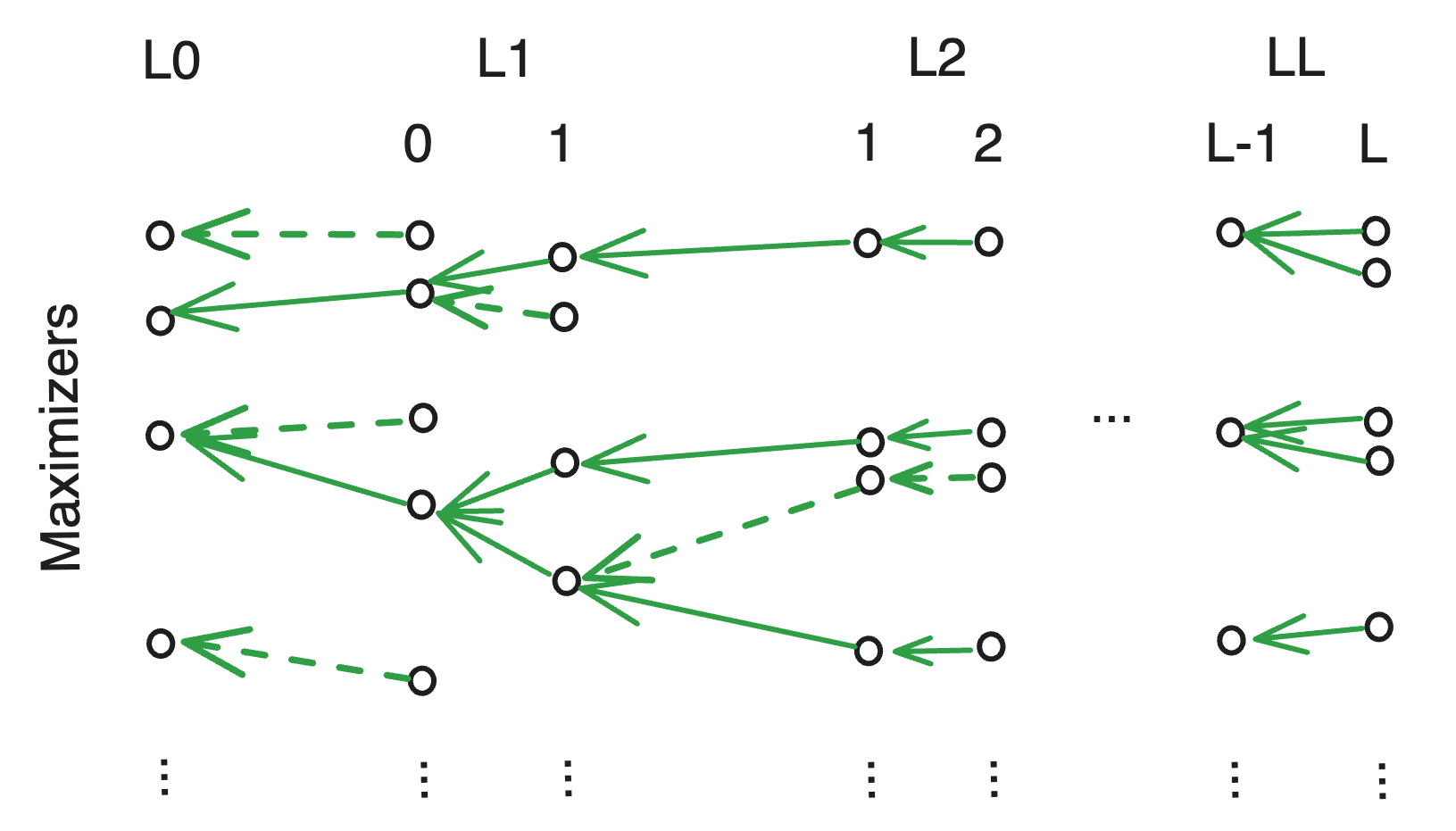}
    \caption{Backward matching: a set of local maximizers are found at each subsequent level and pruned backwards 
    (dashed arrows are pruned branches).}
    \label{fig:BackwardMatching}
    \end{subfigure}
    \caption{Greedy matching strategies. One candidate of the type \eqref{eq:mlmcest} is considered in (a),
    whereas (b)
    results in an estimator of the type \eqref{eq:mlmcest} 
    for each local optimizer found at level $L$ (or $0$, if pruned in the opposite direction).}
\end{figure}

\end{document}